\documentclass[12pt]{amsart}

\usepackage[utf8]{inputenc} 
\usepackage[T1]{fontenc}    
\usepackage[linktocpage=true]{hyperref}       
\usepackage{url}            
\usepackage{booktabs}       
\usepackage{fullpage}
\setlength{\marginparwidth}{2cm}
\usepackage{amsthm,amssymb,amsfonts,mathtools}
\usepackage[noabbrev,capitalize,nameinlink]{cleveref}
\crefname{assumption}{Assumption}{Assumptions}
\crefname{informaltheorem}{Informal Theorem}{Informal Theorems}
\crefname{metatheorem}{Meta Theorem}{Meta Theorems}

\usepackage{todonotes}
\usepackage{caption}
\usepackage{subcaption}
\usepackage{algorithm}
\usepackage{algpseudocode}
\usepackage{multirow}
\usepackage{graphicx}
\usepackage{eqparbox}
\usepackage{comment}

\usepackage[
natbib,
style=numeric,
url=false,
doi=false,
sortcites,
backend=biber
]{biblatex}
\addbibresource{bibliography.bib}

\usepackage[shortlabels]{enumitem}


\newcommand{\abs}[1]{\left\lvert #1 \right\rvert}
\newcommand{\norm}[1]{\left\lVert #1 \right\rVert}
\DeclareMathOperator{\supp}{supp}

\newcommand{\red}{\normalcolor}

\newcommand{\nc}{\normalcolor}

\newtheorem{theorem}{Theorem}
\newtheorem{informaltheorem}{Informal Theorem}
\newtheorem{metatheorem}{Meta Theorem}
\newtheorem{proposition}{Proposition}
\newtheorem{lemma}{Lemma}
\newtheorem{corollary}{Corollary}
\theoremstyle{definition}
\newtheorem{definition}{Definition}
\newtheorem{assumption}{Assumption}

\newtheorem{example}{Example}
\newtheorem{remark}{Remark}

\DeclareMathOperator{\Per}{Per}
\DeclareMathOperator{\ProbPer}{ProbPer}

\DeclareMathOperator{\ProbTV}{ProbTV}
\DeclareMathOperator{\ProbJ}{ProbJ}
\DeclareMathOperator{\Risk}{R}
\DeclareMathOperator{\SRisk}{S}
\DeclareMathOperator{\ProbSRisk}{ProbS}
\DeclareMathOperator{\ProbRisk}{ProbR}
\DeclareMathOperator*{\esssup}{ess\,sup}

\DeclareMathOperator{\ProbAcc}{ProbAcc}
\newcommand{\de}{\,\mathrm{d}}
\newcommand{\X}{\mathcal{X}}
\newcommand{\Y}{\mathcal{Y}}
\newcommand{\A}{\mathcal{A}}
\newcommand{\B}{\mathfrak{B}}
\newcommand{\R}{\mathbb{R}}
\newcommand{\N}{\mathbb{N}}
\renewcommand{\P}{\mathcal{P}}

\newcommand{\eps}{\varepsilon}
\newcommand{\Exp}[2]{\mathbb{E}_{#1}\left[#2\right]}
\newcommand{\Prob}[2]{\mathbb{P}_{#1}\left[#2\right]}
\newcommand{\Set}[1]{{\left\lbrace#1\right\rbrace}}
\newcommand{\one}{\mathbf{1}}
\newcommand{\U}{\mathcal{U}}
\newcommand{\wsto}{\overset{\ast}{\rightharpoonup}}
\newcommand{\st}{\,:\,}

\DeclareMathOperator{\argmin}{arg min}
\newcommand{\veps}{\epsilon}

\renewcommand{\epsilon}{\varepsilon}
\newcommand{\distr}{\mathfrak{m}}

\title{It begins with a boundary: A geometric view on probabilistically robust learning}

\author{Leon Bungert}
\address{Institute of Mathematics \& Center for Artifical Intelligence and Data Science (CAIDAS), University of Würzburg, Germany}
\email{\href{mailto:leon.bungert@uni-wuerzburg.de}{leon.bungert@uni-wuerzburg.de}}

\author{Nicol\'as {Garc\'ia Trillos}}
\address{Department of Statistics, University of Wisconsin-Madison, US}
\email{\href{mailto:garciatrillo@wisc.edu}{garciatrillo@wisc.edu}}

\author{Matt Jacobs}
\address{Department of Mathematics, University of California Santa Barbara, US}
\email{\href{mailto:majaco@ucsb.edu}{majaco@ucsb.edu}}

\author{Daniel McKenzie}
\address{Department of Applied Mathematics and Statistics, Colorado School of Mines, US}
\email{\href{mailto:dmckenzie@mines.edu}{dmckenzie@mines.edu}}

\author{\DJ or\dj e Nikoli\'c}
\address{Department of Mathematics, University of California Santa Barbara, US}
\email{\href{mailto:nikolic@math.ucsb.edu}{nikolic@math.ucsb.edu}}

\author{Qingsong Wang}
\address{Department of Mathematics, University of Utah, US}
\email{\href{mailto:qswang@math.utah.edu}{qswang@math.utah.edu}}

\begin{document}

\begin{abstract}
Although deep neural networks have achieved super-human performance on many classification tasks, they often exhibit a worrying lack of robustness towards adversarially generated examples. Thus, considerable effort has been invested into reformulating standard Risk Minimization (RM) into an adversarially robust framework. 
    Recently, attention has shifted towards approaches which interpolate between the robustness offered by adversarial training and the higher clean accuracy and faster training times of RM.
    In this paper, we take a fresh and geometric view on one such method---Probabilistically Robust Learning (PRL) \cite{robey2022probabilistically}.
    We propose a mathematical framework for understanding PRL, which allows us to identify geometric pathologies in its original formulation and to introduce a family of probabilistic nonlocal perimeter functionals to rectify them.
    We prove existence of solutions to the original and modified problems using novel relaxation methods and also study properties, as well as local limits, of the introduced perimeters. We also clarify, through a suitable $\Gamma$-convergence analysis, the way in which the original and modified PRL models interpolate between risk minimization and adversarial training. 
    \nc 
\end{abstract}

\maketitle

\tableofcontents

\section{Introduction}
\label{sec:Introduction}
The fragility of deep neural network (DNN) based classifiers in the face of adversarial examples \cite{goodfellow2014explaining,chen2017zoo,qin2019imperceptible,cai2021zeroth} and distributional shifts \cite{quinonero2008dataset,hendrycks2021many} is by now nearly as familiar as their success stories. In light of this, a multitude of works (see \Cref{sec:related_work}) propose replacing standard Risk Minimization (RM) \cite{vapnik1999nature} with a more robust alternative, with adversarial training (AT) \cite{madry2017towards,goodfellow2014explaining} being a leading example. Unfortunately, there is no free lunch: robust classifiers frequently exhibit degraded performance on clean data and significantly longer training times \cite{tsipras2018robustness}. 
Consequently, identifying frameworks which balance performance and robustness is of pressing interest to the Machine Learning (ML) community, and over the past several years a few such frameworks have been proposed \cite{zhang2019theoretically,wang2020enresnet,robey2022probabilistically}. From the theoretical perspective, it is crucial that the mechanisms by which such frameworks balance these competing aims are understood.  

In this paper we revisit the Probabilistically Robust Learning (PRL) framework introduced in \cite{robey2022probabilistically} and discuss it, analyze it, and extend it using a geometric perspective. This perspective reveals certain subtle and paradoxical aspect of PRL:
It can have solutions which are very pathological classifiers, see \cref{fig:spike_solution}. Building on this observation one can conclude that solutions of PRL do not necessarily interpolate between standard risk minimization and AT. This interpolating property was one of the main theoretical motivations for designing the PRL model in~\cite{robey2022probabilistically}; see \cref{sec:Interpolation} for an extensive discussion on this. \nc 
\begin{figure}[ht]
    \begin{subfigure}{0.48\textwidth}
        \fbox{%
        \begin{tikzpicture}
        \node[anchor=west] at (0,0) {\includegraphics[width=0.9\textwidth,trim=0pt 80pt 0pt 0pt,clip]{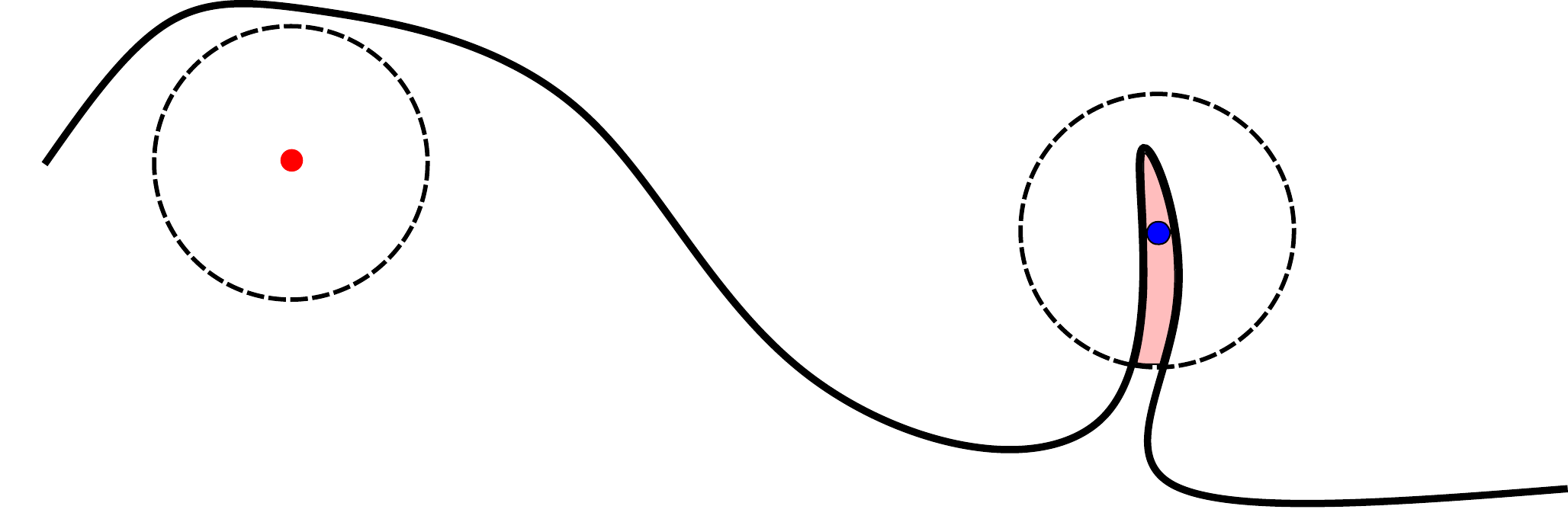}};
        \node[anchor=west] at (20pt,-20pt) {\nc$A\triangleq\text{class red}$};
        \node[anchor=west] at (110pt,35pt) {\nc$A^c\triangleq\text{class blue}$};
        \end{tikzpicture}}
        \caption{This pathological classifier minimizes the original PRL risk \emph{but not} the modified one.
        \label{fig:flaw}}
    \end{subfigure}
    \hfill%
    \begin{subfigure}{0.48\textwidth}
        \fbox{%
        \begin{tikzpicture}
        \node[anchor=west] at (0,0) {        \includegraphics[width=0.9\textwidth,trim=0pt 80pt 0pt 0pt,clip]{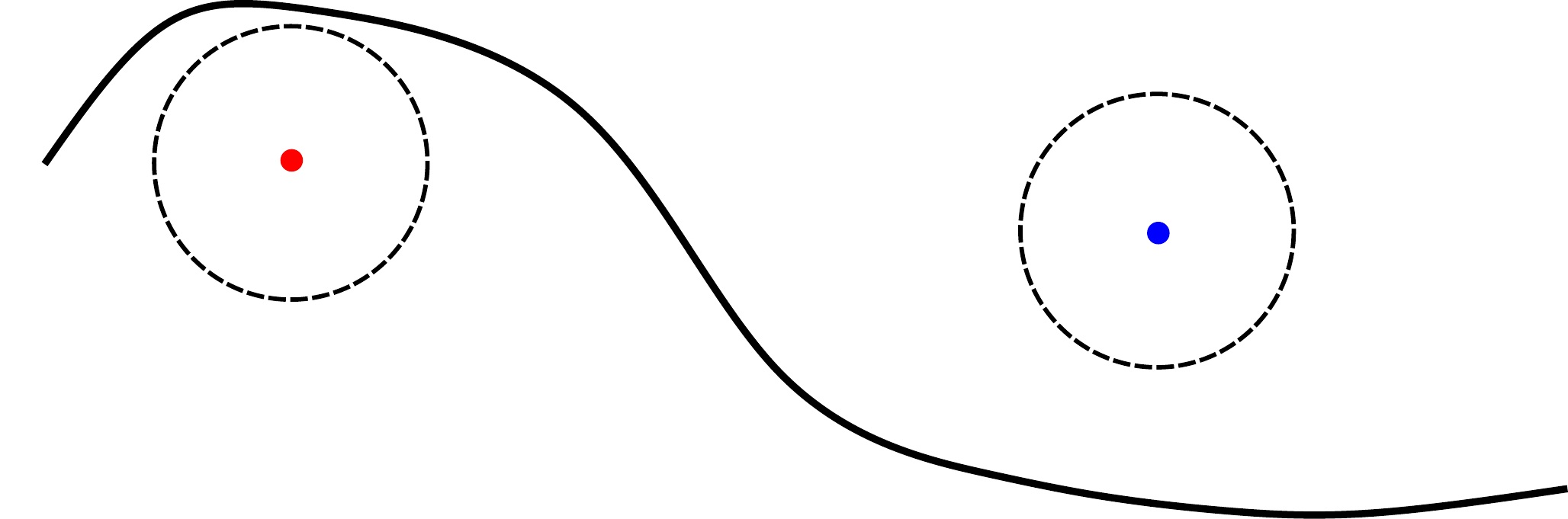}};
        \node[anchor=west] at (20pt,-20pt) {\nc$A\triangleq\text{class red}$};
        \node[anchor=west] at (110pt,35pt) {\nc$A^c\triangleq\text{class blue}$};
        \end{tikzpicture}}
        \caption{This classifier minimizes the original PRL risk \emph{and} the modified one.
        \label{fig:no_flaw}}
    \end{subfigure}
    \caption{Even for the simple task of classifying two data points one can easily construct a pathological solution (\cref{fig:flaw}) of PRL, observing that all but the small red set of perturbations of the \emph{misclassified} blue point are correctly classified as blue. 
    Both classifiers depicted in \cref{fig:flaw,fig:no_flaw} have zero PRL loss.}
    \label{fig:spike_solution}
\end{figure}

Fortunately, our geometric perspective suggests a natural remedy which leads to an interpretation of the modified PRL as regularized RM, where a new notion of nonlocal length (or perimeter) of decision boundaries acts as a regularizer; this notion of perimeter (see \labelcref{eq:ProbPerPsi} for its definition) is of interest in its own right and in \cref{sec:Perimeters} we will discuss some of its properties. The interpretation of PRL as perimeter-regularized RM leads us to further generalizations and allows us to provide a novel view of the Conditional Value at Risk (CVaR) relaxation of PRL proposed in \cite{robey2022probabilistically}. We provide conditions that guarantee the existence of solutions to the optimization problems determining our models as well as the original PRL models, and clarify, through rigorous analysis using the notion of $\Gamma$-convergence, the sense in which the original PRL and the modified PRL models interpolate between adversarial training and standard risk minimization. \red Our existence proofs exploit very interesting structural properties of our functionals that allow us to circumvent the apparent incompatibility between the topologies under which our functionals are lower semicontinuous and under which the compactness of minimizing sequences can be guaranteed. \nc

\subsection{From empirical risk minimization to robustness}

Given an input space $\X$, an output space $\Y$, a probability measure $\mu\in\P(\X\times\Y)$, a loss function $\ell :\Y\times\Y\to\R$, and a hypothesis class $\mathcal{H}$ (i.e., a class of measurable functions from $\X$ into $\Y$), the standard risk minimization problem is
\begin{align}\tag{RM}
    \inf_{h \in \mathcal{H}} \Exp{(x,y) \sim \mu}{\ell(h(x),y)}.
    \label{eq:ERM}
\end{align}
To train classifiers which are robust against adversarial attacks, \cite{goodfellow2014explaining,madry2017towards} suggested adversarial training:
\begin{align}\tag{AT}
    \inf_{h \in \mathcal{H}} \Exp{(x,y) \sim \mu}{\sup_{x' \in B_\eps(x)}\ell(h(x'),y)}.
    \label{eq:AT}
\end{align}
In the above, $\X$ is assumed to have the structure of a metric space and $B_\eps(x)$ for $\eps>0$ denotes the (open or closed) ball of radius~$\eps$ around~$x$, although for convenience we will consider open balls whenever is needed.

The recent work \cite{robey2022probabilistically} offered an alternative to adversarial training that attempts to reduce the (in general) large trade-off between accuracy and robustness inherent in \labelcref{eq:AT}; see \cite{tsipras2018robustness,robey2022probabilistically} for discussion. 
Instead of requiring classifiers to be robust to \emph{all} possible attacks around a point $x$---as enforced through the supremum in \labelcref{eq:AT}---one may consider a less stringent notion of robustness, only requiring classifiers to be robust to $100\times (1-p)\%$ of attacks drawn from a certain distribution $\distr_x$ centered at $x$. 
For this, the authors replace the supremum in \labelcref{eq:AT} by the so-called $p$-$\esssup$ operator for some value $p\in[0,1)$.
To define this operator, consider a probability distribution $\distr$ and a function $f$ and let
\begin{align*}
    p\text{-}\esssup_{x' \sim \distr} f(x') 
    :=
    \inf\Set{t\in\R\st\Prob{x'\sim\distr}{f(x')> t}\leq p}.
\end{align*}
In the mathematical finance or economics literature the $p$-$\esssup$ operator is better known as the value at risk (VaR) of a random variable at level $p$.
It is the smallest value $t\in\R$ such that the probability that a randomly chosen point $x'\sim\distr$ satisfies $f(x')>t$ is smaller than $p$. This notion reduces to the usual essential supremum of $f$ with respect to $\distr$ if $p=0$, which explains the name $p$-$\esssup$. In \cite{robey2022probabilistically}, it was thus suggested to replace~\labelcref{eq:AT} with the \textit{probabilistically robust learning} problem 
\begin{align}\tag{PRL}
    \inf_{h \in \mathcal{H}} \Exp{(x,y) \sim \mu}{p\text{-}\esssup_{x' \sim \distr_{x}}\ell(h(x'),y)},
    \label{eq:Pappas}
\end{align}
where $\{\distr_{x}\}_{x\in \X}$ is a suitable family of probability distributions, one for each $x$ in the data space, interpreted as user-chosen distributions ``centered'' around the different $x$. The prototypical example to keep in mind for $\X=\R^d$ is the uniform distribution over the $\eps$-ball around $x$, i.e., $\distr_{x} := \operatorname{Unif}(B_\eps(x))$, which is particularly relevant when dealing with adversarial attacks on image classifiers. We will provide further theoretical discussion on the choice of the \red measures $\distr_x$ in \cref{sec:Interpolation}.\nc

\subsection{Modifying PRL for binary classification}

To understand \labelcref{eq:Pappas} better, we focus first (and mostly) on the binary classification problem (i.e., $\Y=\{0,1\}$) using indicator functions of admissible sets (i.e., $\mathcal{H} := \Set{\one_A \st A \in \A}$). For now, one can think of $\A\subset 2^{\X}$ as a suitable $\sigma$-algebra which fits to the problem.
Note that we identify the two expressions $\one_A(x) = \one_{x\in A}$.
We focus on the standard  $0$-$1$ loss $\ell(\tilde y,y)=\one_{\tilde y\neq y}$, which equals one if $y\neq\tilde y$ and zero otherwise.
In this scenario, the standard risk minimization problem \labelcref{eq:ERM} reduces to 
\begin{align}\label{eq:ERM_binary}
    \inf_{A \in \A} 
    \Big\lbrace
    \Risk_\mathrm{std}(A):=
    \Exp{(x,y) \sim \mu}{y\one_{x\in A^c} + (1-y)\one_{x\in A}}
    \Big\rbrace,
\end{align}
and we refer to its minimizers as \textit{Bayes classifiers} and to its minimum value as \textit{Bayes risk}.

Similarly, \labelcref{eq:AT} can be rewritten as
\begin{align}\label{eq:AT_binary}
    \inf_{A \in \A} 
    \bigg\lbrace
    \Risk_\mathrm{adv}(A):=
    \Exp{(x,y) \sim \mu}{y\one_{x\in\left(A^c\right)^{\oplus\eps}} + (1-y)\one_{x\in A^{\oplus\eps}}}
    \bigg\rbrace,
\end{align}
where for a set $A\in\A$ its fattening by $\eps$-balls is defined as 
$A^{\oplus\eps}:=\bigcup_{x\in  A}B_\eps(x)$. As for \labelcref{eq:Pappas}, it reduces to
\begin{equation}
    \begin{split}
    \inf_{A \in \A} 
    \bigg\lbrace
    \Risk_\mathrm{prob}(A)
    :=
    \Exp{(x,y) \sim \mu}{y\one_{\Prob{{x'}\sim\distr_x}{{x'}\in A^c}>p} + 
    (1-y)
    \one_{\Prob{{x'}\sim\distr_x}{{x'}\in A}>p}}
    \bigg\rbrace,
    \end{split}
    \label{eq:Pappas_binary}
\end{equation}
where, in comparison to \labelcref{eq:AT_binary}, the fattenings $A^{\oplus\eps}$ and $(A^c)^{\oplus\eps}$ are replaced by the set of all points $x\in\X$ for which the probability that a neighboring point sampled from $\distr_x$ lies inside $A$, or $A^c$, respectively, is larger than $p$. From this definition one can see that PRL may lead to counter-intuitive classifiers like the ones observed in \cref{fig:flaw}. Indeed, taking $\distr_x$ to be the uniform measure over $B_\veps(x)$, we see that the red-shaded spiky region therein has such a small volume that the blue point with label $y=0$ lies in the set $\{x\in\X\st\Prob{x'\sim\distr_x}{[x'\in A]\leq p}\}$.
Hence, the second term in \labelcref{eq:Pappas_binary} is zero and the spike adds zero cost to the overall energy.
At an abstract level, the issue with \labelcref{eq:Pappas_binary} is that sets $\{x\in\X\st\Prob{x'\sim\distr_x}{[x'\in A^c]> p}\}$ and $\{x\in\X\st\Prob{x'\sim\distr_x}{[x'\in A^c]> p}\}$ are not fattenings, i.e.,  supersets, of $A^c$ and $A$.
Speaking the language of classification, it means that points which are incorrectly classified incur zero loss if most of their perturbations are correctly classified.

In order to obtain a model which does not exhibit this counter-intuitive behavior we modify \labelcref{eq:Pappas_binary} slightly, thereby obtaining a model which has a geometric regularization interpretation and can be embedded in a rich class of models that allows to interpolate between risk minimization and adversarial training. \red To motivate our modified PRL risk, we first observe that the original adversarial training risk $\Risk_{\mathrm{adv}}$ admits the following decomposition:
\[     \Risk_\mathrm{adv}(A)=    \Risk_\mathrm{std}(A) + \Per_{\mathrm{adv}} (A)  ,\]
where $\Per_{\mathrm{adv}} (A)$ is the non-negative functional
\[ \Per_{\mathrm{adv}} (A):=  \int_{A^c}  \sup_{x' \in B_\veps(x)} \one _{A}(x') \de \rho_0(x)
        +
        \int_{A}  \sup_{x' \in B_\veps(x)} \one _{A^c}(x') \de \rho_1(x) \]
and $\rho_i(\bullet) := \mu(\bullet\times\{i\})$, i.e., the weighted conditional distribution of the points with label $i\in\{0,1\}$. We refer the interested reader to \cite{bungert2023geometry}, where a detailed discussion on this decomposition, results on existence of minimizers for adversarial training, and a discussion on the interpretation of the functional $\Per_{\mathrm{adv}}$ as a nonlocal \textit{perimeter} are presented. The two terms in the decomposition of $\Risk_{\mathrm{adv}}$ can be interpreted as follows: 1) the term $\Risk_\mathrm{std}(A)$ is the standard risk and captures the contribution to the overall adversarial risk by misclassified points; 2) $ \Per_{\mathrm{adv}}(A)$ captures the contribution of the points that, although classified correctly by the binary classifier $\one_A$, are within distance $\veps$ from the decision boundary and thus can be perturbed by the adversary to make the classifier's output differ from the correct label. Motivated by the above decomposition for $ \Risk_\mathrm{adv}$, by substituting the suprema in the definition of $\Per_{\mathrm{adv}}$ with a softer penalty, the following modified PRL model becomes natural:
\begin{equation}\label{eq:modified_binary}
    \inf_{A \in \A} 
    \bigg\lbrace
    \ProbRisk(A)
    := \Risk_\mathrm{std}(A) + \ProbPer(A)
    \bigg\rbrace,
\end{equation}
where $\ProbPer$ is the non-negative functional 
\begin{equation}
\label{eq:ProbPer}
  \ProbPer(A):=  \int_{A^c} \one_{\Prob{{x'} \sim \distr_{x}}{{x'}\in A} > p } \de \rho_0(x)
        +
        \int_{A} \one_{\Prob{{x'} \sim \distr_{x}}{{x'}\in A^c} > p } \de \rho_1(x).
        \end{equation}
 The notation that we have chosen for the functional $\ProbPer$ suggests a perimeter interpretation, and we will justify this choice shortly. With this interpretation, it becomes clear that the modified PRL model \labelcref{eq:modified_binary} has the structure of a regularized risk minimization problem where the regularization term penalizes the size of the boundary of a set. As discussed earlier, this interpretation also holds for AT.
        
A simple computation (see \cref{prop:rewrite_max} below) allows us to rewrite $\ProbRisk$ as
\begin{equation}\label{eq:OtherFormProbR}
    \ProbRisk(A)
    =
    \Exp{(x,y) \sim \mu}{y\one_{x \in A^c \vee \Prob{{x'}\sim\distr_x}{{x'}\in A^c}>p} + 
    (1-y)
    \one_{x \in A \vee \Prob{{x'}\sim\distr_x}{{x'}\in A}>p}}.
\end{equation}
From this rewriting it is apparent that a point $x$ with label $y$ contributes to the modified PRL risk if it is either non-robust in a probabilistic sense \textit{or} if it is misclassified. For instance, if $y=0$, the loss is one when  $\Prob{x'\sim\distr_x}{x'\in A}>p$ or when $x\in A$. This second condition is not captured by the original PRL formulation.

\nc

\subsection{A larger class of PRL models for binary classification}

For theoretical and computational reasons that we will soon discuss, it is convenient to generalize problems \labelcref{eq:Pappas_binary} and \labelcref{eq:modified_binary} further. \red Precisely, given an arbitrary function $\Psi: [0,1] \rightarrow [0,1]$ we define 
  \begin{align}
  \label{eq:RiskPsi}
        \Risk_\Psi(A) := 
        \int_\X \Psi\left(\Prob{x'\sim\distr_x}{x'\in A}\right)
        \de\rho_0(x)
        +
        \int_\X \Psi\left(\Prob{x'\sim\distr_x}{x'\in A^c}\right)
        \de\rho_1(x),
    \end{align}
\nc
as well as 
\begin{align}\label{eq:ProbRiskPsi}
    \ProbRisk_\Psi(A) := 
    \Risk_\mathrm{std}(A) + \ProbPer_\Psi(A),
\end{align}
where 
\begin{align}
    \begin{split}\label{eq:ProbPerPsi}
        \ProbPer_{\Psi}(A)
        := 
        \int_{A^c} \Psi\left(\Prob{{x'} \sim \distr_{x}}{{x'}\in A}\right) \de \rho_0(x)
        +
        \int_{A}\Psi\left(\Prob{{x'} \sim \distr_{x}}{{x'}\in A^c}\right) \de \rho_1(x).
    \end{split}
\end{align}

The functionals $\Risk_\Psi$ and $\ProbRisk_\Psi$ are indeed generalizations of $\Risk_{\mathrm{prob}}$ and $\ProbRisk$, respectively. This can be seen by taking $\Psi$ to be $\Psi(t):= \one_{t >p}$, as can be easily verified. Another important choice for $\Psi$ that we will discuss extensively in the sequel is the function $\Psi_{p}(t):= \min\{ \frac{t}{p} , 1 \}$, which is the smallest concave function larger than $\one_{t >p}$. Later, in \Cref{prop:rewrite_cvar_Psi} and in \cref{sec:general_models}, we will discuss the connection between the CVaR relaxation of PRL introduced for computational convenience in \cite{robey2022probabilistically} (see more discussion in \cref{sec:computational}) and the optimization of the risks $\Risk_\Psi$ and $\ProbRisk_\Psi$ for the choice $\Psi=\Psi_p$. From a theoretical perspective, we will establish several structural properties satisfied by the functionals $\Risk_\Psi$ and $\ProbRisk_\Psi$ when $\Psi$ is concave. In particular, while we are not able to prove existence of minimizers (in a large class of sets $\A$) for the original binary problem \labelcref{eq:Pappas_binary} nor for our modification \labelcref{eq:modified_binary}, we will be able to do it for the minimization of $\Risk_\Psi$ and $\ProbRisk_\Psi$ when $\Psi$ is concave, e.g., when $\Psi= \Psi_p$. Without concavity the problems are degenerate because of the hard thresholding imposed by the constraints $\Prob{x'\sim\distr_x}{x'\in A^{(c)}}$ and we will only be able to establish existence of solutions within the enlarged family of ``soft'' classifiers. A detailed discussion on this is presented in \cref{sec:ExistenceSection} below.

\nc

\subsection{Informal statements of main results}
\label{sec:main_results}

\red 
This paper investigates three main questions related to PRL. First, we study the existence of minimizers to PRL and to modified PRL. Second, we explore different geometric interpretations of the functionals $\ProbPer_{\Psi}$. In particular, we establish two types of results that characterize these functionals as \textit{perimeters}, thereby further suggesting how the models studied in this paper for robust training of classifiers are closely related to geometric regularization methods that explicitly penalize the size of decision boundaries of classifiers. Finally, we present rigorous analysis that describes, in detail, the way in which PRL and modified PRL interpolate between adversarial training and standard risk minimization. 
\nc

\subsubsection{Existence of minimizers of PRL models}
\label{sec:ExistInformal}
In \cite{robey2022probabilistically}, an explicit solution to \labelcref{eq:Pappas} is presented for the case where the hypothesis class consists of linear classifiers 
and $\mu$ is a mixture  of normal distributions. To the best of our knowledge, existence of solutions to \labelcref{eq:Pappas} for general data distributions or hypothesis classes---even for the binary problem \labelcref{eq:Pappas_binary}---has not been proved so far.  

Our first main results assert the existence of minimizers of the risks $\ProbRisk_\Psi$ and $\Risk_\Psi$ for suitable choices of $\Psi$. We impose no restrictions on the data distribution but consider families of classifiers satisfying certain closure relations discussed precisely in later sections. 

\begin{informaltheorem}[Existence of hard classifiers]\label{formalthm:existence_hard}
    For every concave and non-increasing function $\Psi:[0,1]\to[0,1]$ there exists a solution to the problem
    \begin{align*}
        \min_{A\subset\X}\ProbRisk_\Psi(A).
    \end{align*}
\end{informaltheorem}

\red 

\begin{informaltheorem}[Existence of hard classifiers]\label{formalthm:existence_hardOrignal}
    For every concave and non-increasing function $\Psi:[0,1]\to[0,1]$ there exists a solution to the problem
    \begin{align*}
        \min_{A\subset\X}\Risk_\Psi(A).
    \end{align*}
\end{informaltheorem}
Note that these existence results do not cover the problem \labelcref{eq:modified_binary} since the function $\Psi(t) = \one_{t>p}$ is not concave. They do include, however, the CVaR model from \cref{prop:rewrite_cvar_Psi} below since this model is realized by the choice $\Psi_p(t):=\min\left\lbrace t/p,1\right\rbrace$, which is a concave and non-decreasing function. \red  We believe that our proof strategy, which takes advantage of some interesting structural properties of the functionals $\ProbRisk_\Psi$ and $\Risk_\Psi$, is of interest in its own right and is captured by our \cref{thm:meta} in \cref{sec:existence_proofs} below. Precise statements of our results can be found in \cref{sec:hard_classifiers}.

\nc 

\medskip

We also present existence results for more general, non-concave functions $\Psi$, including the choice $\Psi(t)=\one_{t>p}$ giving rise to the original PRL model and its modified version introduced here. These results, however, apply when we optimize over ``soft'' classifiers instead of ``hard'' ones, meaning we optimize over functions $u:\X\to[0,1]$ in a suitably closed hypothesis class $\mathcal{H}$ instead of characteristic functions $u = \one_A$ of sets $A$. To make sense of our statements, we first need to replace the functional $\ProbPer_\Psi(A)$ of a set $A$ by a suitable regularization functional defined according to
\begin{align}\label{eq:J_Psi}
    \begin{split}
    \ProbJ_\Psi(u) 
    &:=
    \int_\X \left(1-u(x)\right)\Psi\left(\Exp{{x'}\sim\distr_x}{u({x'})}\right)
    \de\rho_0(x)
    \\
    &\hspace{4em}
    +
    \int_\X u(x)\Psi\left(\Exp{{x'}\sim\distr_x}{1-u({x'})}\right)
    \de\rho_1(x).
    \end{split}
\end{align}
A straightforward computation reveals that $\ProbJ_\Psi(\one_A) = \ProbPer_\Psi(A)$.

\begin{informaltheorem}[Existence of soft classifiers]\label{formalthm:existence_soft}
    For every lower semicontinuous function $\Psi:[0,1]\to[0,1]$ and every hypothesis class $\mathcal{H}$ which is closed in a suitable sense there exists a solution to the problem
    \begin{align*}
        \min_{u \in \mathcal{H}}\Exp{(x,y)\sim\mu}{\abs{u(x)-y}} + \ProbJ_\Psi(u).
    \end{align*}
\end{informaltheorem}

\red 
\begin{informaltheorem}[Existence of soft classifiers]\label{formalthm:existence_soft_2}
    For every lower semicontinuous function $\Psi:[0,1]\to[0,1]$ and every hypothesis class $\mathcal{H}$ which is closed in a suitable sense there exists a solution to the problem
    \begin{align*}
        \min_{u \in \mathcal{H}} \int_\X \Psi\left(\Exp{{x'}\sim\distr_x}{u({x'})}\right)
    \de\rho_0(x)
    +
    \int_\X \Psi\left(\Exp{{x'}\sim\distr_x}{1-u({x'})}\right)
    \de\rho_1(x).
    \end{align*}
\end{informaltheorem}

Note that the objective functions in Informal Theorems 3 and 4 coincide with $\ProbRisk_\Psi$ and $\Risk_\Psi$, respectively, when $u$ is an indicator function. When $\Psi$ is non-decreasing and concave, and $\mathcal{H}$ is suitably chosen, these optimization problems are continuous and exact relaxations of $\inf_{A \in \A}\ProbRisk_\Psi$ and $\inf_{A \in \A}\Risk_\Psi$, respectively. Other hypothesis classes $\mathcal{H}$ of interest that are admissible in \cref{formalthm:existence_soft,formalthm:existence_soft_2} include functions which are parameterized by neural networks, as discussed in \cref{ex:hypothesis_classes}. 
\nc 

\subsubsection{Interpretation of $\ProbPer_\Psi$ as a perimeter}

\red 
As suggested by our discussion in earlier sections, the functional $\ProbPer_\Psi$ can be interpreted as a perimeter functional. We can justify this interpretation from two different perspectives.

First, we establish the following structural properties for the functional $\ProbPer_\Psi$ when $\Psi$ is non-decreasing and concave, the exact same assumptions needed in our existence results discussed in \cref{sec:ExistInformal}. These properties allow us to view $\ProbPer_\Psi$ as a generalized perimeter.

\begin{informaltheorem}[Submodularity]\label{formalthm:submodularity} Provided $\Psi$ is concave, non-decreasing, and $\Psi(0) =0$ the functional $\ProbPer_\Psi$ is a generalized perimeter. In particular, \[\ProbPer_\Psi(\emptyset)= \ProbPer_\Psi(\X) =0\] 
and $\ProbPer_\Psi$ is submodular, i.e., 
\begin{align*}
    \ProbPer_\Psi(A\cup B)
    +
    \ProbPer_\Psi(A\cap B)
    \leq 
    \ProbPer_\Psi(A)
    +
    \ProbPer_\Psi(B)
    \quad\forall A,B\in\A.
\end{align*}
\end{informaltheorem}

As a generalized perimeter, $\ProbPer_\Psi$ admits a convex extension (its total variation) that is denoted by $\ProbTV_\Psi$ and that we define precisely in \labelcref{eq:ProbTV_Psi}. Interestingly, $\ProbPer_\Psi$ has a second extension, the functional $\ProbJ_\Psi$ defined in \labelcref{eq:J_Psi}, which is sequentially lower-semicontinuous (although it fails to be convex) w.r.t. the weak topology where we can guarantee precompactness of minimizing sequences and is always greater than or equal to $\ProbTV_\Psi$. We use this ordering in our proof strategy for the existence of optimal hard classifiers. We also note that, while non-convex, the extension $\ProbJ_\Psi$ has a relatively simple explicit expression that makes it useful for computations, while the same is not true for $\ProbTV_\Psi$, which instead has a complicated expression that would be hard to compute with.

When $\Psi$ is more general, we can still give a perimeter interpretation to the functional $\ProbPer_\Psi$, at least in an asymptotic sense. For this purpose, we assume that $\distr_x$ localizes to a point mass at $x$.
\begin{informaltheorem}[Local asymptotics]\label{formalthm:asymptotics}
    Let $\X= \R^d$. If $\distr_x \equiv \distr_{x,\eps}$ for $\eps>0$ and $\distr_{x,\eps}$ converges to the Dirac delta $\delta_x$ in a suitable way, then the rescaled probabilistic perimeters $\tfrac{1}{\eps}\ProbPer_\Psi(A)$ of a smooth set $A$ converge to the weighted local perimeter
    \begin{align*}
        \Per(A) := \int_{\partial A} f(x,n(x))\de\mathcal{H}^{d-1},
    \end{align*}
    where $n(x)$ denotes the outer unit normal at $x\in\partial A$ and $f$ is a suitable weight function, depending on the distributions $\distr_{x,\eps}$, and $\rho_i$ for $i\in\{0,1\}$. $\mathcal{H}^{d-1}$ denotes the $(d-1)$-dimensional Hausdorff measure in $\R^d$.
\end{informaltheorem}

\subsubsection{Interpolation properties of PRL and modified PRL}

Our last main results describe the relation between adversarial training, RM, and the PRL and modified PRL models, at least in the agnostic setting where the family of admissible binary classifiers consists of all Borel measurable sets. We focus on the  energies $\ProbRisk_{\Psi_p}$ and $\Risk_{\Psi_p}$ as $p$ tends to zero or $\infty$ for the choice $\Psi_p(t)= \min\{t/p,1 \}$.

\begin{informaltheorem}[Convergence to minimum adversarial training]\label{formalthm:convergence}
    Let $\Psi_p(t):=\min\left\lbrace{t}/{p},1\right\rbrace$.  Then the minimum value of $\ProbRisk_{\Psi_p}$ converges to the minimum value of the adversarial training problem \labelcref{eq:AT_binary} as $p\to 0$. If, in addition, for every $x$ the measure $\distr_x$ dominates the data measures $\rho_0$ and $\rho_1$ restricted to the support of $\distr_x$, then minimizers of $\ProbRisk_{\Psi_p}$ converge to minimizers of \labelcref{eq:AT_binary} as $p \to 0$. 

    Analogous statements hold for the minimization of $\Risk_{\Psi_p}$.

    \end{informaltheorem}
While the convergence of the minimal risks holds under fairly general assumptions, we emphasize that the convergence of minimizers holds provided we impose additional conditions on the measures $\{ \distr_x\}_{x \in \X}$, as our examples in \cref{sec:Interpolation} illustrate. 

\medskip 

For large values of $p$ we have the following result.
\begin{informaltheorem}[Convergence to minimum standard risk]\label{formalthm:convergenceERM}
Let $\Psi_p(t):=\min\left\lbrace{t}/{p},1\right\rbrace$.  Then the minimum value of $\ProbRisk_{\Psi_p}$ converges to the minimum value of the RM problem \labelcref{eq:AT_binary} as $p\to \infty$. Also,  minimizers of $\ProbRisk_{\Psi_p}$ converge to minimizers of \labelcref{eq:AT_binary} as $p\to\infty$. 
    \end{informaltheorem}

We emphasize that the above statement holds for $\ProbRisk_{\Psi_p}$ but not for $\Risk_{\Psi_p}$. In fact, there is no value or limiting value of $p$ that makes $\Risk_{\Psi_p}$ into the standard risk functional. This means that the family of functionals $\{ \Risk_{\Psi_p} \}_{p }$ does not actually interpolate between AT and RM, while the family of functionals $\{ \ProbRisk_{\Psi_p} \}_{p }$ does. The latter model has the additional advantage of being interpretable as a geometric regularization model.

\nc

\medskip

\subsection{Related work}
\label{sec:related_work}

Adversarial training was developed in \cite{goodfellow2014explaining,madry2017towards} as an approach to produce networks that are less sensitive to adversarial attacks.
\cite{shafahi2019adversarial} reduced its computational complexity by reusing gradients from the backpropagation when training neural networks. 
\cite{wong2020fast} showed that training with noise perturbations followed by a single signed gradient ascent (FGSM) step can be on par with adversarial training while being much cheaper.
This approach was picked up and improved upon in \cite{andriushchenko2020understanding} based on gradient alignment.
Different authors also investigated test-time robustification of pretrained classifiers using randomized smoothing \cite{cohen2019certified} or geometric / gradient-based approaches \cite{schwinn2021identifying,schwinn2022improving}.
While some of the previous models use a combination of random perturbations and gradient-based adversarial attacks to robustify classifiers, \cite{robey2022probabilistically} proposed probabilistically robust learning, which is entirely based on random perturbations.
PRL aims to interpolate between clean and adversarial accuracy and enjoys the favorable sample complexity of vanilla empirical risk minimization; see also \cite{raman2023proper} for more insights on this issue.
Connections between adversarial training and local perimeter regularization as well as mean curvature flows of decision boundaries were explored in \cite{zhang2019theoretically,NGTMurrayJMLR} and recently rigorously tied in \cite{bungert2024gamma,bungert2024meancurvature}.
Furthermore, in \cite{bungert2024gamma} it was proved that solutions to adversarial training converge to distinguished minimizers of standard risk minimization as $\eps\to 0$ which was later quantified and extended to the probablistically robust setting in \cite{morris2024uniform}.
An interpretation of adversarial training as gradient flow is given in \cite{weigand2024adversarialflows}.

Our work is in line with a series of papers \cite{pydi2021many,
awasthi2021existence, awasthi2021extended,
frank2022consistency, frank2022existence, bungert2023geometry,trillos2023existence} that explore the existence of solutions to adversarial training problems in different settings.
These existence proofs involve dealing with different kinds of measurability issues, depending on whether open or closed balls $B_\eps(x)$ are used in the attack model.
For open balls one can work with the Borel $\sigma$-algebra $\A=\B(\X)$ \cite{bungert2023geometry}, whereas closed balls require the use of the universal $\sigma$-algebra to make sure that $A^{\oplus\eps}$ is measurable \cite{pydi2021many,awasthi2021existence,awasthi2021extended}.
Recently, these results were improved in \cite{trillos2023existence} where it was proved that, for the case of multi-class classification, even for the closed ball model Borel measurable classifiers exist, although they are not necessarily indicator functions of measurable sets. Moreover, \cite{trillos2023existence} showed that for all but countably many values of the adversarial budget $\eps>0$ the open and the closed ball models have the same minimal value.

\subsection{Outline}

\red 
The rest of the paper is organized as follows. In Section 2 we make the setting for our variational problems precise and then present a series of results on the existence of minimizers of these problems. In particular, in  \cref{sec:hard_classifiers,sec:soft_classifiers} we make precise our \cref{formalthm:existence_hard,formalthm:existence_hardOrignal}, and in \cref{sec:existence_proofs} present their proofs. \cref{sec:Perimeters} is devoted to the study of the functional $\ProbPer_\Psi$ and its interpretation as a perimeter. There, we make the \cref{formalthm:submodularity,formalthm:asymptotics} precise and present their proofs. \cref{sec:Interpolation} is devoted to the interpolation properties of the different PRL models. There we make our \cref{formalthm:convergence,formalthm:convergenceERM} precise using the notion of $\Gamma$-convergence. In \cref{sec:general_models}, we introduce a modified PRL model for general supervised learning problems beyond binary classification and discuss the connection between the CVar relaxation of PRL in \cite{robey2022probabilistically} and our geometric framework. In \cref{sec:numerics} we present some numerical results of an implementation of our models in real data settings. We wrap up the paper in \cref{sec:Conclusions} with some conclusions and perspectives for future research.
\nc

\section{Existence of probabilistically robust classifiers}
\label{sec:ExistenceSection}

\subsection{Preliminaries}

In this section we discuss a few other reformulations for the energies discussed in the introduction. These reformulations provide some additional context and connections to the literature that will later be used in our discussion.

\red 

To start, we first present a reformulation of the probabilistic risk $\ProbRisk$ as the expected maximum of the sample-wise standard risk and the probabilistically robust risk from \cite{robey2022probabilistically}.

\begin{proposition}\label{prop:rewrite_max}
    For the $0$-$1$ loss $\ell(\tilde y,y) := \one_{\tilde y\neq y}$ we can rewrite the probabilistic risk $\ProbRisk$ as in \labelcref{eq:OtherFormProbR}. In that same setting, $\ProbRisk$ can also be written as the expectation of a sample-wise maximum of the standard loss and the loss suggested in \cite{robey2022probabilistically}, that is:
    \begin{align}
    \label{eq:OtherFormProbRMax}
        \ProbRisk(A) = \Exp{(x,y)\sim\mu}{\max\left\lbrace
        \ell(\one_A(x),y),
        p\text{-}\esssup_{x'\sim\distr_x}\ell(\one_A(x'),y)
        \right\rbrace}.
    \end{align}

\end{proposition}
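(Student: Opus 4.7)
The plan is to prove the two claimed rewritings of $\ProbRisk(A)$ by direct computation: for \labelcref{eq:OtherFormProbR} I would unfold the definition of $\ProbRisk = \Risk_\mathrm{std} + \ProbPer$ and rearrange the four integrals against $\rho_0$ and $\rho_1$; for \labelcref{eq:OtherFormProbRMax} I would reduce to showing that, for an indicator integrand, the $p$-$\esssup$ collapses to an indicator itself.

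For the first identity, I would split the risk of misclassification by label. Since $\mu$ disintegrates into $\rho_0 = \mu(\cdot \times \{0\})$ and $\rho_1 = \mu(\cdot \times \{1\})$, the standard risk can be written as
\[
\Risk_\mathrm{std}(A) = \int_\X \one_{x\in A}\de\rho_0(x) + \int_\X \one_{x\in A^c}\de\rho_1(x),
\]
and the probabilistic perimeter $\ProbPer(A)$ is by definition an integral of $\one_{\Prob{x'\sim\distr_x}{x'\in A}>p}$ over $A^c$ against $\rho_0$, plus the symmetric term over $A$ against $\rho_1$. Adding and using the elementary identity $\one_{E}+\one_{F\cap E^c} = \one_{E\cup F}$ with $E = \{x\in A\}$ (resp.\ $E=\{x\in A^c\}$) and $F = \{\Prob{x'\sim\distr_x}{x'\in A}>p\}$ (resp.\ its complementary version), the two contributions against $\rho_0$ combine into $\int_\X \one_{x\in A\vee \Prob{x'\sim\distr_x}{x'\in A}>p}\de\rho_0$, and similarly for $\rho_1$. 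Re-expressing these as a $\mu$-expectation by multiplying by $(1-y)$ and $y$ respectively yields exactly \labelcref{eq:OtherFormProbR}.

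For the second identity, the key observation is the pointwise computation
\[
p\text{-}\esssup_{x'\sim\distr_x}\one_{x'\in B} = \one_{\Prob{x'\sim\distr_x}{x'\in B}>p}\quad\text{for any measurable } B\subset\X.
\]
This follows directly from the definition of $p\text{-}\esssup$: the set $\{t\in\R : \Prob{x'\sim\distr_x}{\one_{x'\in B}>t}\leq p\}$ always contains $[1,\infty)$, excludes $(-\infty,0)$, and contains $[0,1)$ precisely when $\Prob{x'\sim\distr_x}{x'\in B}\leq p$; taking the infimum gives $0$ or $1$ accordingly. Applying this with $B=A$ (when $y=0$) and $B=A^c$ (when $y=1$), and noting that $\ell(\one_A(x),y)=\one_{x\in A}$ when $y=0$ and $\one_{x\in A^c}$ when $y=1$, the sample-wise maximum becomes
\[
\max\left\{\one_{x\in A},\ \one_{\Prob{x'\sim\distr_x}{x'\in A}>p}\right\} = \one_{x\in A\,\vee\,\Prob{x'\sim\distr_x}{x'\in A}>p}
\]
when $y=0$, and the analogous expression (with $A$ replaced by $A^c$) when $y=1$. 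Matching these case by case against the integrand in \labelcref{eq:OtherFormProbR} gives \labelcref{eq:OtherFormProbRMax}.

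No step should be a real obstacle; the only care needed is in the $p$-$\esssup$ computation, where one must be sure to use the strict inequality $\Prob{x'\sim\distr_x}{f(x')>t}\leq p$ from the definition (rather than $\geq$ or $<$) so that the threshold lands at exactly $\one_{\Prob{}{x'\in B}>p}$ and not at $\one_{\Prob{}{x'\in B}\geq p}$. This alignment of strict versus weak inequalities is what makes the max-formulation agree exactly with the decomposition into $\Risk_\mathrm{std}+\ProbPer$.
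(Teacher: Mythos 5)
Your proof is correct and follows essentially the same route as the paper: disintegrate $\Risk_\mathrm{std}$ against $\rho_0,\rho_1$, combine with $\ProbPer$ via elementary indicator identities to obtain \labelcref{eq:OtherFormProbR}, then identify the integrand with a pointwise maximum. The one place you are more careful than the paper is in explicitly verifying that $p\text{-}\esssup_{x'\sim\distr_x}\one_{x'\in B}=\one_{\Prob{x'\sim\distr_x}{x'\in B}>p}$ (with attention to the strict inequality in the definition of $p$-$\esssup$); the paper's proof asserts this final step ``follows immediately,'' so your added detail is a genuine improvement in rigor, not a divergence in method.
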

\begin{remark}
     This result will be key for generalizing $\ProbRisk$ to multi-class classification as well as general hypothesis classes and loss functions. See \cref{sec:general_models}
\end{remark}


\begin{remark}
In its original expression, $\ProbRisk$ is written as the sum of the standard risk and a nonlocal perimeter functional. This \emph{probabilistic perimeter}, which we will discuss in more detail in \cref{sec:Perimeters}, can be rewritten as:
\begin{align*}
    \begin{split}
    \ProbPer(A) 
    &=
    \rho_0\left(
    \Set{x \in A^c \st \Prob{{x'} \sim \distr_{x}}{{x'}\in A}>p}
    \right)
    \\
    &\hspace{4em}
    +
    \rho_1\left(
    \Set{x \in A \st \Prob{{x'} \sim \distr_{x}}{{x'}\in A^c}>p}
    \right).
    \end{split}
\end{align*}
Note that $\ProbPer(A)$ counts the proportion of correctly classified points $x$ for which more than $100\times p\,\%$ of their neighbors, sampled from $\distr_x$, constitute an attack. From this it should be intuitively clear that $\ProbPer(A)$ is a boundary term. 
\end{remark}

\begin{remark}
    The interpretation of the statement of this proposition in the light of \Cref{fig:spike_solution} is clear: Only if a point $x$ is correctly classified---meaning $\one_{\one_A(x)\neq y}=0$---the probabilistically robust regularization kicks in through the second term in the maximum.
Points which are incorrectly classified will always be penalized even if most attacks correct the label, i.e., if $\one_{\Prob{{x'}\sim\distr_{x}}{\one_A({x'})\neq y}>p}=0$. 
Thus, minimizing $\ProbRisk$ instead of $\Risk_\mathrm{prob}$ corrects the pathology of the original PRL formulation.
\end{remark}

\begin{proof}[Proof of \Cref{prop:rewrite_max}]
    
    Disintegrating the risk functional $\Risk_{\mathrm{std}}$ we can write 
    \[  \Risk_{\mathrm{std}}(A) =
        \int_\X 
        \one_{x\in A}
        \de \rho_0(x)
        +
        \one_{x\in A^c}
        \de\rho_1(x). \]
   Also,
    \begin{align*}
    \begin{split}
    \ProbPer(A)
    &=
    \int_\X 
    \one_{x\in A\,\vee\,\Prob{{x'} \sim \distr_{x}}{{x'}\in A}>p} - \one_{x\in A} \de \rho_0(x)
    \\
    &\hspace{10em}
    +
    \int_\X 
    \one_{x\in A^c\,\vee\,\Prob{{x'} \sim \distr_{x}}{{x'}\in A^c}>p} - \one_{x\in A^c} \de \rho_1(x).
    \end{split}
\end{align*}
Adding the above two identities we deduce \labelcref{eq:OtherFormProbR}. Now that \labelcref{eq:OtherFormProbR} has been established we see that   
 \begin{align*}
        &\phantom{{}={}}
        \ProbRisk(A) 
        \\
        &=
        \int_\X 
        \one_{x\in A\,\vee\,\Prob{{x'} \sim \distr_{x}}{{x'}\in A}>p}
        \de \rho_0(x)
        +
        \one_{x\in A^c\,\vee\,\Prob{{x'} \sim \distr_{x}}{{x'}\in A^c}>p}
        \de\rho_1(x)
        \\
        &=
        \int_\X 
        \max
        \left\lbrace 
        \one_{x\in A},
        \one_{
        \Prob{{x'}\sim\distr_x}
        {{x'}\in A} > p
        }
        \right\rbrace 
        \de\rho_0(x)
        +
        \int_\X 
        \max
        \left\lbrace 
        \one_{x\in A^c},
        \one_{
        \Prob{{x'}\sim\distr_x}
        {{x'}\in A^c} > p
        }
        \right\rbrace
        \de\rho_1(x),
    \end{align*}
    where we used the fact that the indicator function of the union of two sets equals the maximum of the two indicator functions. \labelcref{eq:OtherFormProbRMax} follows immediately.
\end{proof}
\nc

\red 
In the same spirit as in \cref{prop:rewrite_max}, we can rewrite $\ProbRisk_\Psi$ for general $\Psi$ as follows. 
\nc 

\begin{proposition}\label{prop:rewrite_max_Psi}
    For the $0$-$1$ loss $\ell(\tilde y,y) := \one_{\tilde y\neq y}$ and for any function $\Psi:[0,1]\to[0,1]$ we can rewrite the probabilistic risk $\ProbRisk_\Psi$ as sample-wise maximum in the following way:
    \begin{align*}
        \ProbRisk_\Psi(A) 
        =
        \Exp{(x,y)\sim\mu}{
        \max
        \left\lbrace
        \ell(\one_A(x),y),
        \Psi\left(\Prob{{x'}\sim\distr_{x}}{\ell(\one_A(x'),y)}\right)
        \right\rbrace}.
    \end{align*}
\end{proposition}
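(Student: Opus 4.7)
The plan is to mirror the proof of \cref{prop:rewrite_max}, which handled the special case $\Psi(t)=\one_{t>p}$ and expressed the probabilistic risk as a sum before recombining the integrands into a maximum. Starting from the definition $\ProbRisk_\Psi(A)=\Risk_\mathrm{std}(A)+\ProbPer_\Psi(A)$, the statement reduces to a pointwise identity between the integrand of this sum and the integrand of the proposed max-formulation, after disintegrating $\mu$ with respect to the label.

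First I would split the expectation on the right-hand side according to $y\in\{0,1\}$, turning it into
\begin{align*}
    &\int_\X \max\!\left\{\one_{x\in A},\, \Psi\!\left(\Prob{x'\sim\distr_x}{x'\in A}\right)\right\}\de\rho_0(x) \\
    &\quad + \int_\X \max\!\left\{\one_{x\in A^c},\, \Psi\!\left(\Prob{x'\sim\distr_x}{x'\in A^c}\right)\right\}\de\rho_1(x),
\end{align*}
using that $\ell(\one_A(x),0)=\one_{x\in A}$ and $\ell(\one_A(x'),0)=\one_{x'\in A}$, with the analogous complemented identities for $y=1$. The key step is then the elementary observation that for $a\in\{0,1\}$ and $b\in[0,1]$ one has $\max\{a,b\}=a+(1-a)b$. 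Applying this with $a=\one_{x\in A}$ and $b=\Psi(\Prob{x'\sim\distr_x}{x'\in A})$ in the first integral rewrites the integrand as $\one_{x\in A}+\one_{x\in A^c}\,\Psi(\Prob{x'\sim\distr_x}{x'\in A})$. Recognizing the first summand as the $\rho_0$-part of $\Risk_\mathrm{std}(A)$ and the second as the $\rho_0$-part of $\ProbPer_\Psi(A)$ produces exactly one half of $\ProbRisk_\Psi(A)$; the $\rho_1$-integral is treated symmetrically with $a=\one_{x\in A^c}$.

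The only obstacle is confirming the bound $b\in[0,1]$ needed for the pointwise identity $\max\{a,b\}=a+(1-a)b$, which is precisely the hypothesis $\Psi:[0,1]\to[0,1]$ in the statement of the proposition. No monotonicity, concavity, or continuity of $\Psi$ is required, so the identity holds in the full generality claimed. Integrability is automatic since all integrands take values in $[0,1]$ and $\mu$ is a probability measure, so assembling the two halves and reinterpreting the result as an expectation over $\mu$ concludes the argument.
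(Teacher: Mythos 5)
Your proposal is correct and is essentially the same argument as the paper's: both hinge on the pointwise identity $\one_{x\in A} + \one_{x\in A^c}\,\Psi(\cdot) = \max\{\one_{x\in A},\,\Psi(\cdot)\}$ (your $\max\{a,b\}=a+(1-a)b$ with $a=\one_{x\in A}$, $b=\Psi(\cdot)\in[0,1]$), followed by disintegration over the label. The only cosmetic difference is the direction — you start from the max form and peel it apart, whereas the paper starts from $\Risk_{\mathrm{std}}+\ProbPer_\Psi$ and recombines — but the content is identical.
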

Besides the choice $\Psi(t) = \one_{t>p}$ another interesting choice is the smallest concave function which lies above it, i.e., $\Psi(t) = \min\left\lbrace t/p,1\right\rbrace$.
For this choice we can connect the probabilistic risk with the conditional value at risk (CVaR) \cite{rockafellar2000optimization} which was suggested in \cite{robey2022probabilistically} as a computationally feasible replacement for the $p\text{-}\esssup$ operator in problem \labelcref{eq:Pappas}.
Its precise definition will be important later and can be found in \labelcref{eq:CVaR} in \cref{sec:general_models}.
\begin{proposition}\label{prop:rewrite_cvar_Psi}
    For the $0$-$1$ loss $\ell(\tilde y,y) := \one_{\tilde y\neq y}$ and for $\Psi_p(t) := \min\left\lbrace t/p,1\right\rbrace$ with $p \in (0,1)$ it holds that 
    \begin{align*}
        \ProbRisk_{\Psi_p}(A) 
        =
        \Exp{(x,y)\sim\mu}{
        \max
        \left\lbrace
        \ell(\one_A(x),y),
        \operatorname{CVaR}_p\left(\ell(\one_A(\cdot),y);\distr_x\right)
        \right\rbrace}.
    \end{align*}
\end{proposition}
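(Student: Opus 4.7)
The plan is to reduce the statement to \cref{prop:rewrite_max_Psi} applied to the specific choice $\Psi = \Psi_p$, and then identify $\Psi_p$ evaluated at a Bernoulli success probability with the conditional value at risk of that Bernoulli random variable. By \cref{prop:rewrite_max_Psi}, we already have
\[
  \ProbRisk_{\Psi_p}(A) = \Exp{(x,y)\sim\mu}{\max\left\lbrace \ell(\one_A(x),y),\ \Psi_p\!\left(\Prob{x'\sim\distr_x}{\ell(\one_A(x'),y)=1}\right)\right\rbrace},
\]
so the entire content of the proposition reduces to the pointwise identity
\[
  \Psi_p\!\left(\Prob{x'\sim\distr_x}{\ell(\one_A(x'),y)=1}\right) \;=\; \operatorname{CVaR}_p\!\left(\ell(\one_A(\cdot),y);\distr_x\right),
\]
for each fixed pair $(x,y)\in\X\times\Y$.

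To establish this pointwise identity, I would fix $(x,y)$ and let $X(x'):=\ell(\one_A(x'),y)$ viewed as a random variable under $x'\sim\distr_x$. Since $\ell$ is the $0$-$1$ loss and $\one_A$ takes values in $\{0,1\}$, the variable $X$ is Bernoulli with parameter
\[
  q := \Prob{x'\sim\distr_x}{\one_A(x')\neq y}.
\]
The goal is then to show $\operatorname{CVaR}_p(X) = \min\{q/p,1\} = \Psi_p(q)$. Using the definition of $\operatorname{CVaR}_p$ (stated in \labelcref{eq:CVaR} of \cref{sec:general_models}), one may use either the Rockafellar--Uryasev variational form $\operatorname{CVaR}_p(X) = \inf_{t\in\R}\{t + \tfrac{1}{p}\Exp{}{(X-t)_+}\}$ or the quantile integral form. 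I would use the variational form, evaluating it on the two-point support $\{0,1\}$: the infimum is attained at $t=0$ if $q\geq p$, giving $\operatorname{CVaR}_p(X) = \tfrac{1}{p}\cdot q\cdot 1 \wedge 1 = 1$, and at $t=1$ (or any $t\in[0,1]$) if $q<p$, giving $\operatorname{CVaR}_p(X) = q/p$; in both cases one obtains $\min\{q/p,1\}$.

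The only slightly delicate point, which I would treat as the main obstacle, is the convention of the CVaR definition: depending on whether the paper normalizes by $p$ or $1-p$ and whether level $p$ refers to the upper or the lower tail, the identification with $\Psi_p$ can fail by a reparametrization. Once the convention matching $\Psi_p(t) = \min\{t/p,1\}$ (i.e., the smallest concave majorant of $\one_{t>p}$) is fixed, the computation above is straightforward, and substituting the resulting pointwise identity back into the expression from \cref{prop:rewrite_max_Psi} yields the claim.
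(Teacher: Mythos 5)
Your approach is essentially identical to the paper's: reduce to \cref{prop:rewrite_max_Psi}, then recognize that $\ell(\one_A(\cdot),y)$ is a $\{0,1\}$-valued function (Bernoulli random variable under $\distr_x$) and compute its CVaR explicitly from the variational definition \labelcref{eq:CVaR}. The only discrepancy is a small slip in the case analysis: you swapped which value of $t$ attains the infimum. Writing $\zeta(t) = t + \tfrac{1}{p}\Exp{}{(X-t)_+}$ with $X\sim\mathrm{Bernoulli}(q)$, one has $\zeta(t) = t\left(1-\tfrac{q}{p}\right) + \tfrac{q}{p}$ on $[0,1]$, so when $q<p$ the function is increasing and the minimum is at $t=0$ giving $q/p$, and when $q\geq p$ it is non-increasing and the minimum is at $t=1$ giving $1$ — the reverse of what you stated. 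Since $\zeta(0)=q/p$ and $\zeta(1)=1$ in any case, your final answer $\min\{q/p,1\}=\Psi_p(q)$ is still correct, and the paper reaches the same conclusion by evaluating $\zeta(0)$ and $\zeta(1)$ directly and arguing that the minimum must lie at one of the two kinks. Your observation about the CVaR convention is a sensible sanity check, but it is moot here since \labelcref{eq:CVaR} fixes the convention that makes the identity hold.
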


\nc


We proceed to prove \cref{prop:rewrite_max_Psi}.

\begin{proof}[Proof of \Cref{prop:rewrite_max_Psi}]
The proof is similar to that of \Cref{prop:rewrite_max} after noting that for $\Psi: [0,1] \to [0,1]$
\begin{equation*}
    \one_{x\in A} + \one_{x\in A^c} \Psi\left(\Prob{{x'} \sim \distr_{x}}{{x'}\in A}\right) = \max\left\{\one_{x\in A}, \Psi\left(\Prob{{x'} \sim \distr_{x}}{{x'}\in A}\right)\right\}
\end{equation*}
which can easily be shown by checking cases. Then:
\begin{align*}
        \ProbRisk_\Psi(A) 
        &= 
        \int_\X \one_{x\in A}\de\rho_0(x)
        +
        \int_\X \one_{x\in A^c}\de\rho_1(x)
        \\
        &\hspace{4em}
        +
        \int_\X \one_{x\in A^c} \Psi\left(\Prob{{x'} \sim \distr_{x}}{{x'}\in A}\right) \de \rho_0(x)
        +
        \int_\X \one_{x\in A}\Psi\left(\Prob{{x'} \sim \distr_{x}}{{x'}\in A^c}\right) \de \rho_1(x) \\
        & = \int_\X\left[\one_{x\in A} + \one_{x\in A^c} \Psi\left(\Prob{{x'} \sim \distr_{x}}{{x'}\in A}\right) \right]\de\rho_0(x) \\
        & \hspace{4em} + \int_\X\left[\one_{x\in A^c} + \one_{x\in A}\Psi\left(\Prob{{x'} \sim \distr_{x}}{{x'}\in A^c}\right) \right]\de\rho_1(x) \\
        &= \int_\X\max\left\{\one_{x\in A}, \Psi\left(\Prob{{x'} \sim \distr_{x}}{{x'}\in A}\right)\right\}\de \rho_0(x) \\
        & \hspace{4em} + \int_\X\max\left\{\one_{x\in A^{c}}, \Psi\left(\Prob{{x'} \sim \distr_{x}}{{x'}\in A^{c}}\right)\right\}\de \rho_1(x).
\end{align*}
\end{proof}
We postpone the proof of \cref{prop:rewrite_cvar_Psi}, which discusses another reformulation of $\ProbRisk_\Psi$ in terms of the so-called conditional value at risk (CVaR), to \cref{sec:general_models}, since \cref{prop:rewrite_cvar_Psi} will not be used in the remainder of this section.

\subsection{Model family for hard classifiers}
\label{sec:hard_classifiers}

We now begin the discussion of existence of solutions to the minimization problems associated with the risks $\Risk_\Psi$ and $\ProbRisk_\Psi$ defined in \labelcref{eq:RiskPsi} and \labelcref{eq:ProbRiskPsi}. As was discussed in the introduction, these families of risks include $\Risk_{\mathrm{prob}}$ and $\ProbRisk$ (for $\Psi(t):=\one_{t>p}$) but also a relaxation of the adversarial risk $\Risk_\mathrm{adv}$ (for $\Psi(t):=\one_{t>0}$).

We start our discussion by making our setting precise.
\begin{assumption}\label{ass:main_assumption}
We let $\X$ be a set and $\A\subset 2^\X$ be a $\sigma$-algebra.
We assume that:
\begin{itemize}
    \item $(\X\times\Y,\A\otimes 2^{\{0,1\}},\mu)$ is a probability space;
    \item $(\X,\A,\rho)$ is a probability space, where we define $\rho(\bullet) := \mu(\bullet\times\{0,1\})$;
    \item $\{\distr_x\}_{x\in\X}$ is a family such that $(\X,\A,\distr_x)$ is a probability space for $\rho$-almost every $x\in\X$.
\end{itemize}
\end{assumption}
We also define the probability measures 
\begin{align}
    \label{eq:rho}
    \rho &:= \rho_0 + \rho_1,\\
    \label{eq:nu}
    \nu(A) &:= \frac12\int_{\X}\distr_{x}(A)\de\rho(x)
    +
    \frac12
    \rho(A)
    ,\qquad A\in\A.
\end{align} 
\nc 
The measure $\rho$ equals the first marginal of $\mu$ and models the distribution of all data, irrespective of their label.
The first summand of the measure $\nu$ is the convolution of $\rho$ with the family of probability measures $\{\distr_{x}\}_{x\in\X}$.

By construction we have the following two important properties:
\begin{align}
    \label{eq:ac_nu}
    \nu(A) &= 0 \implies 
    \Big[
    \rho(A) = 0 
    \quad 
    \text{and}
    \quad
    \distr_x(A) = 0\text{ for $\rho$-almost every $x\in\X$}
    \Big],\\
    \label{eq:ac_rho}
    \rho(A) &= 0 \implies 
    \Big[
    \rho_0(A) = 0 
    \quad 
    \text{and}
    \quad
    \rho_1(A) = 0
    \Big].
\end{align}
A simple example for $\X=\R^d$ is $\rho := \frac1N\sum_{i=1}^N\delta_{x_i}$ and $\distr_x := \operatorname{Unif}(B_\eps(x))$ in which case
\begin{align*}
    \nu = \frac1{2N}\sum_{i=1}^N \Big( \operatorname{Unif}(B_\eps(x_i)) + \delta_{x_i}\Big)
\end{align*}
is a sum of absolutely continuous measures on balls centered at $x_i$ and the empirical measure of the points $x_i$.

Using the measure $\nu$ we will work with the weak-* topology of $L^\infty(\X;\nu)$ which is the dual space of $L^1(\X;\nu)$ since $\nu$ is \emph{a fortiori} a $\sigma$-finite measure \cite[IV.8.3, Theorem 5]{dunford1958linear}.
\begin{definition}\label{def:weak_star_nu}
Under \Cref{ass:main_assumption} we say that a sequence of functions $(u_n)_{n\in\N}\subset L^\infty(\X;\nu)$ converges to $u\in L^\infty(\X;\nu)$ in the weak-* sense (written $u_n\wsto u$) as $n\to\infty$ if
\begin{align}
    \lim_{n\to\infty}\int_\X u_n \varphi \de\nu = 
    \int_\X u \varphi \de\nu\qquad \forall \varphi \in L^1(\X;\nu).
\end{align}
\end{definition}
We will use this topology 
to prove existence of minimizers of the probabilistic risks $\Risk_\Psi$ and $\ProbRisk_\Psi$. Furthermore, we use this same topology for proving that our modified PRL using the function $\Psi_p$ converges to adversarial training as $p\to 0$, at least when the family of measures $\{ \distr_x \}_x$ satisfies a suitable assumption; see the discussion in \cref{sec:Interpolation}.

The following theorem establishes existence of minimizers of the risk $\ProbRisk_\Psi$ for concave and non-decreasing functions $\Psi$.


{Interestingly, we establish the existence of hard classifiers using a novel abstract strategy, rather than the usual direct method in the calculus of variations. We take this approach as it is rather nontrivial to establish lower semicontinuity for an extension of $\ProbPer_\Psi$ to soft classifiers that behaves well with respect to thresholding operations. Instead, we introduce two different relaxations; one which is lower semicontinuous, and one which is defined through a coarea formula and hence behaves well with respect to thresholding; our result then follows from an ordering relation on the two relaxations if $\Psi$ is concave and non-decreasing.}

We arrive at the rigorous versions of \cref{formalthm:existence_hard} and \cref{formalthm:existence_hardOrignal}.
\begin{theorem}\label{thm:existence_Psi}
    Suppose $\Psi:[0,1]\to[0,1]$ is concave and non-decreasing, and that \cref{ass:main_assumption} holds. Then, there exists a solution to the problem
    \begin{align}
        \inf_{A\in\A} \ProbRisk_\Psi(A). 
        \label{eq:Prob-Psi-Minimization}
    \end{align}
\end{theorem}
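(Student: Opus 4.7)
The plan is to exploit a soft-classifier relaxation combined with a coarea-style recovery of an indicator minimizer. First, I take a minimizing sequence $(A_n)_{n\in\N}$ for $\ProbRisk_\Psi$ and set $u_n := \one_{A_n}$, uniformly bounded in $L^\infty(\X;\nu)$. By the Banach--Alaoglu theorem, along a subsequence (not relabeled) $u_n \wsto u^*$ for some $u^* \in L^\infty(\X;\nu)$ with $0 \leq u^* \leq 1$ $\nu$-a.e. I then extend $\ProbRisk_\Psi$ to such soft classifiers via
\begin{align*}
    F(u) := \int_{\X\times\{0,1\}} \abs{u(x)-y}\de\mu(x,y) + \ProbJ_\Psi(u),
\end{align*}
which satisfies $F(\one_A) = \ProbRisk_\Psi(A)$.

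Second, I would establish weak-* lower semicontinuity of $F$ along $(u_n)$. By construction $\nu$ dominates $\rho_0$, $\rho_1$, and $\distr_x$ for $\rho$-a.e. $x$, so the corresponding Radon--Nikodym densities are admissible test functions in $L^1(\X;\nu)$. Testing against $\de\distr_x/\de\nu$ for fixed $x$ yields $\Exp{x'\sim\distr_x}{u_n(x')} \to \Exp{x'\sim\distr_x}{u^*(x')}$ for $\rho$-a.e.~$x$; since $\Psi$ is continuous and bounded (being a concave map $[0,1]\to[0,1]$), dominated convergence then upgrades this to strong convergence of $\Psi\bigl(\Exp{x'\sim\distr_x}{u_n(x')}\bigr)$ to $\Psi\bigl(\Exp{x'\sim\distr_x}{u^*(x')}\bigr)$ in $L^1(\X;\rho_0)$, and analogously in $L^1(\X;\rho_1)$. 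Pairing this strong convergence with the weak-* convergence of $u_n$ yields $F(u^*)\leq \liminf_n F(u_n) = \inf_{A\in\A}\ProbRisk_\Psi(A)$; linearity of the standard-risk term makes its own continuity automatic.

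Third, to recover a hard classifier from $u^*$, I introduce the coarea-type extension $\ProbTV_\Psi(u) := \int_0^1 \ProbPer_\Psi(\{u > t\})\de t$, which agrees with $\ProbPer_\Psi$ on indicators. The key ingredient is the pointwise-in-$x$ estimate
\begin{align*}
    \int_{u(x)}^1 \Psi\bigl(\distr_x(\{u > t\})\bigr)\de t \leq (1-u(x))\,\Psi\left(\Exp{x'\sim\distr_x}{u(x')}\right),
\end{align*}
proved by Jensen's inequality (for concave $\Psi$, against the uniform probability measure on $[u(x),1]$) followed by the observation that the non-increasing map $t\mapsto\distr_x(\{u > t\})$ has smaller average on $[u(x),1]$ than on $[0,1]$, the latter being exactly $\Exp{x'\sim\distr_x}{u(x')}$; this is then combined with monotonicity of $\Psi$. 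Integrating over $\rho_0$ and handling the symmetric $\rho_1$-term identically (now with $t\mapsto\distr_x(\{u\leq t\})$ non-decreasing) produces the global ordering $\ProbTV_\Psi(u) \leq \ProbJ_\Psi(u)$ for every $[0,1]$-valued $u$. Together with the layer-cake identity $\int_{\X\times\{0,1\}} \abs{u-y}\de\mu = \int_0^1 \Risk_\mathrm{std}(\{u > t\})\de t$ this chains into
\begin{align*}
    \inf_{A\in\A}\ProbRisk_\Psi(A) \leq \int_0^1 \ProbRisk_\Psi(\{u^* > t\})\de t \leq F(u^*) \leq \inf_{A\in\A}\ProbRisk_\Psi(A),
\end{align*}
forcing equality for a.e. $t\in[0,1]$ and yielding a minimizer $A^* = \{u^* > t\}$ for any such $t$. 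The principal obstacle is the ordering $\ProbTV_\Psi \leq \ProbJ_\Psi$, which is precisely where both the concavity and the monotonicity of $\Psi$ are essential; this explains why the argument is unavailable for the genuine indicator-threshold model $\Psi(t)=\one_{t>p}$.
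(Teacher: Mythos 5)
Your overall strategy is exactly the paper's: pass to the soft-classifier relaxation $\ProbJ_\Psi$, get weak-* compactness from Banach--Alaoglu in $L^\infty(\X;\nu)$, establish lower semicontinuity, and then use the coarea extension $\ProbTV_\Psi$ together with the ordering $\ProbTV_\Psi\le\ProbJ_\Psi$ to extract a minimizing superlevel set. The paper packages this as a meta-theorem with five desiderata (compactness, LSC, consistency, coarea, ordering) and then verifies each; you have reproduced that chain. Your derivation of the ordering inequality is a mild variant of the paper's: you observe that the non-increasing map $t\mapsto\distr_x(\{u>t\})$ has smaller average on $[u(x),1]$ than on $[0,1]$ and then invoke monotonicity of $\Psi$, whereas the paper instead uses the pointwise bound $(u(x')-u(x))_+\le u(x')(1-u(x))$ inside the expectation. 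Both give the same estimate; yours is arguably slightly cleaner.

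There is, however, one genuine error in the lower semicontinuity step. You assert that $\Psi$ is continuous ``being a concave map $[0,1]\to[0,1]$'' and then invoke dominated convergence to get strong $L^1$ convergence of $\Psi\bigl(\Exp{x'\sim\distr_x}{u_n(x')}\bigr)$. But a concave, non-decreasing function on a \emph{closed} interval need not be continuous at the left endpoint: $\Psi_0(t)=\one_{t>0}$ is concave and non-decreasing on $[0,1]$ yet jumps at $t=0$, and the paper deliberately covers precisely this $\Psi_0$ later in the $\Gamma$-convergence analysis (it is the value of $\Psi_p$ in the $p\to 0$ limit). For such $\Psi$, your pointwise convergence $\Psi\bigl(\Exp{x'\sim\distr_x}{u_n(x')}\bigr)\to\Psi\bigl(\Exp{x'\sim\distr_x}{u^*(x')}\bigr)$ fails on the (possibly non-null) set where $\Exp{x'\sim\distr_x}{u^*(x')}=0$. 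The paper avoids this by noting that a concave non-decreasing $\Psi$ on $[0,1]$ is at least lower semicontinuous, approximating it from below by the continuous functions $\Psi_\delta(t)=\inf_s\Psi(s)+\delta^{-1}\abs{s-t}$, passing to the limit in $n$ for each fixed $\delta$, and then sending $\delta\to0$ via Fatou. You should replace your dominated-convergence step with this LSC-approximation argument; everything else in your proof then goes through unchanged.
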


\begin{theorem}\label{thm:existence_PsiOriginal}
    Suppose $\Psi:[0,1]\to[0,1]$ is concave and non-decreasing, and that \cref{ass:main_assumption} holds. Then, there exists a solution to the problem
    \begin{align}
        \inf_{A\in\A} \Risk_\Psi(A). 
        \label{eq:Prob-Psi-MinimizationOriginal}
    \end{align}
\end{theorem}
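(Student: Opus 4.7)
The plan is to use the relaxation-and-coarea strategy alluded to by the authors: relax the set-valued minimization to one over soft classifiers, apply the direct method to obtain a soft minimizer, and then convert it back to a set-valued minimizer via thresholding powered by Jensen's inequality. Concretely, I would introduce the extension
\[
\widetilde{\Risk}_\Psi(u) := \int_\X \Psi\!\bigl(\Exp{x' \sim \distr_x}{u(x')}\bigr) \de\rho_0(x) + \int_\X \Psi\!\bigl(\Exp{x' \sim \distr_x}{1-u(x')}\bigr) \de\rho_1(x)
\]
on the set $\mathcal{K} := \{u \in L^\infty(\X;\nu)\st 0 \leq u \leq 1\}$, which is convex and weak-* compact by the Banach--Alaoglu theorem. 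By construction $\widetilde{\Risk}_\Psi(\one_A) = \Risk_\Psi(A)$ for every $A \in \A$, so it suffices to show that $\widetilde{\Risk}_\Psi$ admits a minimizer on $\mathcal{K}$ that can be replaced by an indicator function without increasing the energy.

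For the weak-* lower semicontinuity of $\widetilde{\Risk}_\Psi$ I would use two ingredients. First, any concave function $\Psi:[0,1]\to[0,1]$ is automatically lower semicontinuous on $[0,1]$: it is continuous on the open interior, and the concavity inequality $\Psi(t)\geq (1-t)\Psi(0)+t\Psi(1)$ forces $\liminf$ values to dominate the function values at both endpoints. Second, by the construction of $\nu$ in \labelcref{eq:nu} (cf.\ \labelcref{eq:ac_nu}), $\distr_x \ll \nu$ for $\rho$-almost every $x$, so the Radon--Nikodym density $\phi_x := \de\distr_x/\de\nu$ lies in $L^1(\X;\nu)$. Hence $u_n \wsto u$ implies $\Exp{x' \sim \distr_x}{u_n(x')} \to \Exp{x' \sim \distr_x}{u(x')}$ pointwise in $x$ for $\rho$-a.e.\ $x$; combining this with the lower semicontinuity of $\Psi$ and Fatou's lemma applied to both summands yields the weak-* lower semicontinuity of $\widetilde{\Risk}_\Psi$. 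The direct method then produces a minimizer $u^* \in \mathcal{K}$.

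To pass from $u^*$ to a hard classifier I would use the layer-cake identities
\[
\Exp{x' \sim \distr_x}{u^*(x')} = \int_0^1 \distr_x(\{u^* > t\}) \de t, \qquad \Exp{x' \sim \distr_x}{1-u^*(x')} = \int_0^1 \distr_x(\{u^* > t\}^c) \de t
\]
together with Jensen's inequality applied to the concave function $\Psi$ against Lebesgue measure on $[0,1]$. Summing the two resulting bounds and exchanging integrals via Fubini yields
\[
\widetilde{\Risk}_\Psi(u^*) \geq \int_0^1 \Risk_\Psi(\{u^* > t\}) \de t.
\]
A mean-value argument then selects $t^* \in (0,1)$ with $\Risk_\Psi(A^*) \leq \widetilde{\Risk}_\Psi(u^*)$, where $A^* := \{u^* > t^*\} \in \A$. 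Since $u^*$ minimizes $\widetilde{\Risk}_\Psi$ on $\mathcal{K}$ and $\one_B \in \mathcal{K}$ for every $B \in \A$, we have $\widetilde{\Risk}_\Psi(u^*) \leq \widetilde{\Risk}_\Psi(\one_B) = \Risk_\Psi(B)$, and so $A^*$ realizes the infimum in \labelcref{eq:Prob-Psi-MinimizationOriginal}.

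The main obstacle lies in the tension between two natural extensions of $\Risk_\Psi$ to soft classifiers: the direct extension $\widetilde{\Risk}_\Psi$ is weak-* lower semicontinuous and hence amenable to the direct method, but its minimizers need not be indicator functions; conversely, the coarea-type functional $u \mapsto \int_0^1 \Risk_\Psi(\{u > t\}) \de t$ automatically ``sees'' hard classifiers but is awkward to minimize directly. The concavity hypothesis on $\Psi$ is precisely the structural input that reconciles the two via Jensen's inequality, and this is exactly what the abstract \cref{thm:meta} of \cref{sec:existence_proofs} is designed to encapsulate.
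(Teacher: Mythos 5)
Your proposal is correct and takes essentially the same route as the paper's proof, which invokes \cref{thm:meta} with $R=\Risk_\Psi$, $S=\SRisk_\Psi$ (your $\widetilde{\Risk}_\Psi$), and the coarea functional $T(u)=\int_0^1 R(\{u\geq t\})\,\de t$; the lower semicontinuity ingredient is \cref{prop:lsc_J_Psi}/\cref{lem:pointwise_cvgc}, and the ordering (desideratum D5) is exactly your layer-cake plus Jensen step. The only deviation worth noting is cosmetic: the paper works in the weak-* topology of $L^\infty(\X;\sigma)$ with $\sigma$ from \labelcref{eq:sigma} (the natural reference measure for $\Risk_\Psi$, since the functional only ``sees'' $u$ through the convolutions $\distr_x$), whereas you use $\nu$ from \labelcref{eq:nu}. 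Both choices are valid here because $\distr_x\ll\sigma\ll\nu$ and $\rho_i\ll\rho\ll\nu$ supply the necessary Radon--Nikodym densities in $L^1$, so your compactness, pointwise-convergence, and Fatou steps all go through; using $\sigma$ is merely the tighter choice. Your observation that concavity of $\Psi$ on $[0,1]$ automatically forces lower semicontinuity is correct (continuity on the open interior plus the chord inequality at the endpoints) and fills in a small point that the paper leaves implicit.
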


The proofs of \cref{thm:existence_Psi,thm:existence_PsiOriginal} are provided in \cref{sec:existence_proofs}. The reason why an existence proof for \labelcref{eq:Prob-Psi-Minimization} (or \labelcref{eq:Prob-Psi-MinimizationOriginal}) for non-concave functions $\Psi$ is currently not available is illustrated in the following example. 
\begin{example}[Homogenizing solutions]\label{ex:homogenization}
    \begin{figure}[htb]
        \centering
        \begin{subfigure}{0.48\textwidth}
        \fbox{\includegraphics[width=0.95\textwidth,trim=3cm 1.9cm 3cm 1cm,clip]{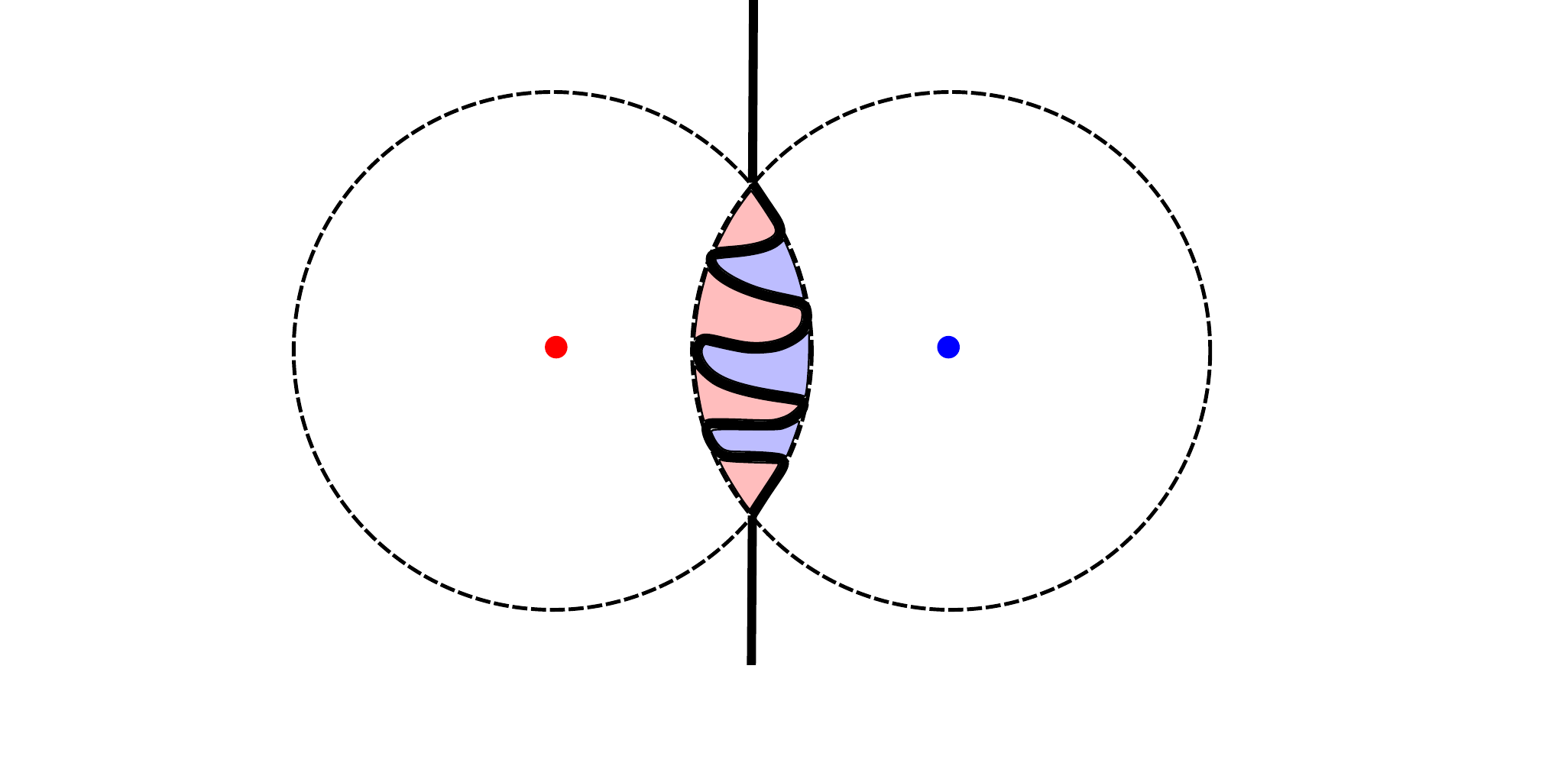}}
        \caption{Homogenizing minimizer for $\Psi(t):=\one_{t>p}$.
        \label{fig:homogenized}}
        \end{subfigure}
        \hfill
        \begin{subfigure}{0.48\textwidth}
        \fbox{\includegraphics[width=0.95\textwidth,trim=3cm 1.9cm 3cm 1cm,clip]{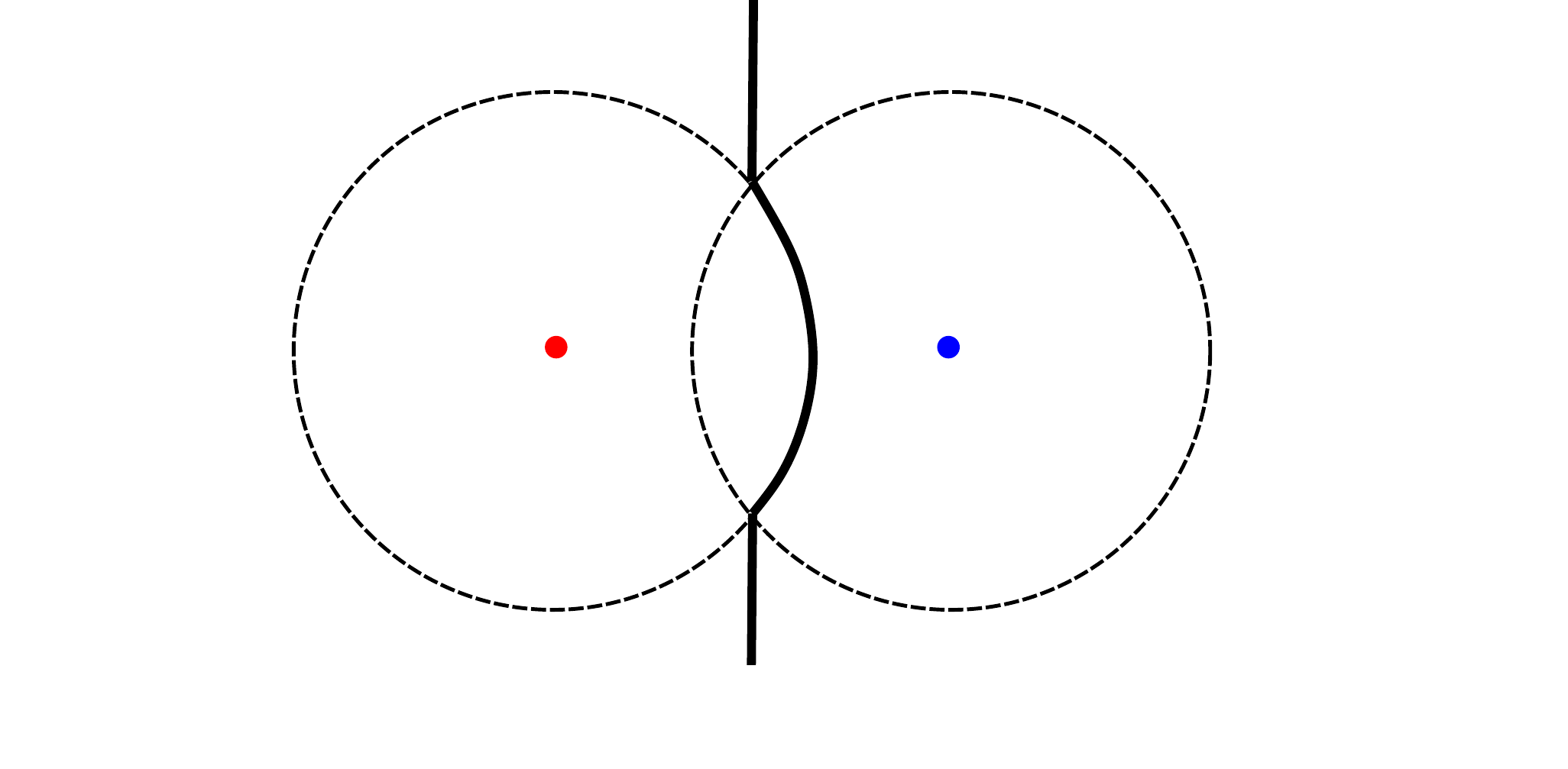}}
        \caption{Superlevel set which is not a  minimizer.
        \label{fig:AT}}
    \end{subfigure}
        \caption{Non-compactness of minimizing sequences.}
        \label{fig:enter-label}
    \end{figure}
    In this example we consider the situation of two data points (red and blue) which are so close that the $\eps$-balls around them intersect. 
    Furthermore, we assume that $p\in(0,1)$ is $0.9$ times the ratio of the volume of the intersection to the volume of one $\eps$-ball. 
    If we pick $\Psi_p=\one_{t>p}$, there exist infinitely many minimizers of \labelcref{eq:Prob-Psi-Minimization} with oscillatory boundaries in the intersection. One such minimizer is depicted in \cref{fig:homogenized}.
    The minimizers can be arranged such that the ratios of the red or the blue volume to the volume of the intersection are in the interval $[0.4,0.6]$. 
    Consequently, all such sets have zero loss and are global minimizers.
    However, the set of such minimizers is not compact and has an accumulation point which is not a characteristic function, namely the function $u:\X\to[0,1]$ with values $u=1$ on the red ball without the blue ball, $u=0$ on the blue ball without the red ball, and $u=0.5$ on the intersection of the two balls.
    Hence, minimizing sequences of \labelcref{eq:Prob-Psi-Minimization} are not compact.
    The issue of non-compactness of minimizing sequences can in fact also happen for concave functions $\Psi$ which is why in the proof of \cref{thm:existence_Psi} we show that almost every superlevel set $\Set{u>t}$ of a function $u$ which is an accumulation point of a minimizing sequences is a minimizer of \labelcref{eq:Prob-Psi-Minimization}. 
    However, not even this strategy works in the non-concave case since the superlevel set $\{u>t\}$ \emph{for all} $t\in[0,1/2]$ is given by the set in \cref{fig:AT} which is not a minimizer of PRL.
\end{example}


\subsection{Model family for soft classifiers}
\label{sec:soft_classifiers}

Note that, besides the reasons given in \cref{ex:homogenization}, an existence result similar to \cref{thm:existence_Psi} for the non-concave function $\Psi(t) = \one_{t>p}$ is currently not available since our relaxation techniques rely on concavity of $\Psi$.
However, in this section we shall state an existence theorem for ``soft classifiers'' which is valid for very general functions $\Psi$, including $\Psi(t)=\one_{t>p}$, since no relaxation is needed.

Such soft classifiers are particularly relevant since they include neural network based models with \texttt{Softmax} activation in the last layer which are used in practice.
A suitable regularization functional for soft classifiers is given by $\ProbJ_\Psi$ defined in \labelcref{eq:J_Psi}.
For convenience we repeat its definition.
Given an $\A$-measurable function $u:\X\to[0,1]$ we let
\begin{align*}
    \begin{split}
    \ProbJ_\Psi(u) 
    &:=
    \int_\X \left(1-u(x)\right)\Psi\left(\Exp{{x'}\sim\distr_x}{u({x'})}\right)
    \de\rho_0(x)
    \\
    &\hspace{4em}
    +
    \int_\X u(x)\Psi\left(\Exp{{x'}\sim\distr_x}{1-u({x'})}\right)
    \de\rho_1(x)
    \end{split}
\end{align*}
which satisfies $\ProbJ_\Psi(\one_A) = \ProbPer_\Psi(A)$ for every choice of $\Psi$. 
Hence, it is a natural generalization of the perimeter to soft classifiers and one could call $\ProbJ_\Psi$ a total variation.
However, it is neither positively homogeneous nor convex so this name would be misleading.
Instead, for the proof of \cref{thm:existence_Psi} we shall also work with the \textit{total variation functional} $\ProbTV_\Psi$ defined as
\begin{align}\label{eq:ProbTV_Psi}
    \ProbTV_\Psi(u) := \int_0^1\ProbPer_\Psi(\{u>t\})\de t,
\end{align}
for all $\A$-measurable functions $u: \X \rightarrow [0,1]$.
Note that also the total variation satisfies $\ProbTV_\Psi(\one_A) = \ProbPer_\Psi(A)$.
Both $\ProbJ_\Psi$ and $\ProbTV_\Psi$ are of paramount importance for the proof of \cref{thm:existence_Psi}.
  
The next theorem is the rigorous version of \cref{formalthm:existence_soft} and asserts existence of soft classifiers for the regularized risk minimization using $\ProbJ_\Psi$ for very general functions $\Psi$ and hypothesis classes $\mathcal{H}$.
It is sufficient that $\Psi$ is lower semicontinuous which is satisfied by every continuous function and also by $\Psi(t)=\one_{t>p}$ for $p\in[0,1]$.
Furthermore, the existence theorem is valid for all hypotheses classes which are closed in the weak-* topology of $L^\infty(\X;\nu)$.

\begin{theorem}[Existence of soft classifiers for modified PRL]\label{thm:existence_soft}
    Under \cref{ass:main_assumption}, for every lower semicontinuous function $\Psi:[0,1]\to[0,1]$, and whenever $\mathcal{H}$ is a weak-* closed hypothesis class of $\A$-measurable functions $u:\X\to[0,1]$ in the sense of \cref{def:weak_star_nu}, there exists a solution to the problem
    \begin{align}\label{eq:relaxed_problem}
        \inf_{u\in\mathcal{H}} \Exp{(x,y)\sim\mu}{\abs{u(x)-y}}
        +
        \ProbJ_\Psi(u).
    \end{align}
\end{theorem}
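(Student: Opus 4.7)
The plan is to apply the direct method of the calculus of variations in the weak-* topology of $L^\infty(\X;\nu)$. I would take a minimizing sequence $(u_n)\subset\mathcal{H}$; since $0\leq u_n\leq 1$, the sequence is norm-bounded in $L^\infty(\X;\nu)$, so by the Banach--Alaoglu theorem I may extract a subsequence (not relabeled) converging weakly-* to some $u\in L^\infty(\X;\nu)$. The convex set $\{v\in L^\infty(\X;\nu)\st 0\leq v\leq 1\}$ is weak-* closed, so $0\leq u\leq 1$, and since $\mathcal{H}$ is assumed weak-* closed, $u\in\mathcal{H}$.

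Next, the data fidelity can be rewritten as $\Exp{(x,y)\sim\mu}{\abs{u(x)-y}}=\int_\X u\de\rho_0+\int_\X(1-u)\de\rho_1$, which is affine in $u$. Since $\rho_0,\rho_1\ll\nu$ by \labelcref{eq:nu} and \labelcref{eq:ac_rho}, with Radon--Nikodym densities in $L^1(\X;\nu)$, this functional is weak-* continuous. For $\ProbJ_\Psi$, the key observation is that for $\rho$-a.e. $x$ the measure $\distr_x$ is absolutely continuous with respect to $\nu$ (again by \labelcref{eq:nu} and \labelcref{eq:ac_nu}), so its Radon--Nikodym density $k(x,\cdot):=\de\distr_x/\de\nu$ lies in $L^1(\X;\nu)$ with total mass one. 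Testing $u_n\wsto u$ against $k(x,\cdot)$ then yields pointwise convergence of the averages $v_n(x):=\int_\X u_n\de\distr_x\to v(x):=\int_\X u\de\distr_x$ for $\rho$-a.e. $x$.

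The main obstacle is the lower semicontinuity of the nonlinear and non-convex functional $\ProbJ_\Psi$ along this sequence, since the integrands involve products of a weakly-* convergent factor $(1-u_n)$ and a nonlinear composition $\Psi(v_n)$. My approach is to write $\Psi$ as the pointwise supremum of an increasing sequence of continuous functions $\Psi_m:[0,1]\to[0,1]$, for instance the Moreau-type envelopes $\Psi_m(t):=\inf_{s\in[0,1]}\{\Psi(s)+m|s-t|\}$, which satisfy $\Psi_m\nearrow\Psi$ by lower semicontinuity of $\Psi$. For each fixed $m$, dominated convergence applied to the uniformly bounded, pointwise convergent sequence $\Psi_m(v_n)\to\Psi_m(v)$ yields strong convergence of $\Psi_m(v_n)\cdot(\de\rho_0/\de\nu)$ to $\Psi_m(v)\cdot(\de\rho_0/\de\nu)$ in $L^1(\X;\nu)$. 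Combining this with the weak-* convergence of $(1-u_n)$ in $L^\infty(\X;\nu)$ via the standard strong-weak pairing gives
\begin{align*}
    \int_\X(1-u_n)\Psi_m(v_n)\de\rho_0\to\int_\X(1-u)\Psi_m(v)\de\rho_0.
\end{align*}
Since $\Psi_m\leq\Psi$, taking the liminf in $n$ first and then letting $m\to\infty$ via monotone convergence yields
\begin{align*}
    \liminf_{n\to\infty}\int_\X(1-u_n)\Psi(v_n)\de\rho_0\geq\int_\X(1-u)\Psi(v)\de\rho_0.
\end{align*}
The analogous argument for the second term of $\ProbJ_\Psi$ (swapping the roles of $u$ and $1-u$, and of $v$ and $1-v$) establishes lower semicontinuity of $\ProbJ_\Psi$, and combined with the weak-* continuity of the data fidelity shows that $u$ attains the infimum in \labelcref{eq:relaxed_problem}.
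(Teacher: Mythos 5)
Your proof is correct and follows essentially the same route as the paper: the direct method in the weak-* topology of $L^\infty(\X;\nu)$, with compactness from Banach--Alaoglu, and lower semicontinuity of $\ProbJ_\Psi$ established by approximating a lower semicontinuous $\Psi$ from below by continuous Moreau-type envelopes, then using the strong--weak pairing of $\Psi_m(v_n)\de\rho_0/\de\nu\in L^1$ against $(1-u_n)\in L^\infty$. The paper packages this as the first half of its meta-theorem \cref{thm:meta} together with \cref{lem:pointwise_cvgc} and \cref{prop:lsc_J_Psi}, and passes to the limit in the approximation parameter via Fatou; your use of monotone convergence (exploiting $\Psi_m\nearrow\Psi$) is a marginally cleaner version of the same step but not a different idea.
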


\begin{example}[Hypothesis classes]\label{ex:hypothesis_classes}
    The hypothesis class consisting of measurable sets, i.e., $\mathcal{H}=\Set{\one_A\st A\in\mathcal{A}}$ is not weak-* closed which is why \cref{thm:existence_Psi} needs stronger assumptions on $\Psi$ than \cref{thm:existence_soft}. 
    Let us instead consider three interesting hypothesis classes of weak-* closed (in fact, even compact) classifiers for which \cref{thm:existence_soft} applies.
    \begin{enumerate}
        \item The simplest such class $\mathcal{H}$ is the class of \emph{all} $\A$-measurable soft classifiers $u:\X\to[0,1]$ which could be referred to as \emph{agnostic} classifiers since they are not parametrized.
        This class is a bounded subset of $L^\infty(\X;\nu)$ and therefore, by the Banach--Alaoglu theorem, it is weak-* compact.
        \item An example with more practical relevance is the class of (feedforward or residual) neural networks defined on the unit cube $\X:=[-1,1]^d$ with uniformly bounded parameters
        \begin{align*}
            \mathcal{H}
            :=
            \Big\{
            \Phi_L \circ \dots \circ \Phi_1
            :[-1,1]^d\to[0,1]
            \st
            &
            \Phi_l(\bullet) = A_l \bullet + \sigma_l(W_l \bullet + b_l),
            \\
            &
            \norm{(A_l, W_l,b_l)} \leq C \; \forall l \in \{1,\dots,L\}
            \Big\},
        \end{align*}
        where we assume that the activations $\sigma_l:\R\to\R$ are continuous.
        Note that the boundedness of the weights cannot be relaxed.
        To see this, consider the (very simplistic) neural network $u_n(x) = \tanh(w_n x)$ for $x\in[-1,1]$ and $w_n\in\R$.
        For $w_n\to\infty$ it is easy to see that $u_n$ converges to $u(x):=\operatorname{sign}(x)$ which does not lie in the same hypothesis class.       

        To argue why neural networks with bounded parameters and continuous activation functions are weak-* compact, let $(u_n)_{n\in\N}\subset\mathcal{H}$ be a sequence. Thanks to finite-dimensional compactness a subsequence of the associated parameters converge to some limiting parameters.
        The continuity of the activations implies that the associated neural networks converge (uniformly in the space of continuous functions on the unit cube $[-1,1]^d$) to a limiting neural network $u\in\mathcal{H}$.
        In particular, the convergence is true in the weak-* sense, which shows that $\mathcal{H}$ is weak-* compact.
        \item Finally, one can also consider the class of hard linear classifiers on $\R^d$.
        Letting $\theta(t) := \one_{t>0}$ denote the Heaviside function, this class is given by
        \begin{align*}
            \mathcal{H}
            :=
            \Set{\theta(w\cdot x + b)
            \st 
            w \in \R^d,\;\abs{w}=1,\;b\in[-\infty,\infty]},
        \end{align*}
        where one interprets $u(x):=\theta(w\cdot x+b)$ as $u\equiv 1$ if $b=\infty$ and $u\equiv 0$ if $b=-\infty$.
        If the distributions $\rho_0$, $\rho_1$, and $\distr_x$ are are such that $\nu$ defined in \labelcref{eq:nu} has a density with respect to the Lebesgue measure, then $\mathcal H$ has the desired closedness property.
        A sufficient condition for this to hold is that $\rho_0$, $\rho_1$, and $\distr_x$ have densities with respect to the Lebesgue measure.
        
        Let us prove the closedness under the above conditions. 
        If $(u_n)_{n\in\N}\subset\mathcal{H}$ is a sequence of linear classifiers, thanks to finite-dimensional compactness a subsequence (which we do not relabel) of the associated parameters $(w_n,b_n)$ converges to $w\in\R^d$ with $\abs{w}=1$ and $b\in[-\infty,\infty]$.
        For simplicity we only consider the case where $b\neq\pm\infty$. 
        In this case one can define the half-spaces
        \begin{align*}
            A_n &:= \Set{x\in\R^d \st w_n\cdot x + b > 0},
            \\
            A &:= \Set{x\in\R^d \st w\cdot x + b > 0},
        \end{align*}
        upon which $u_n$ and $u$ are supported.
        Then for any $\phi \in L^1(\R^d;\nu)$ it holds
        \begin{align*}
            \abs{
            \int_{\R^d} (u_n - u) \phi \de\nu
            }
            =
            \abs{
            \int_{A_n} \phi \de\nu
            -
            \int_{A} \phi \de\nu
            }
            \leq 
            \int_{A_n \triangle A}
            \abs{\phi}
            \de\nu
        \end{align*}
        where we used the symmetric difference $A_n \triangle A := \left(A_n \setminus A\right) \cup \left(A\setminus A_n\right)$.
        Note that this set is either a double cone (if $w_n\neq w$) or a strip of width $\abs{b_n-b}$ (if $w_n=w$).
    
        Since $\phi \in L^1(\R^d;\nu)$ and $\nu$ is a probability measure, for every $\eps>0$ there exists a compact set $K\subset\R^d$ such that $\int_{\R^d\setminus K}\abs{\phi}\de\nu<\eps$.
        Using this, we can compute
        \begin{align*}
            \abs{
            \int_{\R^d} (u_n - u) \phi \de\nu
            }
            \leq 
            \int_{A_n \triangle A}
            \abs{\phi}
            \de\nu
            \leq 
            \int_{(A_n\triangle A) \cap K}
            \abs{\phi}\de\nu 
            +
            \eps.
        \end{align*}
        Using that $\nu$ has a density with respect to the Lebesgue measure $\mathcal{L}^d$ and using also that $\mathcal{L}^d(A_n\triangle A \cap K)\to 0$ as $n\to\infty$, we obtain
        \begin{align*}
            \lim_{n\to\infty}
            \abs{
            \int_{\R^d} (u_n - u) \phi \de\nu
            }
            \leq 
            \eps 
        \end{align*}
        and since $\eps>0$ was arbitrary we get
        \begin{align*}
            \lim_{n\to\infty}
            \int_{\R^d} (u_n - u) \phi \de\nu
            =0,
        \end{align*}
        which implies the weak-* convergence of $u_n$ to $u\in\mathcal{H}$ and hence the weak-* compactness of $\mathcal{H}$.
    
        Note that for general measures $\nu$ the above argument fails.
        For instance,the sequence of linear classifiers $u_n(x) = \one_{x_1>-1/n}$ has the natural limit $u(x) = \one_{x_1>0}$. 
        However, if $\nu = \delta_0$ then $\int_{\R^d}u_n\de\nu=1$ for all $n\in\N$ but $\int_{\R^d}u\de\nu=0$, meaning that $u$ is not the weak-* limit of $u_n$.
    \end{enumerate}
\end{example}

\red  
Similarly, we can state a precise version of \Cref{formalthm:existence_soft_2}. For this, it is convenient to introduce the probability measure $\sigma$ defined via
    \begin{align}\label{eq:sigma}
        \sigma(A) := \int_\X \distr_x(A)\de\rho(x),
    \end{align}
  which clearly satisfies
    \red 
\begin{equation}
     \label{eq:ac_sigma}
    \sigma(A) = 0 \implies \distr_x(A) = 0\text{ for $\rho$-almost every $x\in\X$}.    
\end{equation}

\begin{theorem}[Existence of soft classifiers for original PRL]\label{thm:existence_soft_2}
   Under \cref{ass:main_assumption}, for every lower semicontinuous function $\Psi:[0,1]\to[0,1]$, and whenever $\mathcal{H}$ is a hypothesis class of $\A$-measurable functions $u:\X\to[0,1]$ that is closed in the weak* topology of $L^\infty(\X;\sigma)$, there exists a solution to the problem
    \begin{align}\label{eq:relaxed_problem_2}
        \inf_{u\in\mathcal{H}}  \int_\X \Psi\left(\Exp{x'\sim\distr_x}{u(x')}\right)
        \de\rho_0(x)
        +
        \int_\X \Psi\left(\Exp{x'\sim\distr_x}{1-u(x')}\right)
        \de\rho_1(x).
    \end{align}
\end{theorem}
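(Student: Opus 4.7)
The plan is to follow the direct method of the calculus of variations, mirroring the strategy used for \cref{thm:existence_soft}. The crucial simplification here is that the objective in \labelcref{eq:relaxed_problem_2} depends on $u$ only through the linear averaging operator $T(u)(x) := \Exp{x'\sim\distr_x}{u(x')}$, with no explicit factor of $u(x)$ appearing outside the $\distr_x$-expectation. This is exactly why the natural function space is $L^\infty(\X;\sigma)$ rather than the richer $L^\infty(\X;\nu)$ used for modified PRL.

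First I would pick a minimizing sequence $(u_n)_{n\in\N}\subset\mathcal{H}$. Since $0\leq u_n \leq 1$, the sequence is norm-bounded in $L^\infty(\X;\sigma)$. After possibly reducing to a countably generated sub-$\sigma$-algebra so that $L^1(\X;\sigma)$ is separable, a sequential Banach--Alaoglu argument yields a subsequence, not relabeled, with $u_n \wsto u_\ast$ in $L^\infty(\X;\sigma)$. The pointwise constraint $u_\ast \in [0,1]$ survives the passage to the limit because the set of $\sigma$-measurable functions with values in $[0,1]$ is convex and norm-closed, hence weak-$\ast$ closed. The weak-$\ast$ closedness of $\mathcal{H}$ then gives $u_\ast \in \mathcal{H}$.

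Second I would transfer the weak-$\ast$ convergence through the averaging operator, observing that $T\colon L^\infty(\X;\sigma) \to L^\infty(\X;\rho)$ is well-defined thanks to \labelcref{eq:ac_sigma}. Consider the joint probability measure $\pi$ on $\X\times\X$ defined by $\de\pi(x,x'):=\de\distr_x(x')\de\rho(x)$, whose second marginal is $\sigma$. Disintegrating $\pi$ with respect to its second marginal produces, for any $\varphi \in L^1(\X;\rho)$, a function $\tilde\varphi \in L^1(\X;\sigma)$ with $\|\tilde\varphi\|_{L^1(\sigma)} \leq \|\varphi\|_{L^1(\rho)}$ and
\[
    \int_\X T(u)(x)\varphi(x)\de\rho(x) = \int_\X u(x')\tilde\varphi(x')\de\sigma(x').
\]
Consequently $T(u_n) \wsto T(u_\ast)$ in $L^\infty(\X;\rho)$.

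The hardest step is showing the lower semicontinuity
\[
    \int_\X \Psi(T(u_\ast))\de\rho_0 + \int_\X \Psi(1-T(u_\ast))\de\rho_1 \leq \liminf_{n\to\infty}\left[\int_\X \Psi(T(u_n))\de\rho_0 + \int_\X \Psi(1-T(u_n))\de\rho_1\right].
\]
This is subtle because $\Psi$ is merely lower semicontinuous and need not be convex, so the composition $v\mapsto\int\Psi(v)\de\rho_i$ is typically not weak-$\ast$ lsc. To overcome this apparent incompatibility between compactness (weak-$\ast$) and lsc (a finer topology), I would invoke the abstract framework of \cref{thm:meta}, exactly as in the proof of \cref{thm:existence_soft}: introduce an auxiliary coarea-based extension that is manifestly weak-$\ast$ lsc, and transfer lsc through an ordering relation on the two extensions. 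This is the crux of the argument and the main obstacle. Once the lsc is established, $u_\ast$ attains the infimum in \labelcref{eq:relaxed_problem_2} and is the desired minimizer.
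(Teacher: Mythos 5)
Your strategy is broadly on the right track up through the compactness and weak-$\ast$ closedness steps, but the final lower-semicontinuity step contains a genuine misunderstanding of the machinery. The coarea-extension plus ordering relation (\cref{des:coarea} and \cref{des:order} in \cref{thm:meta}) is used only for the \emph{hard} classifier existence results, where one must produce a minimizing \emph{set} $A_t=\{u\geq t\}$ from a minimizing \emph{function} $u$. In the soft classifier setting, which is the first part of \cref{thm:meta}, only compactness (\cref{des:compact}) and lower semicontinuity (\cref{des:LSC}) are needed; no ordering or coarea is invoked, neither for \cref{thm:existence_soft} nor for this theorem.

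More importantly, you miss the observation that actually makes the lsc step go through, and it is not that the coarea trick rescues things. You correctly note that $v\mapsto\int\Psi(v)\,\de\rho_i$ is not weak-$\ast$ lsc for nonconvex $\Psi$, and you correctly deduce $T(u_n)\wsto T(u_\ast)$ in $L^\infty(\X;\rho)$. But that weak-$\ast$ convergence is too weak. The key point in the paper's argument (\cref{lem:pointwise_cvgc}, applied with $\sigma$ in place of $\nu$) is that weak-$\ast$ convergence $u_n\wsto u_\ast$ in $L^\infty(\X;\sigma)$ together with $\distr_x\ll\sigma$ for $\rho$-a.e.\ $x$ (which is precisely \labelcref{eq:ac_sigma}) upgrades to \emph{pointwise a.e.\ convergence} of the nonlocal averages:
\begin{align*}
    \Exp{x'\sim\distr_x}{u_n(x')} \longrightarrow \Exp{x'\sim\distr_x}{u_\ast(x')}
    \qquad \text{for $\rho$-almost every } x\in\X,
\end{align*}
simply because for each fixed $x$ the Radon--Nikod\'ym density $\de\distr_x/\de\sigma$ is a valid test function in $L^1(\X;\sigma)$. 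Once you have pointwise a.e.\ convergence of $T(u_n)$, lower semicontinuity of $\Psi$ combined with Fatou (after approximating $\Psi$ from below by continuous functions if necessary, exactly as in \cref{prop:lsc_J_Psi}) delivers the lsc of the objective directly. This is precisely how the paper proceeds: the proof of \cref{thm:existence_soft_2} is literally the first part of \cref{thm:meta} with $S=\SRisk_\Psi$, $\mathcal{T}$ the weak-$\ast$ topology of $L^\infty(\X;\sigma)$, and \cref{des:LSC} verified via \cref{lem:pointwise_cvgc} and the argument of \cref{prop:lsc_J_Psi}. Without this pointwise-upgrade observation, your plan stalls at exactly the step you flag as the crux.
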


\nc

\subsection{Existence proofs}
\label{sec:existence_proofs}

In this section we will prove all theorems stated in the previous section using the direct method of calculus of variations together with new relaxation arguments. For this we first outline our proof strategy in a very abstract way by using the quadruple
\begin{align}
    \mathfrak Q := (R,\,S,\,T,\,\mathcal{T}),
\end{align}
where $R:\A\to[0,\infty]$ is a functional defined on $\A$-measurable sets, $S,T:\mathcal{U} \to [0,\infty]$ are proper functionals defined on the unit ball $\mathcal{U}:=\{u:\X\to[0,1]\;\A\text{-measurable}\}$ of the $\A$-measurable functions, and $\mathcal{T}$ is a topology on the unit ball. 
The key ingredients and properties of $\mathfrak Q$ are the following:
\begin{enumerate}[(D\theenumi)]
    \item \emph{Compactness}: $\mathcal{U}$ is compact with respect to $\mathcal{T}$;
    \label{des:compact}
    \item \emph{Lower semicontinuity}: For all sequences of measurable functions $(u_n)_{n\in\N}\subset\mathcal{U}$ converging to $u\in\mathcal{U}$ in the topology $\mathcal{T}$ it holds 
    \begin{align*}
        S(u)&\leq\liminf_{n\to\infty}S(u_n);
    \end{align*}
    \label{des:LSC}
    \item \emph{Consistency}: It holds for all $A\in\A$ that $$R(A)=S(\one_A).$$
    \label{des:consistency}
    \item \emph{Coarea formula}: For all $u\in\mathcal{U}$ it holds $$T(u) = \int_0^1 R(\{u\geq t\})\de t,$$
    and in particular $R(A)=T(\one_A)$ for all $A\in\A$;
    \label{des:coarea}
    \item \emph{Ordering}: For all $u\in\mathcal{U}$ it holds $$T(u)\leq S(u).$$
    \label{des:order}
\end{enumerate}
\crefname{enumi}{desideratum}{desiderata}
Under these conditions we can prove the following meta-theorem.
\begin{metatheorem}\label{thm:meta}
    Assuming \cref{des:compact,des:LSC} there exists
    \begin{align}\label{eq:meta_problem_soft}
        u \in \argmin_{u\in\mathcal{U}}S(u).
    \end{align}
    Assuming \cref{des:compact,des:LSC,des:consistency,des:coarea,des:order}, for Lebesgue almost every $t\in[0,1]$ the set $A_t := \Set{u\geq t}$ satisfies
    \begin{align}\label{eq:meta_problem_hard}
        A_t \in \argmin_{A\in\A}R(A).
    \end{align}
\end{metatheorem}
\begin{proof}
Since $S$ is proper, there exists a minimizing sequence $\{u_n\}_{n \in \N}\subset\mathcal{U}$ for \labelcref{eq:meta_problem_soft}. 
By assumption, $\{u_n\}_{n \in \N}$ is precompact in the topology $\mathcal{T}$. Therefore, up to a subsequence that we do not relabel, we can assume that $u_n \rightarrow_{\mathcal{T}} u$ for some $u: \X \rightarrow [0,1]$ which solves \labelcref{eq:meta_problem_soft} by lower semicontinuity of $S$.

Defining $A_t := \Set{u\geq t}$ for $t\in[0,1]$ it holds
\begin{align*}
    \inf_{A \in\A} R(A) 
    &=
    \int_0^1
    \inf_{A \in\A} R(A) 
    \de t
    \leq 
    \int_0^1
    R(A_t) 
    \de t
    =
    T(u)
    \leq 
    S(u)
    \leq
    S(\one_A)
    =
    R(A)
    \quad
    \forall A \in \A,
\end{align*}
and therefore, after taking the infimum over $A\in\A$, we get
\begin{align}\label{eq:int_risk_inf}
    \int_0^1
    R(A_t) 
    \de t
    = 
    \inf_{A \in\A} R(A).
\end{align}
Let us assume that for a subset $N\subset[0,1]$ of positive Lebesgue measure it holds
\begin{align*}
    \inf_{A\in\A} R(A)
    <
    R(A_t)\quad\forall t \in N.
\end{align*}
Integrating this inequality and using \labelcref{eq:int_risk_inf} we would then have
\begin{align*}
    \inf_{A\in\A}R(A)
    <
    \int_0^1
    R(A_t)
    \de t
    = 
    \inf_{A\in\A}R(A),
\end{align*}
which is a contradiction.
Hence we have proved that for Lebesgue almost every $t\in[0,1]$ the set $A_t$ solves \labelcref{eq:meta_problem_hard}.
\end{proof}
To use this theorem we need to specify $\mathfrak Q$ and verify the desiderata.
We consider the following choices:
\begin{align*}
    R(A) &:= \ProbRisk_\Psi(A),\quad A\in\A,\\
    S(u) &:= \Exp{(x,y)\sim\mu}{\abs{u(x)-y}} + \ProbJ_\Psi(u),\quad u \in \mathcal{U},\\
    T(u) &:= \Exp{(x,y)\sim\mu}{\abs{u(x)-y}} + \ProbTV_\Psi(u),\quad u \in \mathcal{U},\\
    \mathcal T &:= \text{weak-* topology of $L^\infty(\X;\nu)$,}
\end{align*}
where $\nu$ was defined in \labelcref{eq:nu}.
We note that \cref{des:compact} follows from the Banach--Alaoglu theorem.
By definition of the functionals $\ProbJ_\Psi$ and $\ProbTV_\Psi$ in \labelcref{eq:J_Psi,eq:ProbTV_Psi} the \cref{des:consistency,des:coarea} are satisfied, as well.
It remains to prove the lower semicontinuity~\cref{des:LSC} and the ordering~\cref{des:order}.

We start with the following lemma:
\begin{lemma}\label{lem:pointwise_cvgc}
    Under \Cref{ass:main_assumption}, let $\nu$ be a measure on $\X$ which satisfies $\distr_x\ll\nu$ for $\rho$-almost every $x\in\X$, and let $(u_n)_{n\in\N}\subset L^\infty(\X;\nu)$ satisfy $u_n\wsto u$ in the sense of \Cref{def:weak_star_nu}.
    Then it holds
    \begin{align*}
        \lim_{n\to\infty}\Exp{{x'}\sim\distr_x}{u_n({x'})}
        =
        \Exp{{x'}\sim\distr_x}{u({x'})}
        \qquad
        \text{for $\rho$-almost every $x\in\X$}.
    \end{align*}
\end{lemma}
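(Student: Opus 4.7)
The plan is to apply the Radon--Nikodym theorem pointwise in $x$ to rewrite the expectation under $\distr_x$ as an integration against $\nu$, and then invoke weak-* convergence with the resulting density as a test function.

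First, fix $x$ in the full $\rho$-measure set on which $\distr_x \ll \nu$ (which exists by hypothesis). Since $\nu$ is a probability measure and hence $\sigma$-finite, the Radon--Nikodym theorem produces a density
\[
f_x := \frac{\de \distr_x}{\de \nu}, \qquad f_x \in L^1(\X;\nu),
\]
with $\int_\X f_x \de\nu = \distr_x(\X) = 1$, so the integrability is automatic. For any bounded $\A$-measurable $v$ the change of measure yields
\[
\Exp{{x'}\sim\distr_x}{v({x'})} = \int_\X v(x') f_x(x') \de\nu(x').
\]

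Second, apply this identity to $v = u_n$ and $v = u$, both of which lie in $L^\infty(\X;\nu)$. Using $f_x$ as a test function in the weak-* convergence $u_n \wsto u$ (permitted because $f_x \in L^1(\X;\nu)$) gives
\[
\lim_{n\to\infty}\Exp{{x'}\sim\distr_x}{u_n({x'})} = \lim_{n\to\infty}\int_\X u_n f_x \de\nu = \int_\X u f_x \de\nu = \Exp{{x'}\sim\distr_x}{u({x'})}.
\]
Since this holds for every $x$ in the chosen full $\rho$-measure set, the claim follows.

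There is no real obstacle here: the result is essentially a routine consequence of combining the Radon--Nikodym theorem with the definition of weak-* convergence. The only point worth emphasizing is that the density $f_x$ depends on $x$ but for each individual $x$ in the good set it is a single fixed $L^1$ function, so no uniformity or dominated-convergence-type argument is needed to pass $n \to \infty$; the pointwise (in $x$) convergence of the expectations is an immediate consequence of testing the weak-* limit against $f_x$.
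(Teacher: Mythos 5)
Your proof is correct and takes the same approach as the paper: fix $x$ in the full $\rho$-measure set where $\distr_x\ll\nu$, invoke Radon--Nikodym to obtain $\frac{\de\distr_x}{\de\nu}\in L^1(\X;\nu)$, rewrite the expectation as an integral against $\nu$, and test the weak-* convergence with this density.
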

\begin{proof}
    Using the Radon--Nikod\'ym theorem, the weak-* convergence implies that for $\rho$-almost every $x\in\X$ it holds
    \begin{align*}
        \lim_{n\to\infty} 
        \Exp{{x'}\sim\distr_x}{u_n({x'})}
        &=
        \lim_{n\to\infty} 
        \int_\X u_n({x'}) \de\distr_x({x'})
        =
        \lim_{n\to\infty} 
        \int_\X u_n({x'}) \frac{\de\distr_x}{\de\nu}({x'})\de\nu({x'})
        \\
        &=
        \int_\X u({x'}) \frac{\de\distr_x}{\de\nu}({x'})\de\nu({x'})
        =
        \int_\X u({x'}) \de\distr_x({x'})
        =
        \Exp{{x'}\sim\distr_x}{u({x'})}
    \end{align*}
    since $\frac{\de\distr_x}{\de\nu}\in L^1(\X;\nu)$.
\end{proof}

Using \cref{lem:pointwise_cvgc} it is relatively straightforward to prove lower semicontinuity of $\ProbJ_\Psi$ if $\Psi$ is a continuous function. 
If, however, $\Psi$ is only lower semicontinuous, e.g., $\Psi(t)=\one_{t>p}$, we will employ a suitable approximation from below of $\Psi$ by continuous functions.
\begin{proposition}[Lower semicontinuity of $\ProbJ_\Psi$]\label{prop:lsc_J_Psi}
    Under \Cref{ass:main_assumption} let $(u_n)_{n\in\N}\subset L^\infty(\X;\nu)$ be a sequence of functions with values in $[0,1]$ satisfying $u_n\wsto u$ in the sense of \Cref{def:weak_star_nu}, and let $\Psi:[0,1]\to[0,1]$ be lower semicontinuous.
    Then $0\leq u\leq 1$ holds $\nu$-almost everywhere and furthermore
    \begin{align*}
        \ProbJ_\Psi(u) \leq \liminf_{n\to\infty}\ProbJ_\Psi(u_n).
    \end{align*}
\end{proposition}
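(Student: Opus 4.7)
The plan is to proceed in three steps. First, the bounds $0 \le u \le 1$ $\nu$-almost everywhere follow from a routine weak-* closure argument: since $\nu$ is a probability measure, $\one_{\{u > 1+\eps\}}$ lies in $L^1(\X;\nu)$ for any $\eps > 0$, and testing the weak-* convergence against this indicator while using $u_n \le 1$ forces $\nu(\{u > 1+\eps\}) = 0$; a symmetric argument handles $\{u < -\eps\}$. The main work is the semicontinuity inequality, which I will establish by first handling the case of continuous $\Psi$ and then passing to the lsc case via monotone approximation from below.

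So assume first that $\Psi$ is continuous. From the definition \labelcref{eq:nu} of $\nu$, a set $A$ with $\nu(A) = 0$ satisfies $\int \distr_x(A) \de\rho(x) = 0$, so $\distr_x \ll \nu$ for $\rho$-almost every $x$, and \cref{lem:pointwise_cvgc} provides the pointwise convergence $\phi_n(x) := \Exp{x' \sim \distr_x}{u_n(x')} \to \phi(x) := \Exp{x' \sim \distr_x}{u(x')}$ for $\rho$-a.e.\ $x$. Also $\nu \ge \tfrac12 \rho \ge \tfrac12 \rho_i$ for $i \in \{0,1\}$, so $\rho_i \ll \nu$ and the Radon--Nikod\'ym densities $g_i := \de\rho_i/\de\nu$ lie in $L^1(\X;\nu)$. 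By continuity of $\Psi$ together with the boundedness $0 \le \Psi \le 1$, the dominated convergence theorem yields strong convergence $\Psi(\phi_n)\, g_0 \to \Psi(\phi)\, g_0$ in $L^1(\X;\nu)$, and analogously $\Psi(1-\phi_n)\, g_1 \to \Psi(1-\phi)\, g_1$. Rewriting the $\rho_i$-integrals appearing in $\ProbJ_\Psi$ as $\nu$-integrals against $g_i$, the classical weak-strong convergence principle applied to the pairing of $(1-u_n) \wsto (1-u)$ in $L^\infty(\X;\nu)$ with $\Psi(\phi_n)\, g_0$ converging strongly in $L^1(\X;\nu)$ (and similarly for the other term) yields the stronger continuity statement $\ProbJ_\Psi(u_n) \to \ProbJ_\Psi(u)$.

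For general lsc $\Psi$, I will approximate from below by a non-decreasing sequence of continuous functions $\Psi^{(k)}:[0,1] \to [0,1]$ with $\Psi^{(k)} \nearrow \Psi$ pointwise; the Moreau--Yosida envelopes $\Psi^{(k)}(t) := \inf_{s \in [0,1]}\bigl[\Psi(s) + k\,|t-s|\bigr]$ furnish such a sequence. Applying the continuous case to each $\Psi^{(k)}$ together with the monotonicity $\Psi^{(k)} \le \Psi$ gives
\begin{align*}
    \ProbJ_{\Psi^{(k)}}(u) \;=\; \lim_{n\to\infty} \ProbJ_{\Psi^{(k)}}(u_n) \;\le\; \liminf_{n\to\infty} \ProbJ_\Psi(u_n),
\end{align*}
after which the monotone convergence theorem in $k$ promotes the left-hand side to $\ProbJ_\Psi(u)$. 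The main obstacle is the coupling in the integrand between the outer factor $(1-u_n)$, which carries only weak-* information, and the composition $\Psi(\phi_n)$ of a possibly discontinuous $\Psi$ with an averaged quantity; the two-step reduction above resolves this by extracting strong $L^1(\nu)$-convergence of $\Psi(\phi_n)\, g_0$ in the continuous case, where weak-strong duality closes the argument, and then relaxing back via the monotone approximation from below.
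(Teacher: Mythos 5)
Your proof is correct and follows essentially the same route as the paper's: approximate the lsc $\Psi$ from below by its Lipschitz (Moreau--Yosida) regularization, which is exactly the paper's $\Psi_\delta$ with $\delta = 1/k$, use \cref{lem:pointwise_cvgc} plus the weak-strong pairing to handle the continuous case, and then pass to the limit in the approximation parameter. The paper closes the last step with Fatou's lemma where you invoke monotone convergence, but since the envelope increases monotonically in $k$ both arguments apply interchangeably.
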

\begin{proof}
    First we show that $0\leq u \leq 1$.
    By the weak-* lower semicontinuity of the $L^\infty$-norm we get $u\leq 1$ from the fact that $0\leq u_n \leq 1$.
    To show that $u\geq 0$ we assume that on a measurable set $N$ with $\nu(N)>0$ it holds $u<0$.
    Then from the weak-* convergence and the fact that $u_n\geq 0$ we obtain
    \begin{align*}
        0>\int_\X u \one_N \de\nu = 
        \lim_{n\to\infty}
        \int_\X u_n \one_N \de\nu
        \geq 0
    \end{align*}
    which is a contradiction.
    Therefore, $u\geq 0$ holds $\nu$-almost everywhere.
    
    Since both terms in the definition of $\ProbJ_\Psi$ are dealt with symmetrically, we assume without loss of generality and for an easier notation that $\rho_1=0$ and rewrite $\ProbJ_\Psi$ as
    \begin{align*}
        \ProbJ_\Psi(u) = 
        \int_\X 
        \left(1-u(x)\right)\Psi\left(\Exp{{x'}\sim\distr_{x}}{u({x'})}\right)\de\rho_0(x).
    \end{align*}
    Since $\Psi$ is lower semicontinuous there exists a sequence of continuous functions $\Psi_\delta:[0,1]\to[0,1]$ which converge to $\Psi$ in the pointwise sense as $\delta\to 0$ and satisfy $\Psi_\delta\leq\Psi$.
    For instance, the functions
    \begin{align*}
        \Psi_\delta(t) := \inf_{s\in[0,1]} \Psi(s) + \frac{1}{\delta}\abs{s-t},\qquad t\in[0,1],
    \end{align*}
    suffice.
    \Cref{lem:pointwise_cvgc} implies that $\Exp{{x'}\sim\distr_{x}}{u_n({x'})}\to\Exp{{x'}\sim\distr_{x}}{u({x'})}$ for $\rho$-almost every $x$ as $n\to\infty$.
    Since $\Psi_\delta$ is continuous, we get $\Psi_\delta\left(\Exp{{x'}\sim\distr_{x}}{u_n({x'})}\right) \to \Psi_\delta\left(\Exp{{x'}\sim\distr_{x}}{u({x'})}\right)$ for $\rho$-almost every $x$ as $n\to\infty$.
    Since $0\leq u_n \leq 1$ and hence $\Psi_\delta\left(\Exp{{x'}\sim\distr_{x}}{u_n({x'})}\right)$ is uniformly bounded, the convergence even holds true in $L^1(\X;\rho)$.
    Taking into account \labelcref{eq:ac_rho}, this implies convergence in $L^1(\X;\rho_0)$ and therefore
    \begin{align}\label{eq:lim_Psi_delta}
        \begin{split}
        &\phantom{{}={}}
        \lim_{n\to\infty}
        \int_\X
        \left(1-u_n(x)\right)
        \Psi_\delta\left(\Exp{{x'}\sim\distr_{x}}{u_n({x'})}\right)
        \de\rho_0(x)
        \\
        &=
        \int_\X 
        \left(1-u(x)\right) \Psi_\delta\left(\Exp{{x'}\sim\distr_{x}}{u({x'})}\right)
        \de\rho_0(x).
        \end{split}
    \end{align}
    Next we would like to use the Fatou lemma to take the limit as $\delta\to 0$ on both sides. 
    For this we notice that the sequence of functions
    \begin{align*}
        f_\delta(x) 
        := 
        \left(1-u_n(x)\right)\Psi_\delta\left(\Exp{{x'}\sim\distr_{x}}{u_n({x'})}\right)
    \end{align*}
    converges to $\left(1-u_n(x)\right)\Psi\left(\Exp{{x'}\sim\distr_{x}}{u_n({x'})}\right)$ pointwise as $\delta\to 0$ and satisfies the bounds
    $f_\delta\geq 0$.
    Thanks to the non-negativity we can apply the standard Fatou lemma.
    Using $\Psi_\delta\leq\Psi$ and \labelcref{eq:lim_Psi_delta} we get
        \begin{align*}
        \ProbJ_\Psi(u)
        &=
        \int_\X \left(1-u(x)\right) \Psi\left(\Exp{{x'}\sim\distr_{x}}{u({x'})}\right)
        \de\rho_0(x)
        \\
        &\leq
        \liminf_{\delta\to 0}
        \int_\X \left(1-u(x)\right) \Psi_\delta\left(\Exp{{x'}\sim\distr_{x}}{u({x'})}\right)
        \de\rho_0(x)
        \\
        &=
        \liminf_{\delta\to 0}
        \lim_{n\to\infty}
        \int_\X
        \left(1-u_n(x)\right)
        \Psi_\delta\left(\Exp{{x'}\sim\distr_{x}}{u_n({x'})}\right)
        \de\rho_0(x)
        \\
        &\leq 
        \liminf_{n\to\infty}
        \int_\X 
        \left(1-u_n(x)\right)
        \Psi\left(\Exp{{x'}\sim\distr_{x}}{u_n({x'})}\right)
        \de\rho_0(x)
        \\
        &=
        \liminf_{n\to\infty}
        \ProbJ_\Psi(u_n).
    \end{align*}  
\end{proof}
We recall the definition of the total variation in \labelcref{eq:ProbTV_Psi} for which we now prove the following ordering with respect to $\ProbJ_\Psi$ defined in \labelcref{eq:J_Psi} in case that $\Psi$ is concave and non-decreasing.
\begin{proposition}\label{prop:TV_upper_bound}
    If $\Psi$ is concave and non-decreasing it holds for every $\A$-measurable function $u:\X\to[0,1]$ that
    \begin{align*}
        \ProbTV_\Psi(u) \leq \ProbJ_\Psi(u).
    \end{align*}
\end{proposition}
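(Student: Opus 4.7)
The plan is to reduce the claim to a pair of pointwise estimates via Fubini's theorem, and to dispatch each by Jensen's inequality (using concavity of $\Psi$) together with the elementary bound $(v-a)_+\leq v(1-a)$ valid for $v,a\in[0,1]$ and monotonicity of $\Psi$.

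First, substituting the definition of $\ProbPer_\Psi(\{u>t\})$ into \labelcref{eq:ProbTV_Psi} and exchanging the order of integration (noting $\one_{\{u\leq t\}}(x) = \one_{[u(x),1]}(t)$ and similarly $\one_{\{u>t\}}(x)=\one_{[0,u(x))}(t)$), I would rewrite
\begin{align*}
    \ProbTV_\Psi(u) = \int_\X I_0(x)\,\de\rho_0(x) + \int_\X I_1(x)\,\de\rho_1(x),
\end{align*}
where
\begin{align*}
    I_0(x) := \int_{u(x)}^1 \Psi\!\left(\Prob{x'\sim\distr_x}{u(x')>t}\right)\de t, \quad I_1(x) := \int_0^{u(x)} \Psi\!\left(\Prob{x'\sim\distr_x}{u(x')\leq t}\right)\de t.
\end{align*}
It therefore suffices to prove the pointwise bounds $I_0(x)\leq (1-u(x))\,\Psi(\Exp{x'\sim\distr_x}{u(x')})$ and $I_1(x)\leq u(x)\,\Psi(\Exp{x'\sim\distr_x}{1-u(x')})$ for $\rho_0$- and $\rho_1$-almost every $x$, respectively; integrating these against $\rho_0$ and $\rho_1$ and summing gives the desired inequality.

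For the bound on $I_0(x)$, assume $u(x)<1$ and apply Jensen's inequality with respect to the normalized Lebesgue measure on $[u(x),1]$ (valid by concavity of $\Psi$):
\begin{align*}
    I_0(x) \leq (1-u(x))\,\Psi\!\left(\frac{1}{1-u(x)}\int_{u(x)}^1 \Prob{x'\sim\distr_x}{u(x')>t}\,\de t\right).
\end{align*}
A second application of Fubini identifies the inner integral as $\Exp{x'\sim\distr_x}{(u(x')-u(x))_+}$. The key elementary step is the inequality $(v-a)_+\leq v(1-a)$ for $v,a\in[0,1]$, easily checked by distinguishing $v\geq a$ (where it reduces to $a\geq va$) and $v<a$ (where both sides vanish or are nonnegative). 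Applied pointwise and taken in expectation, it yields $\Exp{x'\sim\distr_x}{(u(x')-u(x))_+}\leq (1-u(x))\Exp{x'\sim\distr_x}{u(x')}$; dividing by $1-u(x)$ and invoking monotonicity of $\Psi$ gives the claimed bound. The endpoint $u(x)=1$ is trivial since both sides vanish.

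The estimate for $I_1$ is completely symmetric: after Jensen on $[0,u(x)]$, a Fubini computation yields $\int_0^{u(x)}\Prob{x'\sim\distr_x}{u(x')\leq t}\,\de t = \Exp{x'\sim\distr_x}{(u(x)-u(x'))_+}$, and the analogous elementary inequality $(a-v)_+\leq a(1-v)$ together with monotonicity of $\Psi$ closes the estimate. I do not anticipate significant obstacles; the only items requiring care are the endpoint cases $u(x)\in\{0,1\}$ (handled trivially) and the Fubini applications, all of which are justified since $\Psi$ takes values in $[0,1]$ and the integrands are nonnegative.
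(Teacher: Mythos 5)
Your proof is correct and follows essentially the same route as the paper: after exchanging the order of integration, apply Jensen's inequality using concavity of $\Psi$, identify the inner integral as $\Exp{x'\sim\distr_x}{(u(x')-u(x))_+}$ via Fubini, bound this by $(1-u(x))\Exp{x'\sim\distr_x}{u(x')}$ using the elementary inequality $(v-a)_+\leq v(1-a)$ on $[0,1]$, and conclude by monotonicity of $\Psi$. The only cosmetic difference is that the paper assumes $\rho_1=0$ without loss of generality and treats a single term, whereas you handle both terms in parallel; the substance of the argument is identical.
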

\begin{proof}
    As in the proof of \Cref{prop:lsc_J_Psi} we assume without loss of generality that $\rho_1=0$.
    We compute
    \begin{align}
        \nonumber
        \ProbTV_\Psi(u)
        &=
        \int_0^1
        \ProbPer_\Psi(\Set{u\geq t})\de t
        \\\nonumber
        &=
        \int_\X 
        \int_0^1
        \one_{u(x) < t}
        \Psi
        \left(
        \Prob{{x'}\sim\distr_{x}}{u({x'})\geq t}
        \right)
        \de t
        \de\rho_0(x)
        \\\nonumber
        &=
        \int_\X 
        \int_0^1
        \one_{u(x) < t}
        \Psi
        \left(
        \Exp{{x'}\sim\distr_{x}}{\one_\Set{u\geq t}({x'})}
        \right)
        \de t
        \de\rho_0(x)
        \\\label{eq:TV_setting_proof}
        &=
        \int_\X 
        \int_{u(x)}^1
        \Psi
        \left(
        \Exp{{x'}\sim\distr_{x}}{\one_\Set{u\geq t}({x'})}
        \right)
        \de t
        \de\rho_0(x).
    \end{align}
    Since $\Psi$ is concave and non-decreasing, we get from \labelcref{eq:TV_setting_proof} and Jensen's inequality that
    \begin{align*}
        \ProbTV_\Psi(u)
        &\leq 
        \int_\X 
        \left(1-u(x)\right)
        \Psi\left(
        \frac{1}{1-u(x)}
        \int_{u(x)}^1
        \Exp{{x'}\sim\distr_{x}}{\one_\Set{u\geq t}({x'})}
        \de t
        \right)
        \de\rho_0(x)
        \\
        &=
        \int_\X 
        \left(1-u(x)\right)
        \Psi\left(
        \frac{1}{1-u(x)}
        \Exp{{x'}\sim\distr_{x}}{\int_{u(x)}^1\one_\Set{u\geq t}({x'})\de t}
        \right)
        \de\rho_0(x)
        \\
        &=
        \int_\X 
        \left(1-u(x)\right)
        \Psi\left(
        \frac{1}{1-u(x)}
        \Exp{{x'}\sim\distr_{x}}{{\big(u({x'})-u(x)\big)_+}}
        \right)
        \de\rho_0(x)
        \\
        &\leq
        \int_\X 
        \left(1-u(x)\right)
        \Psi\left(
        \frac{1}{1-u(x)}
        \Exp{{x'}\sim\distr_{x}}{u(x')(1-u(x))}
        \right)
        \de\rho_0(x)
        \\
        &=
        \int_\X 
        \left(1-u(x)\right)
        \Psi\left(
        \Exp{{x'}\sim\distr_{x}}{u({x'})}
        \right)
        \de\rho_0(x)
        =
        \ProbJ_\Psi(u).
    \end{align*}
\end{proof}
A remarkable consequence of this lower bound and the lower semicontinuity of $\ProbJ_\Psi$ from \Cref{prop:lsc_J_Psi} is the following proof of lower semicontinuity of $\ProbTV_\Psi$ for sequences of characteristic functions that doesn't use any abstract functional analysis machinery).

\begin{corollary}\label{cor:almost_lsc_ProbTV}
    Let $(A_n)\subset\A$ be a sequence of measurable sets such that $\one_{A_n}\wsto u$ in $L^\infty(\X;\nu)$.
    Then it holds
    \begin{align*}
        \ProbTV_\Psi(u) 
        \leq 
        \liminf_{n\to\infty}
        \ProbTV_\Psi(\one_{A_n}).
    \end{align*}
\end{corollary}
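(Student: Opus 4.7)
The plan is to obtain the inequality by chaining together three facts that have already been established in this subsection, rather than attempting a direct lower-semicontinuity argument for $\ProbTV_\Psi$ (which would be delicate because of the coarea integral). The key observation is that the functionals $\ProbJ_\Psi$ and $\ProbTV_\Psi$ agree on characteristic functions, since both equal $\ProbPer_\Psi$ by their defining formulas and by the coarea identity \labelcref{eq:ProbTV_Psi} applied to $u=\one_{A_n}$. Thus, along the sequence, the two relaxations coincide, even though they differ for general soft classifiers.

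First I would record the identity
\begin{align*}
    \ProbTV_\Psi(\one_{A_n}) = \ProbPer_\Psi(A_n) = \ProbJ_\Psi(\one_{A_n}).
\end{align*}
Next, since $\Psi$ is assumed concave and non-decreasing (the standing hypotheses for this part of the existence strategy) and in particular lower semicontinuous, I can apply \cref{prop:lsc_J_Psi} to the weakly-$\ast$ convergent sequence $\one_{A_n}\wsto u$ in $L^\infty(\X;\nu)$ to conclude
\begin{align*}
    \ProbJ_\Psi(u) \leq \liminf_{n\to\infty}\ProbJ_\Psi(\one_{A_n}) = \liminf_{n\to\infty}\ProbTV_\Psi(\one_{A_n}).
\end{align*}
Finally, the ordering from \cref{prop:TV_upper_bound} gives $\ProbTV_\Psi(u) \leq \ProbJ_\Psi(u)$, and concatenating the two inequalities yields the claim.

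There is really no obstacle beyond ensuring that the hypotheses of \cref{prop:lsc_J_Psi,prop:TV_upper_bound} are indeed in force in the setting of this corollary; the nontriviality of the statement lies entirely in the fact that the chain passes through the non-convex functional $\ProbJ_\Psi$, which serves as a bridge between the sequence (where $\ProbTV_\Psi$ and $\ProbJ_\Psi$ coincide) and the limit (where only $\ProbTV_\Psi \leq \ProbJ_\Psi$ is available). This is precisely the structural role $\ProbJ_\Psi$ plays in the proof of \cref{thm:existence_Psi} via the meta-theorem.
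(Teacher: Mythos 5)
Your proposal is correct and follows exactly the same chain of inequalities as the paper's proof: $\ProbTV_\Psi(u)\leq\ProbJ_\Psi(u)\leq\liminf_n\ProbJ_\Psi(\one_{A_n})=\liminf_n\ProbPer_\Psi(A_n)=\liminf_n\ProbTV_\Psi(\one_{A_n})$, using \cref{prop:TV_upper_bound} for the first inequality and \cref{prop:lsc_J_Psi} for the second. Your remark that the standing hypotheses of concavity and monotonicity on $\Psi$ must be in force (needed for \cref{prop:TV_upper_bound}) is also accurate and worth spelling out, since the corollary statement itself omits it.
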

\begin{proof}
    The result follows from \cref{prop:lsc_J_Psi,prop:TV_upper_bound} and the following computation
    \begin{align*}
        \ProbTV_\Psi(u) 
        &\leq 
        \ProbJ_\Psi(u)
        \leq 
        \liminf_{n\to\infty}
        \ProbJ_\Psi(\one_{A_n})
        =
        \liminf_{n\to\infty}
        \ProbPer_\Psi(A_n)
        \\
        &=
        \liminf_{n\to\infty}
        \ProbTV_\Psi(\one_{A_n}).
    \end{align*}
\end{proof}
Now we are ready to prove \cref{thm:existence_Psi,thm:existence_soft} as special cases of the meta-theorem in the beginning of this section.
\begin{proof}[Proof of \Cref{thm:existence_Psi} ]
The result follows from \cref{thm:meta} noticing that \cref{des:order} follows from \cref{prop:TV_upper_bound} and \cref{des:LSC} follows from \cref{prop:lsc_J_Psi} together with the trivial lower semicontinuity of the standard risk $u\mapsto\Exp{(x,y)\sim\mu}{\abs{u(x)-y}}$ since $\rho \ll \nu$.  
\end{proof}

We remark that the main issue with proving the existence result of \cref{thm:existence_Psi} for non-concave functions $\Psi$ is that the ordering \cref{des:order} is violated which was essential for the proof of \cref{thm:meta}.

However, if we already consider the relaxed problem \labelcref{eq:relaxed_problem} of optimizing over soft classifiers instead of characteristic functions and regularizing with $\ProbJ_\Psi$, no ordering is necessary and we obtain existence for very general functions $\Psi$.
\begin{proof}[Proof of \Cref{thm:existence_soft}]
    The result is the first part of \cref{thm:meta} for the choices made in the proof of \Cref{thm:existence_Psi}.
\end{proof}

\red 
\begin{remark}
\label{rem:EqualitySoftHard}
From the proof of \cref{thm:meta} it follows that when $\Psi$ is concave and non-decreasing, then 
\[ \min_{A \in \A} \ProbRisk_\Psi(A) = \min_{ u \in \mathcal{U}} \ProbSRisk_{\Psi}(u),
\]
where
\begin{equation}
\label{def:SRisk}
\ProbSRisk_\Psi(u):= \Exp{(x,y)\sim\mu}{\abs{u(x)-y}} + \ProbJ_\Psi(u).
\end{equation}
\end{remark}

\nc

\red 

\medskip

Next we discuss the existsence results for the original PRL model.

\begin{proof}[Proof of \Cref{thm:existence_PsiOriginal}]
    We apply \cref{thm:meta} with the choices $R= \Risk_\Psi$ and 
   $S= S_\Psi$, where $S_\Psi$ is defined over soft classifiers $u:\X\to[0,1]$ according to:
    \begin{align}
        \SRisk_\Psi(u) := \int_\X \Psi\left(\Exp{x'\sim\distr_x}{u(x')}\right)
        \de\rho_0(x)
        +
        \int_\X \Psi\left(\Exp{x'\sim\distr_x}{1-u(x')}\right)
        \de\rho_1(x),
        \label{def:SRisk2}
    \end{align}
    which by definition satisfies $\SRisk_\Psi(\one_A)=\Risk_\Psi(A)$. We also consider the functional
    \begin{align*}
        T(u) := \int_0^1 R(\Set{u\geq t})\de t.
    \end{align*}
   As topology $\mathcal{T}$ we choose the weak-* topology of $L^\infty(\X;\sigma)$ where the probability measure $\sigma$ is as in \labelcref{eq:sigma}.
    
It is straightforward to check that the quadruple $\mathfrak Q := (R,S,T,\mathcal{T})$ satisfies all five requirements for \cref{thm:meta} to be applicable:
    (1) is clear, (2) is proved as \cref{prop:lsc_J_Psi} using \cref{lem:pointwise_cvgc} with $\sigma$ instead of $\nu$, (3) and (4) are trivial consequences of the definitions, and (5) is an easy consequence of the layer cake representation and Jensen's inequality for concave $\Psi$. Note that (5) is only true if $\Psi$ is concave.
\end{proof}

\begin{proof}[Proof of \Cref{thm:existence_soft_2}]
    The result is the first part of \cref{thm:meta} for the choices made in the proof of \cref{thm:existence_PsiOriginal}. 
\end{proof}

\red 
\begin{remark}
\label{rem:EqualitySoftHard2}
As in \cref{rem:EqualitySoftHard2}, it is easy to verify that when $\Psi$ is concave and non-decreasing, then 
\[ \min_{A \in \A} \Risk_\Psi(A) = \min_{ u \in \mathcal{U}} \SRisk_{\Psi}(u).
\]
\end{remark}

\nc

\section{The functional \texorpdfstring{$\ProbPer_\Psi$}{ProbPerPsi} as a perimeter}
\label{sec:Perimeters}

In this section we shall discuss the interpretation of the functional $\ProbPer_\Psi$ defined in \labelcref{eq:ProbPerPsi} as a \emph{perimeter}. We do this in two ways. 

First, we focus on the case where $\Psi$ is concave and non-decreasing and prove that $\ProbPer_\Psi$ is a \emph{submodular functional}. If, in addition, $\Psi$ is assumed to satisfy $\Psi(0)=0$, then 
$$\ProbPer_\Psi(\X) = \ProbPer_\Psi(\emptyset) =0.$$
Following \cite{chambolle2015nonlocal}, for $\Psi$ satisfying these properties one can interpret $\ProbPer_\Psi$ as a generalized perimeter, i.e., a functional that can be used to measure the ``size'' of the boundary of a set. This discussion is summarized in the next proposition.

\begin{proposition}\label{prop:submodularity}
If $\Psi(0)=0$, then $\ProbPer_\Psi(\X) = \ProbPer_\Psi(\emptyset) =0$. If $\Psi$ is concave and non-decreasing, then the functional $\ProbPer_{\Psi}$ is submodular, meaning that
\begin{align*}
    \ProbPer_\Psi(A\cup B)
    +
    \ProbPer_\Psi(A\cap B)
    \leq 
    \ProbPer_\Psi(A)
    +
    \ProbPer_\Psi(B)
    \quad\forall A,B\in\A.
\end{align*}
\end{proposition}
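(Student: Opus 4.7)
Proof proposal. The first assertion is immediate: when $A=\emptyset$ the first integral has integrand $\Psi(\distr_x(\emptyset))=\Psi(0)=0$ and the second is over an empty set, so $\ProbPer_\Psi(\emptyset)=0$; symmetrically $\ProbPer_\Psi(\X)=0$. So the substantive content is submodularity. The plan is to establish the inequality pointwise in $x$ separately for the ``$\rho_0$-integrand'' and the ``$\rho_1$-integrand'', then integrate.

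Write $f_A(x):=\distr_x(A)$, so that $f_{A\cup B}+f_{A\cap B}=f_A+f_B$ with $f_{A\cap B}\le \min(f_A,f_B)\le\max(f_A,f_B)\le f_{A\cup B}$. The $\rho_0$-integrand of $\ProbPer_\Psi(\cdot)$ at $x$ is $\one_{E^c}(x)\Psi(f_E(x))$. I would like to show, for every $x\in\X$,
\begin{equation*}
\one_{(A\cup B)^c}(x)\Psi(f_{A\cup B}(x))+\one_{(A\cap B)^c}(x)\Psi(f_{A\cap B}(x))\le \one_{A^c}(x)\Psi(f_A(x))+\one_{B^c}(x)\Psi(f_B(x)),
\end{equation*}
and split into four cases according to the position of $x$ with respect to $A$ and $B$. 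If $x\in A\cap B$, all four indicators vanish. If $x\in A^c\cap B^c$, all four equal $1$ and the claim reduces to $\Psi(f_{A\cup B})+\Psi(f_{A\cap B})\le\Psi(f_A)+\Psi(f_B)$, which follows from the elementary fact that for any concave $\Psi$ and any quadruple $a\le c,d\le b$ with $a+b=c+d$ one has $\Psi(a)+\Psi(b)\le\Psi(c)+\Psi(d)$ (apply this with $a=f_{A\cap B}$, $b=f_{A\cup B}$, $c=f_A$, $d=f_B$). If $x\in A\setminus B$ (the case $x\in B\setminus A$ being symmetric), the inequality collapses to $\Psi(f_{A\cap B}(x))\le\Psi(f_B(x))$, which follows from monotonicity of $\Psi$ and $f_{A\cap B}\le f_B$.

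For the $\rho_1$-integrand $\one_E(x)\Psi(1-f_E(x))$, I would run exactly the same four-case argument after setting $\tilde\Psi(t):=\Psi(1-t)$, which is still concave (though now non-increasing). In the all-indicators-$1$ case (now $x\in A\cap B$), concavity of $\tilde\Psi$ via the same rearrangement lemma yields $\tilde\Psi(f_{A\cup B})+\tilde\Psi(f_{A\cap B})\le\tilde\Psi(f_A)+\tilde\Psi(f_B)$; in the one-indicator-$1$ case (e.g.\ $x\in A\setminus B$) the reduction is to $\Psi(1-f_{A\cup B}(x))\le\Psi(1-f_A(x))$, which follows from $f_{A\cup B}\ge f_A$ and the monotonicity of $\Psi$.

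Adding the two pointwise inequalities and integrating against $\rho_0$ and $\rho_1$ respectively gives the submodularity. The mildly delicate step is the concave rearrangement lemma used in case 2, which I would prove quickly by writing $c=\lambda a+(1-\lambda)b$ with $\lambda\in[0,1]$ and noting that then $d=(1-\lambda)a+\lambda b$, so that concavity of $\Psi$ applied to both convex combinations and summed yields the required inequality. Everything else is bookkeeping.
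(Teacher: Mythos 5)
Your proof is correct and follows essentially the same route as the paper: establish the pointwise submodularity of the $\rho_0$-integrand by a four-case analysis on the position of $x$ relative to $A$ and $B$, invoking a concave rearrangement lemma in the all-indicators-one case and monotonicity of $\Psi$ in the mixed cases, then integrate. The two minor differences worth noting: you treat the $\rho_1$-integrand explicitly via $\tilde\Psi(t):=\Psi(1-t)$, whereas the paper only spells out the $\rho_0$-case (leaving the dual case to the reader); and you use the exact additivity $f_{A\cup B}+f_{A\cap B}=f_A+f_B$, so your rearrangement lemma needs only concavity of $\Psi$, while the paper's auxiliary lemma (\cref{lem:AuxConcave}) is formulated with an inequality $a+a'\leq b+b'$ and therefore also needs $\Psi$ non-decreasing — a slightly weaker lemma than yours in this setting, though monotonicity is used anyway in the mixed cases.
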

\begin{example}
    For $\Psi(t)=t$ our perimeter reduces to the perimeter on the \emph{random walk space} $(\X,\distr)$, introduced in \cite{mazon2020total}: $\ProbPer_\Psi(A)=\int_{\X\setminus A}\int_{A}\de\distr_x\de\rho_0(x)+\int_{A}\int_{\X\setminus A}\de\distr_x\de\rho_1(x)$.
\end{example}
For proving \cref{prop:submodularity} we need the following lemma.
\begin{lemma}
\label{lem:AuxConcave}
Let $\Psi:[0,\infty) \rightarrow \R$ be a concave and non-decreasing function, and let $0 \leq a \leq b \leq b' \leq a'$ be real numbers with $a+ a'\leq  b+ b' $. Then
\begin{align*} \Psi(a) + \Psi(a') \leq \Psi(b) + \Psi(b'). \end{align*}
\end{lemma}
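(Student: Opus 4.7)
The plan is to leverage the two hypotheses separately: concavity will handle the ``shift'' from the upper interval $[b',a']$ down to the lower interval starting at $a$, while monotonicity will absorb the slack coming from the inequality $a+a'\leq b+b'$.

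First I would rewrite everything in additive form. Set $t := b-a\geq 0$ and $s := a'-b' \geq 0$. The assumption $a+a'\leq b+b'$ then reads precisely $s \leq t$, and the inequality to prove becomes
\begin{equation*}
  \Psi(b'+s) - \Psi(b') \;\leq\; \Psi(a+t) - \Psi(a).
\end{equation*}
So the task reduces to bounding a forward increment of $\Psi$ of length $s$ based at $b'$ by a forward increment of length $t\geq s$ based at $a\leq b'$.

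Next I would introduce an intermediate quantity $\Psi(a+s) - \Psi(a)$ and split the desired inequality as a chain of two easier inequalities. The first one,
\begin{equation*}
\Psi(b'+s) - \Psi(b') \;\leq\; \Psi(a+s) - \Psi(a),
\end{equation*}
is the standard consequence of concavity that the forward difference quotient $h \mapsto \frac{\Psi(x+h)-\Psi(x)}{h}$ is non-increasing in $x$ (equivalently, forward increments of fixed length shrink as the base point moves right). Since $a \leq b \leq b'$ and $s\geq 0$, this applies. The second one,
\begin{equation*}
\Psi(a+s) - \Psi(a) \;\leq\; \Psi(a+t) - \Psi(a),
\end{equation*}
is immediate from $s\leq t$ together with $\Psi$ being non-decreasing. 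Chaining the two gives exactly the reformulated inequality, which rearranges back to $\Psi(a)+\Psi(a')\leq \Psi(b)+\Psi(b')$.

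There is no real obstacle, but the one point to be careful about is to justify the concavity step cleanly. I would do so by writing $a+s$ as a convex combination of $a$ and $b'+s$ (namely $a+s = \lambda a + (1-\lambda)(b'+s)$ with $\lambda = \frac{b'-a-s+s}{b'-a}$ when $b'>a$; the case $b'=a$ is trivial) and, analogously, $b'$ as a convex combination of $a$ and $b'+s$; applying concavity to both and subtracting yields the claimed inequality. This is the only place where concavity (as opposed to mere monotonicity) is essential, and it is exactly the ingredient that makes the estimate go through despite $a' \geq b'$.
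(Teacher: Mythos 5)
Your proof is correct and takes a genuinely different route from the paper. The paper invokes the integral representation $\Psi(s)=\Psi(a)+\int_a^s\Psi'(r)\,\de r$ with $\Psi'$ non-increasing and non-negative, then sandwiches the two increments around the single value $(b-a)\Psi'(b)\geq(a'-b')\Psi'(b)$; both concavity (via monotonicity of $\Psi'$) and monotonicity (via non-negativity of $\Psi'$) are blended into one chain of inequalities. You instead factor the inequality cleanly into two steps with separated ingredients: a pure concavity step (forward increments of fixed length $s$ shrink as the base point moves right from $a$ to $b'$) followed by a pure monotonicity step ($\Psi(a+s)\leq\Psi(a+t)$ since $s\leq t$). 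Your decomposition is more elementary — it avoids appealing to the fundamental theorem of calculus and the existence of a one-sided derivative — and it makes the role of each hypothesis transparent. One small slip: your displayed formula for the convex-combination coefficient, $\lambda=\frac{b'-a-s+s}{b'-a}$, simplifies to $1$, which cannot be right; the correct value is $\lambda=\frac{b'-a}{b'-a+s}$ (and correspondingly $b'=\mu a+(1-\mu)(b'+s)$ with $\mu=\frac{s}{b'-a+s}$, so that $\lambda+\mu=1$, which is exactly what makes the two concavity inequalities add up to the desired bound). With that coefficient fixed, the argument is complete.
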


\begin{proof}
Let $a,a',b,b'$ be as stated. Since $\Psi$ is concave and finite, it satisfies the fundamental theorem of calculus and thus it is possible to write
\begin{align*} \Psi(s) =  \Psi(a) + \int_a^s \Psi'(r) \de r , \quad s \geq a \end{align*}
for a function $\Psi'$ that is non-increasing and non-negative. It follows that
\begin{align*}
   \Psi(b) - \Psi(a)  = \int_{a}^b \Psi'(r) \de r  \geq (b-a) \Psi'(b)  \geq (a'-b') \Psi'(b) \geq \int_{b'}^{a'} \Psi'(r) \de r = \Psi(a') - \Psi(b'),
\end{align*}
which is precisely what we wanted to show.
\end{proof}

With the above preliminary results in hand, we are ready to prove \Cref{prop:submodularity}.
\begin{proof}[Proof of \Cref{prop:submodularity}]
First, the fact that $\ProbPer_\Psi(\X)=\ProbPer_\Psi(\emptyset)=0$ if $\Psi(0)=0$ is easy to see from the definition of $\ProbPer_\Psi$. Second, we trivially have
\begin{align*}
    \Prob{{x'} \sim \distr_{x}}{{x'}\in A\cup B}
    +
    \Prob{{x'} \sim \distr_{x}}{{x'}\in A\cap B}
    \leq 
    \Prob{{x'} \sim \distr_{x}}{{x'}\in A}
    + 
    \Prob{{x'} \sim \distr_{x}}{{x'}\in B}.  
\end{align*}
Define
\begin{alignat*}{2}
    &a' := \Prob{{x'} \sim \distr_{x}}{{x'}\in A\cup B},
    \qquad
    &&b' := \Prob{{x'} \sim \distr_{x}}{{x'}\in B}
    \\
    &b\phantom{'}
    := \Prob{{x'} \sim \distr_{x}}{{x'}\in A},
    \qquad
    &&a\phantom{'} := \Prob{{x'} \sim \distr_{x}}{{x'}\in A\cap B};
\end{alignat*}
without the loss of generality we can assume that $b$ and $b'$ defined above satisfy $b\leq b'$, for otherwise we can simply swap these labels. We can then use \Cref{lem:AuxConcave} to conclude that:
\begin{align}\label{eq:submodularity_Psi}
    \begin{split}
    &\phantom{{}={}}
    \Psi\left(\Prob{{x'} \sim \distr_{x}}{{x'}\in A\cup B}\right)
    +
    \Psi\left(\Prob{{x'} \sim \distr_{x}}{{x'}\in A\cap B}\right)
    \\
    &\leq 
    \Psi\left(\Prob{{x'} \sim \distr_{x}}{{x'}\in A}\right)
    + 
    \Psi\left(\Prob{{x'} \sim \distr_{x}}{{x'}\in B}\right).  
    \end{split}
\end{align}
The submodularity follows directly once we have verified the following pointwise identity:
 \begin{align}\label{eq:pointwise_submod}
    \begin{split}
    &\phantom{{}={}}
    \one_{x\in (A\cup B )^c} 
    \Psi\left(\Prob{{x'} \sim \distr_{x}}{{x'}\in A\cup B}\right)
    +
    \one_{x\in(A\cap B )^c} 
    \Psi\left(\Prob{{x'} \sim \distr_{x}}{{x'}\in A\cap B}\right)
    \\
    &\leq
    \one_{x\in A^c} 
    \Psi\left(\Prob{{x'} \sim \distr_{x}}{{x'}\in A}\right)
    +
    \one_{x\in B^c} 
    \Psi\left(\Prob{{x'} \sim \distr_{x}}{{x'}\in B}\right). 
    \end{split}
 \end{align}
 To do this we consider two complementary cases:
 
 \textbf{Case 1, $x\in(A\cup B)^c$:}
 This is equivalent to $x\in A^c\cap B^c$.
 Furthermore, since $(A\cup B)^c\subset(A\cap B)^c$ we also have that $x\in(A\cap B)^c$.
 Hence, all indicator functions in \labelcref{eq:pointwise_submod} take the value one and \labelcref{eq:pointwise_submod} is the same as \labelcref{eq:submodularity_Psi}, which we have already verified.
 
 \textbf{Case 2, $x\in A\cup B$:}
 In this case the first indicator function on the left hand side of \labelcref{eq:pointwise_submod} is zero.
 
 \textbf{Case 2.1, $x\in A\cap B$:}
 In this subcase all indicator functions are equal to zero and the inequality is trivially satisfied.
 
 \textbf{Case 2.2, $x\in A\cup B \setminus (A\cap B)$:}
 Without loss of generality we can assume that $x\in A\setminus B = A \cap B^c$. 
 In this case only the second indicator function on the left hand side and the second one on the right hand side of \labelcref{eq:pointwise_submod} take the value one and the inequality reduces to the trivial inequality
 \begin{align*}
     \Psi\left(\Prob{{x'} \sim \distr_{x}}{{x'}\in A\cap B}\right)
     \leq 
     \Psi\left(\Prob{{x'} \sim \distr_{x}}{{x'}\in B}\right)
 \end{align*}
 which is true since $\Psi$ is non-decreasing and $A\cap B \subset B$.
\end{proof}

\red 
\begin{remark}
From the submodularity of $\ProbPer_\Psi$ one can show that the functional $\ProbTV_\Psi$ defined in \labelcref{eq:ProbTV_Psi} is convex. Indeed, this can be proved following the proof of Proposition 3.4 in \cite{chambolle2010continuous}, where, actually, we do not need to assume the lower semicontinuity of the functional a priori. 
\end{remark}
\nc 

Next, we consider rather general $\Psi$ and show that $\ProbPer_\Psi$ is related to a standard \emph{local} perimeter when $\X= \R^d$ and the probability measure $\distr_x$ localizes to a Dirac delta $\delta_x$; for the case of adversarial training such a connection was proved in \cite{bungert2024gamma}, where the authors utilized the notion of $\Gamma$-convergence of functionals. We take a first step in this direction by proving that for sufficiently smooth sets the probabilistic perimeter converges to a local one if the family of probability distributions $\distr_x$ localizes suitably. 
For example, one could think of $\distr_x := \operatorname{Unif}(B_\eps(x))$, which converges to a point mass at $x$ if $\eps\to 0$.
To make our setting precise, we pose the following general assumption:
\begin{assumption}\label{ass:kernel}
    We assume that $\mathcal{X}=\mathbb{R}^d$, $\Psi(0)=0$, $\Psi$ is Borel measurable and bounded, and $\rho_1, \rho_0$ have continuous densities with respect to the Lebesgue measure which we shall also denote as $\rho_1, \rho_0$.   
    Furthermore, we assume that there is $\eps>0$ and a measurable function $K:\mathcal{X}\times\R^d\to[0,\infty)$ such that  
     for every $x\in\R^d$ we have the representation
    \begin{align*}
        \de\distr_x(x') = \eps^{-d} K\left(x,\frac{x'-x}{\eps}\right)\de x'.
    \end{align*}
    We also assume that  
    for every $x\in \mathcal{X}$ we have $K(x, \bullet)\in L^1(\R^d)$, $\int_{\R^d} K(x,z)\de z=1$, $K(x,z)=0$ if $\abs{z}>1$, and that for every $z\in \R^d$ the mapping $x\mapsto K(x,z)$ is continuous.
\end{assumption}
In the more difficult case when $\Psi$ is not necessarily continuous, we shall need some additional assumptions on the kernel.
\begin{assumption}\label{ass:kernel_extra}
    For our extra assumptions on the kernel, we shall require that for every $z\in \R^d$ the mapping $x\mapsto K(x,z)$ is $C^1$ and for every $x\in \R^d$ the mapping $z\mapsto K(x,z)$ is lower semicontinuous.  Furthermore, for all $x\in \R^d$, $t\in (-1,1)$, and $n\in S^{d-1}$, the Radon transform of $K$ with respect to the $z$-variable
    \[
    \mathcal{R}(K(x,\cdot))(n,t):=\int_{\{z\cdot n=t\}} K(x,z) \de \mathcal H^{d-1}(z)
    \]
    is strictly positive.
\end{assumption}
The following theorem derives the asymptotics of the probabilistic perimeter as $\eps\to 0$ and is the rigorous counterpart of \cref{formalthm:asymptotics}. 
\begin{theorem}\label{thm:consistency}
Let $\X = \R^d$. Under \cref{ass:kernel}, if $ A$ has a compact $C^{1,1}$ boundary and either $\Psi$ is continuous or $K$ satisfies the additional \cref{ass:kernel_extra}, then 
\begin{align}\label{eq:limit_perimeter}
    \lim_{\epsilon\to 0} \frac{1}{\epsilon}\ProbPer_{\Psi}(A)=\int_{\partial A} 
    \sigma_{0,\Psi}\left[x,n(x)\right]\rho_0(x)
    +
    \sigma_{1,\Psi}\left[x, n(x)\right]
    \rho_1(x) \de \mathcal{H}^{d-1}(x),
\end{align}
where we let $n(x)$ denote the normal to $\partial A$ at a point $x\in \partial A$, and for any vector $v\in \mathbb{R}^d$ we define
\begin{align*}
\sigma_{\Psi}^0\left[x, v\right]
:=
\int_0^{1} \Psi\left(\int_{\{z\cdot v\leq -t\}} K(x,z)\de z\right)\de t,
\quad
\sigma_{\Psi}^1\left[x, v\right]
:=
\int_{0}^1 \Psi\left(\int_{\{z\cdot v\geq t\}} K(x,z)\de z\right)\de t.
\end{align*}
\end{theorem}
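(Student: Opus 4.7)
}
The plan is to localize the integral defining $\ProbPer_\Psi(A)$ to a tubular neighborhood of $\partial A$, introduce normal coordinates, rescale by $\eps$, and then pass to the limit. First, using the substitution $z = (x'-x)/\eps$ I rewrite the relevant probability as
\begin{align*}
    \Prob{x'\sim\distr_x}{x'\in A} = \int_{\R^d}\one_A(x+\eps z)\,K(x,z)\,\de z,
\end{align*}
and observe that since $\supp K(x,\cdot)\subset\overline{B_1(0)}$ and $\Psi(0)=0$, only points $x$ with $d(x,\partial A)<\eps$ contribute to either term of $\ProbPer_\Psi(A)$. Because $\partial A$ is compact and $C^{1,1}$, there exists $\eps_0>0$ such that the map $\Phi(y,s):=y+sn(y)$ is a bi-Lipschitz diffeomorphism from $\partial A\times(-\eps_0,\eps_0)$ onto an open neighborhood of $\partial A$, with Jacobian $J(y,s)=\prod_i(1+s\kappa_i(y))\to 1$ as $s\to 0$ uniformly.

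For $\eps<\eps_0$ I express the first summand in $\ProbPer_\Psi(A)$ as
\begin{align*}
    \frac{1}{\eps}\int_{\partial A}\int_0^\eps \Psi\bigl(P_\eps(y,s)\bigr)\,\rho_0(y+sn(y))\,J(y,s)\,\de s\,\de\mathcal{H}^{d-1}(y),
\end{align*}
with $P_\eps(y,s):=\int_{\R^d}\one_A(y+sn(y)+\eps z)\,K(y+sn(y),z)\,\de z$, and then rescale via $s=\eps t$ so that $\de s = \eps\,\de t$ cancels the $1/\eps$ prefactor. The $C^{1,1}$ regularity lets me write $A$ locally near $y$ as $\{w : (w-y)\cdot n(y)+\phi_y(w)\leq 0\}$ with $|\phi_y(w)|\leq C|w-y|^2$, which implies the two-sided bound
\begin{align*}
    \int_{\{z\cdot n(y)\leq -t-C\eps\}}\!\!\!K(y_\eps,z)\,\de z\;\leq\; P_\eps(y,\eps t)\;\leq\;\int_{\{z\cdot n(y)\leq -t+C\eps\}}\!\!\!K(y_\eps,z)\,\de z,
\end{align*}
where $y_\eps := y+\eps t\,n(y)$. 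Using the continuity of $x\mapsto K(x,z)$ together with $K(y,\cdot)\in L^1(\R^d)$, both the lower and upper bounds converge to $q(y,t):=\int_{\{z\cdot n(y)\leq -t\}}K(y,z)\,\de z$ as $\eps\to 0$, uniformly in $t$ for fixed $y$ (and uniformly on $\partial A\times[0,1]$ using compactness). This uniform convergence of the argument, together with the uniform boundedness and continuity of the remaining factors $\rho_0\circ\Phi$ and $J$, yields by dominated convergence, in the case that $\Psi$ is continuous,
\begin{align*}
    \int_{\partial A}\int_0^1\Psi(q(y,t))\,\rho_0(y)\,\de t\,\de\mathcal{H}^{d-1}(y)=\int_{\partial A}\sigma_\Psi^0[y,n(y)]\,\rho_0(y)\,\de\mathcal{H}^{d-1}(y).
\end{align*}
The second summand is handled symmetrically with $s\in(-\eps,0)$ and $t=-s/\eps$, producing the $\sigma_\Psi^1$ contribution weighted by $\rho_1$; summing the two gives \labelcref{eq:limit_perimeter}.

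The main obstacle is the case where $\Psi$ is merely Borel measurable and bounded, since then pointwise (even uniform) convergence of the argument does not automatically pass through $\Psi$. This is where \cref{ass:kernel_extra} is used: by the layer-cake identity $q(y,t)=\int_{-1}^{-t}\mathcal{R}(K(y,\cdot))(n(y),s)\,\de s$, the strict positivity of the Radon transform forces $t\mapsto q(y,t)$ to be absolutely continuous with a strictly negative derivative bounded away from zero on compact subsets of $(-1,1)$. Consequently, for the sandwich $q(y,t\pm C\eps)$ one can perform a change of variables $u=q(y,t)$ with an AC and non-vanishing Jacobian, pushing forward the Lebesgue measure on $[0,1]$ to a measure on the range of $q(y,\cdot)$ that is absolutely continuous with a density bounded uniformly in $y$; one then obtains
\begin{align*}
    \int_0^1\Psi(q(y,t\pm C\eps))\,\de t \;\longrightarrow\; \int_0^1\Psi(q(y,t))\,\de t\quad\text{as }\eps\to 0
\end{align*}
by absolute continuity of the integral with respect to the translation $\pm C\eps$, uniformly in $y$. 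Combining this with the lower/upper semicontinuity of $\Psi$ (attained as the LSC/USC envelopes sandwich any bounded Borel $\Psi$ together with the AC pushforward) gives the desired limit also in the non-continuous case.
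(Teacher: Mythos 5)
Your argument for the continuous-$\Psi$ case matches the paper's (normal coordinates along $\partial A$, rescaling $s = \eps t$, two-sided estimate on $P_\eps(y,\eps t)$ from the $C^{1,1}$ Taylor expansion, dominated convergence), so that part is fine. The gap is in the discontinuous case, and it is not small.

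First, the sandwich $q(y,t+C\eps) \leq P_\eps(y,\eps t) \leq q(y,t-C\eps)$ cannot be pushed through $\Psi$, because $\Psi$ is not assumed monotone. So knowing $\int_0^1 \Psi(q(y,t\pm C\eps))\,\de t \to \int_0^1\Psi(q(y,t))\,\de t$ tells you nothing about $\int_0^1\Psi(P_\eps(y,\eps t))\,\de t$; the translation argument you invoke only controls the wrong quantity. Second, the LSC/USC envelope step fails: for a general bounded Borel $\Psi$ the envelopes $\Psi_*$ and $\Psi^*$ can differ from $\Psi$ and from each other on a set of \emph{positive} Lebesgue measure (take $\Psi$ to be the indicator of a fat Cantor set: then $\Psi^*=\Psi$ but $\Psi_*\equiv 0$). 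Consequently, even if you close the sandwich via Fatou/reverse Fatou to obtain $\int\Psi_*(q)\leq\liminf\int\Psi(P_\eps)\leq\limsup\int\Psi(P_\eps)\leq\int\Psi^*(q)$, the outer two integrals are generally unequal and strictly bracket $\int\Psi(q)$, so the argument does not pin down the limit. Third, the assertion that the pushforward density of $\de t$ under $q(y,\cdot)$ is bounded uniformly in $y$ is false near $t=1$: $\partial_t q(y,t) = -\mathcal{R}(K(y,\cdot))(n(y),-t)$ tends to $0$ as $t\to 1$ (since $K$ vanishes outside the unit ball), so the pushforward density blows up near the value $0$ of $q$.

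The paper avoids all three problems by approximating $\Psi$ by continuous $\Psi_n$ with $\Psi_n\to\Psi$ pointwise a.e.\ and $\|\Psi_n\|_\infty\leq\|\Psi\|_\infty$, and then proving that the pushforward measures $\mu_\eps = a_{\eps\,\#}(\mathcal{H}^{d-1}\otimes\mathcal{L}_{[0,1]})$ are \emph{uniformly integrable} (not uniformly bounded-density). Uniform integrability is enough: for any $\delta>0$ one splits $[0,1]=[0,1-\delta]\cup[1-\delta,1]$, uses the strict positivity and compactness argument on the Radon transform to get $|\partial_t a_\eps|\geq c_\delta/2$ on $[0,1-\delta]$ (hence a uniformly bounded density there), and absorbs the $[1-\delta,1]$ part by an $O(\delta)$ mass bound. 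Together with Egorov's theorem this controls $\int(\Psi-\Psi_n)\,\de\mu_\eps$ uniformly in $\eps$, which is exactly what's needed. I'd recommend replacing your envelope argument with this approximation-plus-uniform-integrability scheme, or at minimum restricting your theorem to $\Psi$ whose discontinuity set has Lebesgue measure zero (in which case the envelopes do coincide with $\Psi$ a.e.\ and your argument can be salvaged, though you'd still need the $\delta$-truncation near $t=1$).
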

\begin{remark}\label{rem:perimeter}
If $K$ is radially symmetric and independent of $x\in \mathcal{X}$, then $\sigma_{\Psi}^0=\sigma_{\Psi}^1=:\sigma_\Psi$ is just a constant. 
E.g., for $K(x,z) := \abs{B_1(0)}^{-1}\one_{\abs{z}\leq 1}$ and $\Psi(t) = \one_{t>p}$ it is trivial that for $p=0$ we have $\sigma_\Psi = 1$.
However, for $p\geq\tfrac12$ one easily sees $\sigma_\Psi = 0$, hence the limiting perimeter equals zero and there is no regularization effect.
Using the function $\Psi(t) = \min\left\lbrace t/p,1\right\rbrace$ corrects this degeneracy.
\end{remark}

Notably, for radially symmetric $K$ the limiting perimeter in \labelcref{eq:limit_perimeter} coincides, provided $\sigma_\Psi>0$, with the one derived for adversarial training (problem \labelcref{eq:AT_binary}) in \cite{bungert2024gamma}, although they considered more general (potentially discontinuous) densities $\rho_i$.
In particular, our result indicates that for very small adversarial budgets the regularization effect of both probabilistically robust learning and adversarial training is dominated by the perimeter in \labelcref{eq:limit_perimeter}.
While \cref{thm:consistency} already completes half of the proof (namely the limsup inequality) of $\Gamma$-convergence of $\tfrac1\eps\ProbPer_\Psi$ to the limiting perimeter, the remaining liminf inequality is beyond the scope of this paper.
Proving that the convergence \labelcref{eq:limit_perimeter} does not only hold for sufficiently smooth sets as assumed in \cref{thm:consistency} but even in the sense of $\Gamma$-convergence is an extremely important topic for future work since only $\Gamma$-convergence allows to deduce from the convergence of the perimeters that also the solutions of probabilistically robust learning converge to certain regular Bayes classifiers as $\eps\to 0$, see \cite[Section 4.2]{bungert2024gamma}. The exploration of this is left for future work. 

\begin{proof}[Proof of \Cref{thm:consistency}]
Under\Cref{ass:kernel}, a simple change of variables shows
\begin{align*}
 \ProbPer_{\Psi}(A)
 &=
 \int_{A^c}  \Psi\left(\int_{\mathbb{R}^d} \mathbf{1}_{A}(x+\epsilon z)K(x,z)\de z\right) \rho_0(x)\de x
 \\
 &\qquad 
 +\int_{A}  \Psi\left(\int_{\mathbb{R}^d} \mathbf{1}_{A^c}(x+\epsilon z)K(x,z)\de z\right) \rho_1(x)\de x.
\end{align*}
Let $\tau:\mathbb{R}^d\to\mathbb{R}$ be the signed distance function to $\partial A$ such that $\tau(x)\leq 0$ for $x\in A$.  
Using $\tau$, we can rewrite the previous line as
\begin{align*}
    \ProbPer_{\Psi}(A)
     &=
    \int_{\{\tau(x)\geq 0\}}  \Psi\left(\int_{\{\tau(x+\epsilon z)\leq 0\}} K(x,z)\de z\right) \rho_0(x)\de x\\
    &\qquad 
    +\int_{\{\tau(x)\leq 0\}}  \Psi\left(\int_{\{\tau(x+\epsilon z)\geq 0\}} K(x,z)\de z\right) \rho_1(x)\de x.
\end{align*}
Recalling that $K(x,z)=0$ whenever $|z|>1$, it follows that
\begin{align*}
    \ProbPer_{\Psi}(A)
     &=
    \int_{\{0\leq \tau(x)\leq \epsilon\}}  \Psi\left(\int_{\{\tau(x+\epsilon z)\leq 0\}} K(x,z)\de z\right) \rho_0(x)\de x\\
    &\qquad 
    +\int_{\{-\epsilon\leq \tau(x)\leq 0\}}  \Psi\left(\int_{\{\tau(x+\epsilon z)\geq 0\}} K(x,z)\de z\right) \rho_1(x)\de x.
\end{align*}
In the rest of what follows, we will focus on proving that 
\[
\lim_{\epsilon\to 0}\frac{1}{\epsilon}\int_{\{0\leq \tau(x)\leq \epsilon\}}  \Psi\left(\int_{\{\tau(x+\epsilon z)\leq 0\}} K(x,z)\de z\right) \rho_0(x)\de x=\int_{\partial A} 
\sigma_{0,\Psi}\left[x,n(x)\right]\rho_0(x) \de \mathcal{H}^{d-1}(x),
\]
the argument for showing that 
\[
  \lim_{\epsilon\to 0} \frac{1}{\epsilon}\int_{\{-\epsilon\leq \tau(x)\leq 0\}}  \Psi\left(\int_{\{\tau(x+\epsilon z)\geq 0\}} K(x,z)\de z\right) \rho_1(x)\de x= \int_{\partial A} \sigma_{1,\Psi}\left[x, n(x)\right]
    \rho_1(x) \de \mathcal{H}^{d-1}(x)
\]
will be identical.

Since $\partial A$ is $C^{1,1}$, there exists some $\epsilon_0>0$ and such that for all $\epsilon<\epsilon_0$ the map   $T_{\epsilon}(y,t):\partial A\times [-1,1]\to \{x\in \R^d: 0\leq \tau(x)\leq \epsilon\}$ given by $T_{\epsilon}(y,t)=y+tn(y)$ is a bijection and $\epsilon^{-1}\det(DT_{\epsilon})$ converges uniformly to 1 as $\epsilon \to 0$. 
Using this change of variables, we may write
\begin{align*}
    &\phantom{{}={}}
    \frac{1}{\epsilon}\int_{\{0\leq \tau(x)\leq \epsilon\}}  \Psi\left(\int_{\{\tau(x+\epsilon z)\leq 0\}} K(x,z)\de z\right) \rho_0(x)\de x
    \\
    &=
    \int_{\partial A} 
    \int_0^1
    \frac{1}{\epsilon}\det(DT_{\epsilon}(y,t)) 
    \Psi\left(
    a_{\epsilon}(y,t)
    \right) 
    \rho_0(y+\epsilon t n(y))\de t \de\mathcal{H}^{d-1}(y)
\end{align*}
where we abbreviate
\begin{align*}
    a_{\epsilon}(y,t)&:=\int_{\{\tau(y+\epsilon (tn(y)+z))\leq 0\}} K(y+\epsilon tn(y),z)\de z
\end{align*}

Since we know that $\lim_{\epsilon\to 0} \tfrac{\det(DT_{\epsilon}(y,t))}{\epsilon}=1$ and $\lim_{\epsilon \to 0}\rho_0(y+\epsilon t n(y))=\rho_0(y)$ pointwise almost everywhere, the main difficulty lies in passing to the limit in the term involving $\Psi$.
For this we shall first prove convergence of $a_\eps(y,t)$ to
\begin{align*}
    a(y,t):=\int_{\{z\cdot n(y)\leq -t\}} K(y,z)\de z
\end{align*}
Since $\nabla \tau(y)=n(y)$ for any $y\in \partial A$, we have the expansion
\begin{align*}
\tau(y+\epsilon(tn(y)+z))&=\epsilon (t+z\cdot n(y))+O(\epsilon^2) . 
\end{align*}
It now follows from our assumptions on $K$ that for all $y\in \partial A$ and all $t\in [0,1]$
\begin{align*}
   \lim_{\epsilon\to 0} a_{\epsilon}(y,t)=a(y,t).
\end{align*}
If $\Psi$ is continuous, the result now follows from dominated convergence.  
If $\Psi$ is not continuous, then we must work harder.

Since $\Psi\in L^1([0,1])\cap L^{\infty}([0,1])$, we can always find a sequence of smooth functions $\Psi_n$ such that $\norm{\Psi_n}_{L^{\infty}([0,1])}\leq \norm{\Psi}_{L^{\infty}([0,1])}$ and $\Psi_n$ converges pointwise almost everywhere to $\Psi$ on $[0,1]$. 
If we can show that
\begin{align*}
    &\phantom{{}={}}
    \lim_{n\to\infty}\limsup_{\epsilon\to 0}\abs{\int_{\partial A} \int_0^1  \big(\Psi(a_{\epsilon}(y,t))-\Psi_n(a_{\epsilon}(y,t))\big)\de t\de \mathcal{H}^{d-1}(y)}=0,
\end{align*}
and 
\begin{align*}
    &\phantom{{}={}}
\lim_{n\to\infty}\abs{\int_{\partial A} \int_0^1  \big(\Psi(a(t,y))-\Psi_n(a(t,y))\big)\de t\de \mathcal{H}^{d-1}(y)}=0,
\end{align*}
then we can safely approximate $\Psi$ by $\Psi_n$ and use the previous argument where $\Psi$ was continuous to conclude the result. Hence, it remains to show that the above integrals vanish.

Viewing $a_{\epsilon}$ as maps from $\partial A\times [0,1]\to [0,1]$ we can construct  measures $\mu_{\epsilon}, \mu$ on $ [0,1]$ via pushforwards by setting 
\[
\int_{0}^1 g(s)\de\mu_{\epsilon}(s):=\int_0^1 \int_{\partial A} g(a_{\epsilon}(y,t)) \de t\de \mathcal H^{d-1}(y)
\]
\[
\int_0^1 g(s)\de\mu(s):=\int_0^1 \int_{\partial A}g(a(y,t)) \de t\de \mathcal H^{d-1}(y)
\]
for any bounded continuous function $g:\partial A\times [0,1]\to\R$. Using these definitions, we can write
\begin{align*}
    &\phantom{{}={}}
    \abs{\int_{\partial A} \int_0^1  \big(\Psi(a_{\epsilon}(t,y))-\Psi_n(a_{\epsilon}(t,y))\big)\de t\de \mathcal{H}^{d-1}(y)}=\abs{ \int_{\partial A}\int_0^1  \big(\Psi(s)-\Psi_n(s)\big)\de\mu_{\epsilon}(s)},
\end{align*}
and 
\begin{align*}
    &\phantom{{}={}}
    \abs{\int_{\partial A} \int_0^1  \big(\Psi(a(t,y))-\Psi_n(a_{\epsilon}(t,y))\big)\de t\de \mathcal{H}^{d-1}(y)}=\abs{ \int_{\partial A}\int_0^1  \big(\Psi(s)-\Psi_n(s)\big)\de\mu(s)}.
\end{align*}
If we can show that the $\mu_{\epsilon}$ are uniformly integrable on $\partial A\times [0,1]$ (with respect to the product measure $\de t\de H^{d-1}(y)$)
then the pointwise almost everywhere convergence of $\Psi_n$ to $\Psi$ along with the control $\norm{\Psi_n}_{L^{\infty}([0,1]}\leq \norm{\Psi}_{L^{\infty}([0,1]}$ will imply that
\[
\lim_{n\to \infty}\limsup_{\epsilon\to 0} \abs{ \int_{\partial A}\int_0^1  \big(\Psi(t')-\Psi_n(t')\big)\de\mu_{\epsilon}(y,t')}=0,
\]
and for free it will give us
\[
\lim_{n\to \infty} \abs{ \int_{\partial A}\int_0^1  \big(\Psi(t')-\Psi_n(t')\big)\de\mu(y,t')}=0,
\]
since $\mu$ is the distributional limit of the $\mu_{\epsilon}$ (from the pointwise convergence of $a_{\epsilon}$ to $a$). 

To prove that the $\mu_{\epsilon}$ are uniformly integrable, we will show that $|\partial_t a_{\epsilon}|$ is strictly bounded away from 0 whenever $t$ is bounded away from 1.  To do this, we shall need the additional \cref{ass:kernel_extra} on our kernel $K$.
Let us first assume that $K$ is $C^1$
in both variables.
Changing variables $z'=z+tn(y)$ we can write
\begin{align*}
    a_{\epsilon}(y,t)=\int_{\{\tau(y+\epsilon z')\leq 0\}} K(y+\epsilon tn(y), z'-tn(y))\, \de z',
\end{align*}
and hence
\begin{align*}
    \partial_t a_{\epsilon}(y,t)
    &=
    \int_{\{\tau(y+\epsilon z')\leq 0\}} 
    \epsilon 
    \nabla_y K(y+\epsilon tn(y), z'-tn(y))
    \cdot n(y) \de z'
    \\
    &\qquad
    -
    \int_{\{\tau(y+\epsilon z')\leq 0\}} 
    \nabla_{z'} K(y+\epsilon tn(y), z'-tn(y)) \cdot n(y) \de z'.
\end{align*}
Since the second term is the complete derivative with respect to $z'$, we can integrate by parts to obtain
\begin{align*}
    \partial_t a_\eps(y,t)
    &=
    \int_{\{\tau(y+\epsilon z')\leq 0\}}  \epsilon \nabla_y K\big(y+\epsilon tn(y), z'-tn(y)\big)\cdot n(y)\de z'
    \\
    &\qquad
    -\int_{\{\tau(y+\epsilon z')=0\}} K\big(y+\epsilon tn(y), z'-tn(y)\big)  n(y)\cdot \nabla \tau(y+\epsilon z')\de \mathcal{H}^{d-1}(z')  
\end{align*}
Using the expansion $\nabla \tau(y+\epsilon z)=n(y)+O(\epsilon)$ and the smoothness properties of $K$, we see that 
\begin{align*}
\partial_t a_{\epsilon}(y,t)=-\int_{\{\tau(y+\epsilon z')=0\}} K(y, z'-tn(y))  \de \mathcal{H}^{d-1}(z') +O(\epsilon)
\end{align*}
where we note that the constant in the big O bound does not depend on the differentiability of $K$ with respect to the $z$ variable.
Hence, after approximating $K$ with $C^1$ kernels, we can assume the above control holds for any kernel $K$ satisfying both \cref{ass:kernel,ass:kernel_extra}.

Thus, for each fixed $y\in \partial A, t\in [0,1]$ we have 
\[
\liminf_{\epsilon\to 0} |\partial_t a_{\epsilon}(t,y)|\geq\int_{\{z'\cdot n(y)=0\}} K(y,z'-tn(y)) \de \mathcal{H}^{d-1}(z')=\int_{\{z\cdot n(y)=-t\}} K(y,z)\de \mathcal{H}^{d-1}(z),
\]
where we have used the lower semicontinuity of $K$ with respect to $z$.  We can also now recognize that the right hand side is the Radon transform $\mathcal{R}(K(y,\cdot))(-t,n(y))$.  From our additional \cref{ass:kernel_extra}, we know that $\mathcal{R}(K(y,\cdot))(t,n(y))>0$ for each fixed $y\in \partial A$, $t\in (-1,1)$ and $n\in S^{d-1}$. Fix some $\delta>0$.  Since the set $\partial A\times [0, 1-\delta]\times S^{d-1}$ is compact, it follows that there exists some $c_{\delta}>0$ such that $\mathcal{R}(K(y,\cdot))(-t,n(y))\geq c_{\delta}$ for all $y\in \partial A, n(y)\in S^{d-1}, t\in [0, 1-\delta]$. Therefore, 
\[
\liminf_{\epsilon\to 0} |\partial_t a_{\epsilon}(t,y)|\geq c_{\delta}
\]
for all $y\in \partial A$ and 
$t\in [0, 1-\delta]$.  
Thus, for all $\epsilon>0$ sufficiently small we have
\[
|\partial_t a_{\epsilon}(t,y)|\geq c_{\delta}/2
\]
for all $y\in \partial A$ and 
$t\in [0, 1-\delta]$.

Now we are ready to show that $\mu_{\epsilon}$ is uniformly integrable. Let $\mathcal{L}_{[a,b]}$ denote the Lebesgue measure on the interval $[a,b]$.  From our work above, for all $\epsilon>0$ sufficiently small, we have the inequality
\[
\mu_{\epsilon}=a_{\epsilon\,\#} \big(\mathcal{H}^{d-1}_{\partial A}\otimes \mathcal{L}_{[0,1]}\big)\leq \frac{2\mathcal H^{d-1}(\partial A)}{c_{\delta}}\mathcal{L}_{[0,1]}+a_{\epsilon\,\#} \big(\mathcal{H}^{d-1}_{\partial A}\otimes \mathcal{L}_{[1-\delta,1]}\big).
\]
Thus, for any measurable $E\subset [0,1]$, we have
\begin{align*}
\mu_{\epsilon}(E)\leq& 2|E|\frac{\mathcal H^{d-1}(\partial A)}{c_{\delta}}+\int_{a_{\epsilon}^{-1}(E)\cap (\partial A\times [1-\delta, 1])} \de t\de \mathcal H^{d-1}(y)\\
\leq& 2|E|\frac{\mathcal H^{d-1}(\partial A)}{c_{\delta}}+\delta \mathcal H^{d-1}(\partial A).
\end{align*}
Hence, for all $\delta>0$,
\[
\lim_{|E|\to 0} \limsup_{\epsilon\to 0}\mu_{\epsilon}(E)\leq \delta \mathcal H^{d-1}(\partial A).
\]
Thus, the $\mu_{\epsilon}$ are uniformly integrable so we are done.

\end{proof}

\section{Interpolation properties of probabilistically robust learning}
\label{sec:Interpolation}

In this section we discuss the limiting behavior of the energies $\ProbRisk_{\Psi_p}$ and $\Risk_{\Psi_p}$, for $\Psi_p(t):= \min\left\lbrace t/p,1\right\rbrace$, as $p\to 0$ or as $p$ grows. This limiting behavior is studied in the sense of $\Gamma$-convergence, which, in particular, is closely connected to the convergence of minimizers of the sequence of  functionals. We start by recalling the definition of this notion of convergence for functionals defined over arbitrary metric spaces.

\begin{definition}
 Given a topological space $(\U, \tau)$ and a sequence of functionals $F_n: \U \rightarrow [0,\infty]$, and another functional $F: \U \rightarrow [0,\infty]$, we say that the sequence of functionals $F_n$ \mbox{$\Gamma$-converges} toward $F$ (with respect to the topology $\tau$)  if the following two conditions hold:

\begin{enumerate}
    \item \textbf{Liminf inequality:} For any $u \in \U$ and any sequence $\{ u_n \}_{n \in \N}$ converging (in the $\tau$ topology) toward $u$ we have:
    \[ \liminf_{n \rightarrow \infty}  F_n(u_n) \geq F(u). \]
    \item \textbf{Limsup inequality:} For every $u \in \U$ there exists a sequence $\{ u_n \}_{n \in \N}$ converging (in the $\tau$ topology) toward $u$ such that:
    \[ \limsup_{n \rightarrow \infty} F_n(u_n) \leq F(u). \]
\end{enumerate}
\end{definition}
The relevance of the notion of $\Gamma$-convergence becomes apparent when an additional compactness property holds. 

\begin{proposition}[Fundamental Theorem of $\Gamma$-convergence; see \cite{Braides2002} and \cite{DalMaso1993}] 
Let $(\U, \tau)$ be a topological space. Let $F_n: \U \rightarrow [0,\infty]$ be a sequence of functionals $\Gamma$-converging toward a functional $F: \U \rightarrow [0,\infty]$ that is not identically equal to $+\infty$. Suppose, in addition, that the sequence of functionals $\{ F_n \}_{n \in \N}$ satisfies the following compactness property: any sequence $\{ u_n\}_{n \in \N}$ satisfying 
\[ \sup_{n \in \N} F_n(u_n) < \infty  \]
is precompact in the topology $\tau$, i.e., every subsequence of $\{ u_n \}_{n \in \N}$ has a further subsequence that converges to an element of $\U$.

Then 
\[ \lim_{n \rightarrow \infty} \inf_{u \in \U} F_n(u) = \inf_{u \in \U} F(u).  \]
Moreover, if $u_n$ for every $n \in \N$ is a minimizer of $F_n$, then any cluster point of $\{ u_n \}_{n \in \N}$ is a minimizer of $F$.
\label{prop:GammaFund}
\end{proposition}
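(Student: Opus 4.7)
The plan is to prove the fundamental theorem by establishing a two-sided bound on $\inf_{u \in \U} F_n(u)$ whose endpoints coincide with $\inf_{u \in \U} F(u)$, and then upgrading the argument to handle cluster points of (almost-)minimizers. The strategy uses the two halves of the $\Gamma$-convergence definition separately: the limsup (recovery) inequality gives an upper bound on $\limsup_n \inf F_n$, while the liminf inequality combined with the compactness hypothesis yields a matching lower bound on $\liminf_n \inf F_n$.

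First, I would fix an arbitrary $u \in \U$ and invoke the limsup inequality to produce a recovery sequence $u_n \to u$ with $\limsup_n F_n(u_n) \leq F(u)$. Since $\inf_\U F_n \leq F_n(u_n)$ for every $n$, passing to the $\limsup$ and then taking the infimum over $u \in \U$ gives
\[ \limsup_{n \to \infty} \inf_{u \in \U} F_n(u) \leq \inf_{u \in \U} F(u). \]
Because $F$ is not identically $+\infty$, the right-hand side, and hence the $\limsup$, is finite.

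For the reverse inequality, I would pick, for each $n$, an element $\tilde u_n \in \U$ with $F_n(\tilde u_n) \leq \inf_\U F_n + 1/n$, and then extract a subsequence $\{\tilde u_{n_k}\}$ along which $F_{n_k}(\tilde u_{n_k})$ realizes $\liminf_n \inf_\U F_n$. The upper bound from the previous step ensures that $\sup_k F_{n_k}(\tilde u_{n_k}) < \infty$, so the compactness hypothesis allows a further subsequence (not relabeled) to converge in $\tau$ to some $u^\star \in \U$. The liminf inequality then yields
\[ \inf_{u \in \U} F(u) \;\leq\; F(u^\star) \;\leq\; \liminf_{k \to \infty} F_{n_k}(\tilde u_{n_k}) \;=\; \liminf_{n \to \infty} \inf_{u \in \U} F_n(u). \]
Combined with the upper bound, this forces $\lim_n \inf_\U F_n = \inf_\U F$ and, as a by-product, $u^\star$ to be a minimizer of $F$. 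The same reasoning, applied now to an arbitrary sequence of actual minimizers $u_n \in \argmin F_n$ and one of its $\tau$-cluster points, delivers the second assertion.

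The only subtle point in the argument is the legitimacy of extracting a $\tau$-precompact almost-minimizing sequence, which rests on knowing \emph{a priori} that $\sup_n F_n(\tilde u_n) < \infty$. This is precisely where the assumption that $F \not\equiv +\infty$ does essential work: it supplies, through the recovery sequence of the first step, a finite upper bound on $\limsup_n \inf_\U F_n$ and hence on the values $F_n(\tilde u_n)$. Without it, the compactness hypothesis would apply vacuously and the lower-bound half of the proof would collapse.
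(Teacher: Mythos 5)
The paper does not prove this proposition; it invokes it as a standard result from the $\Gamma$-convergence literature, with references to Braides and Dal Maso. Your argument is the standard textbook proof and is correct, and your closing remark about where the hypothesis $F \not\equiv +\infty$ does essential work is precisely the right diagnostic. One technical wrinkle you gloss over: both the $\Gamma$-liminf inequality and the compactness hypothesis are stated for the \emph{full} sequence indexed by $n$, while you apply them along the subsequence $\{n_k\}$ that realizes $\liminf_n \inf_{u\in\U} F_n(u)$. Both applications are legitimate, but require a short interleaving argument rather than following verbatim from the definitions: extend $\{\tilde u_{n_k}\}_k$ to a full sequence $\{v_n\}_n$ by filling the remaining slots with a recovery sequence for $u^\star$ (for the liminf inequality) or with near-minimizers of $F_n$ (for the compactness hypothesis), using the finite upper bound from your first step to keep $\sup_n F_n(v_n)$ finite after discarding a finite prefix. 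Then the full-sequence liminf is dominated by the liminf along the subsequence, and the ``every subsequence has a further convergent subsequence'' phrasing of the compactness hypothesis yields the convergent sub-subsequence you need. This is a routine gloss, but worth flagging as a step rather than an immediate consequence of the stated hypotheses.
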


For our analysis in this section it will be useful to define $\Psi_0(t) := \one_{t>0}$ for $t\geq 0$, which is a concave and non-decreasing function in its domain. The probabilistic perimeter associated to this function is
\begin{align*}
    \ProbPer_{\Psi_0}(A) 
    &= 
    \int_{A^c}\one_{\Prob{x'\sim\distr_x}{x'\in A}>0}\de\rho_0(x)
    +
    \int_{A}\one_{\Prob{x'\sim\distr_x}{x'\in A^c}>0}\de\rho_1(x)
    \\
    &=
    \int_{A^c}\distr_x\text{-}\esssup \one_A\de\rho_0(x)
    +
    \int_{A}\distr_x\text{-}\esssup \one_{A^c}\de\rho_1(x),
    \\
\end{align*}
and the corresponding probabilistic risk \labelcref{eq:ProbRiskPsi} can be written as
\begin{align*}
\begin{split}
\ProbRisk_{\Psi_0}(A) & = 
\int_\X ( \one_{A^c}(x) \cdot \distr_x\text{-}\esssup \one_A   + \one_{A} (x)) \de\rho_0(x)
\\ & + \int_\X ( \one_{A}(x)  \cdot  \distr_x\text{-}\esssup \one_{A^c} + \one_{A^c}(x))\de\rho_1(x).
\end{split}
\end{align*}
Likewise, the risk $\Risk_{\Psi_0}$ takes the form 
\begin{align*}
\begin{split}
\Risk_{\Psi_0}(A) & = 
\int_\X \distr_x\text{-}\esssup \one_A \de\rho_0(x)  + \int_\X  \distr_x\text{-}\esssup \one_A^c  \de\rho_1(x).
\end{split}
\end{align*}

We start by proving $\Gamma$-convergence of $\ProbRisk_{\Psi_p}$ toward $\ProbRisk_{\Psi_0}$ as $p\to 0$ in the weak-* topology of $L^\infty(\X; \nu)$. 
\begin{proposition}[$\Gamma$-convergence of modified PRL]\label{prop:cvgc_p_to_0_mod}
    For the functions $\Psi_p(t):= \min\left\lbrace t/p,1\right\rbrace$ it holds that $\ProbPer_{\Psi_p}$ $\Gamma$-converges to $\ProbPer_{\Psi_0}$ and $\ProbRisk_{\Psi_p}$ $\Gamma$-converges to $\ProbRisk_{\Psi_0}$ as $p\to 0$ in the weak-* topology of $L^\infty(\X;\nu)$, where $\nu$ is as in \labelcref{eq:nu}.
\end{proposition}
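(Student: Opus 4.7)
The plan is to work with the natural soft extensions $\ProbJ_{\Psi_p}$ of $\ProbPer_{\Psi_p}$ from $\cref{sec:soft_classifiers}$ (which agree with $\ProbPer_{\Psi_p}$ on indicator functions) and exploit three key features of the family: pointwise monotone convergence $\Psi_p(t) \nearrow \Psi_0(t)$ as $p\to 0$, monotonicity $\Psi_p \leq \Psi_{p'}$ whenever $p \geq p'$, and continuity of $\Psi_p$ for every $p > 0$. From the monotone convergence theorem applied to the two summands in \labelcref{eq:J_Psi}, these pointwise properties transfer to the functional level: $\ProbJ_{\Psi_p}(u) \leq \ProbJ_{\Psi_{p'}}(u)$ when $p \geq p'$ and $\ProbJ_{\Psi_p}(u) \nearrow \ProbJ_{\Psi_0}(u)$ as $p\to 0$ for every $u \in L^\infty(\X;\nu)$ with $0\leq u \leq 1$.

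For the $\Gamma$-limsup inequality, the plan is to use the constant recovery sequence $u_p \equiv u$, which trivially satisfies $u_p \wsto u$. The functional monotonicity immediately yields $\ProbJ_{\Psi_p}(u) \leq \ProbJ_{\Psi_0}(u)$ for every $p>0$, and hence $\limsup_{p\to 0}\ProbJ_{\Psi_p}(u_p) \leq \ProbJ_{\Psi_0}(u)$. For the $\Gamma$-liminf inequality, given any sequence $u_p \wsto u$, fix an auxiliary $\tilde p > 0$ and note that for all $p\leq \tilde p$ we have $\ProbJ_{\Psi_p}(u_p) \geq \ProbJ_{\Psi_{\tilde p}}(u_p)$. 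Since $\Psi_{\tilde p}$ is continuous, \cref{prop:lsc_J_Psi} gives
\begin{align*}
    \liminf_{p\to 0}\ProbJ_{\Psi_p}(u_p) \geq \liminf_{p\to 0}\ProbJ_{\Psi_{\tilde p}}(u_p) \geq \ProbJ_{\Psi_{\tilde p}}(u).
\end{align*}
Letting $\tilde p \to 0$ and applying the monotone convergence $\ProbJ_{\Psi_{\tilde p}}(u) \nearrow \ProbJ_{\Psi_0}(u)$ concludes the inequality. This two-parameter passage, first $p\to 0$ then $\tilde p\to 0$, is the only nontrivial ingredient and sidesteps the fact that $\Psi_0$ is not continuous, so that a direct application of \cref{prop:lsc_J_Psi} to $\Psi_0$ would not be strong enough to absorb the weak-$\ast$ convergence.

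For $\ProbRisk_{\Psi_p}$, the plan is to add the standard risk term to both sides. Extended to soft classifiers, $\Risk_\mathrm{std}$ reads $\Exp{(x,y)\sim\mu}{\abs{u(x)-y}}$, which is linear in $u$ and, since $\rho \ll \nu$ by \labelcref{eq:ac_nu}, is represented by integration against an $L^1(\X;\nu)$ density. Hence it is continuous in the weak-$\ast$ topology of $L^\infty(\X;\nu)$, so adding it preserves both the liminf and the limsup inequalities above, yielding $\Gamma$-convergence of $\ProbRisk_{\Psi_p}$ to $\ProbRisk_{\Psi_0}$. The main conceptual obstacle, besides the two-parameter argument just mentioned, is making precise that the $\Gamma$-convergence statement is meaningful in $L^\infty(\X;\nu)$; this is handled by viewing sets through their indicators and extending $\ProbPer_{\Psi_p}$ via $\ProbJ_{\Psi_p}$, an extension which is compatible with minimization since the two functionals agree on characteristic functions and $\{\one_A : A \in \A\}$ sits inside the unit ball of $L^\infty(\X;\nu)$.
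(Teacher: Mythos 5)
Your proposal is correct and takes essentially the same route as the paper. The key step is identical: introduce a second parameter ($\tilde p$ here, $l$ in the paper), exploit the monotonicity $\Psi_p \geq \Psi_{\tilde p}$ for $p \leq \tilde p$ together with the lower semicontinuity of $\ProbJ_{\Psi_{\tilde p}}$ from \cref{prop:lsc_J_Psi} (valid since $\Psi_{\tilde p}$ is continuous), and then send $\tilde p \to 0$ using monotone convergence (the paper uses Fatou at this point, which is equivalent given the monotonicity). The constant recovery sequence and the observation that the standard risk is a weak-$\ast$ continuous perturbation also match the paper. The only cosmetic difference is that you phrase the liminf step as a black-box invocation of \cref{prop:lsc_J_Psi} on the soft extension $\ProbJ_{\Psi_{\tilde p}}$, whereas the paper unfolds the same $L^1$-strong-times-weak-$\ast$ product argument inline; the content is the same.
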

\begin{proof}
    Without loss of generality we assume $\rho_1=0$ for a more concise notation.

    We start with the liminf inequality.
    Let $\Set{A_p}_{p\in(0,1)}$ be a sequence of sets such that $\one_{A_p}$ converges to $\one_A$ as $p\to 0$ in the weak-* sense. 
    Then \cref{lem:pointwise_cvgc} implies that 
    \begin{align*}
        \lim_{p\to 0}\Prob{x'\sim\distr_x}{x'\in A_p}
        =
        \lim_{p\to 0}\Exp{x'\sim\distr_x}{\one_{A_p}(x')}
        =
        \Exp{x'\sim\distr_x}{\one_{A}(x')}
        =
        \Prob{x'\sim\distr_x}{x'\in A}.
    \end{align*}
    Next, we note that for $0<p<l$ it holds $\Psi_p \geq \Psi_l$.
    Hence we get
    \begin{align*}
        \ProbPer_{\Psi_p}(A_p) 
        &\geq         \int_{A_p^c}\Psi_l\left(\Prob{x'\sim\distr_x}{x'\in A_p}\right)\de\rho_0(x)
        \\
        &=
        \int_{\X}(1-\one_{A_p})\Psi_l\left(\Prob{x'\sim\distr_x}{x'\in A_p}\right)\de\rho_0(x).
    \end{align*}
    As argued in the proof of \cref{prop:lsc_J_Psi}, the functions $x\mapsto\Psi_l\left(\Prob{x'\sim\mathfrak m_x}{x'\in A_p}\right)$ converge to $x\mapsto\Psi_l\left(\Prob{x'\sim\mathfrak m_x}{x'\in A}\right)$ in $L^1(\X;\rho_0)$ as $p\to 0$.
    Using this together with $\one_{A_p}\wsto\one_A$ in $L^\infty(\X;\nu)$ and taking into account \labelcref{eq:ac_nu,eq:ac_rho} we get 
    \begin{align*}
        \liminf_{p\to 0}
        \ProbPer_{\Psi_p}(A_p) 
        \geq 
        \int_{A^c}\Psi_l\left(\Prob{x'\sim\distr_x}{x'\in A}\right)\de\rho_0(x).
    \end{align*}
    Since $l>0$ was arbitrary we can let $l\to 0$ and use Fatou's lemma to obtain
    \begin{align*}
        \liminf_{p\to 0}
        \ProbPer_{\Psi_p}(A_p) 
        \geq 
        \int_{A^c}\Psi_0\left(\Prob{x'\sim\distr_x}{x'\in A}\right)\de\rho_0(x)
        =
        \ProbPer_{\Psi_0}(A).
    \end{align*}
    For the limsup inequality we observe that, since $\Psi_p\leq\Psi_0$ for all $p\in(0,1)$, it holds $\ProbPer_{\Psi_p}(A) \leq \ProbPer_{\Psi_0}(A)$ and therefore the limsup inequality $\limsup_{p\to 0}\ProbPer_{\Psi_p}(A)\leq\ProbPer_{\Psi_0}(A)$ is true for the constant sequence equal to $A$.

    Being continuous perturbations of the probabilistic perimeters with respect to the \mbox{weak-*} topology of $L^\infty(\X; \nu)$, the probabilistic risks $\ProbRisk_{\Psi_p}$ are easily seen to $\Gamma$-converge to $\ProbRisk_{\Psi_0}$ as $p\to 0$; see the background section in \cite{braides2008asymptotica}.
\end{proof}

\red
\begin{remark}
\label{rem:GammaSoft1}
With essentially the same proof as in \cref{prop:cvgc_p_to_0_mod}, one can show that $\ProbSRisk_{\Psi_p}$ $\Gamma$-converges in the weak-* topology of $L^\infty(\X; \nu)$ toward the functional $\ProbSRisk_{\Psi_0}$ as $p \rightarrow 0$. We recall that $\ProbSRisk_\Psi$ was introduced in \labelcref{def:SRisk} and corresponds to the l.s.c extension of $\ProbRisk_{\Psi}$ to soft classifiers.
\end{remark}
\nc

Next, we discuss the $\Gamma$-convergence of $\Risk_{\Psi_p}$ toward $\Risk_{\Psi_0}$. This limiting energy may in general be different from $\ProbRisk_{\Psi_0}$.

%

\begin{proposition}[$\Gamma$-convergence of original PRL]\label{prop:cvgc_p_to_0_original}
    For the functions $\Psi_p(t):= \min\left\lbrace t/p,1\right\rbrace$ it holds that $\Risk_{\Psi_p}$ $\Gamma$-converges to $\Risk_{\Psi_0}$ as $p\to 0$ in the weak-* topology of $L^\infty(\X;\sigma)$, where $\sigma$ is as in \labelcref{eq:sigma}.
    
    Furthermore, the $\Gamma$-convergence also holds with respect to the weak* topology of $L^\infty(\X; \nu)$, where $\nu$ is as in \labelcref{eq:nu}.
\end{proposition}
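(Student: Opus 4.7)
The plan is to follow the template of Proposition~\ref{prop:cvgc_p_to_0_mod}, adapted to the functional $\Risk_{\Psi_p}$, which unlike $\ProbRisk_{\Psi_p}$ does not contain an additive standard-risk term. The two main ingredients remain the pointwise monotonicity $\Psi_p\geq \Psi_l$ for $0<p<l$ together with the pointwise convergence result of Lemma~\ref{lem:pointwise_cvgc}, which was stated for a generic reference measure dominating $\distr_x$ for $\rho$-a.e.\ $x$ and therefore applies equally well with $\sigma$ or $\nu$ in that role.

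For the liminf inequality in $L^\infty(\X;\sigma)$, I would take a sequence $\{A_p\}_{p\in(0,1)}$ with $\one_{A_p}\wsto \one_A$ in $L^\infty(\X;\sigma)$. Since $\sigma(\bullet)=\int_\X \distr_x(\bullet)\de\rho(x)$, the relation $\distr_x\ll \sigma$ holds for $\rho$-a.e.\ $x$, so Lemma~\ref{lem:pointwise_cvgc} gives
\begin{align*}
\lim_{p\to 0}\Prob{x'\sim\distr_x}{x'\in A_p}=\Prob{x'\sim\distr_x}{x'\in A}
\end{align*}
for $\rho$-a.e.\ $x$. Fixing an auxiliary $l>0$ and using $\Psi_p\geq \Psi_l$ for $p<l$ yields
\begin{align*}
\Risk_{\Psi_p}(A_p)\geq \int_\X\Psi_l\!\left(\Prob{x'\sim\distr_x}{x'\in A_p}\right)\de\rho_0(x)+\int_\X\Psi_l\!\left(\Prob{x'\sim\distr_x}{x'\in A_p^c}\right)\de\rho_1(x).
\end{align*}
Since $\Psi_l$ is continuous and the integrands are bounded by $1$, dominated convergence passes the limit $p\to 0$ through the right-hand side, replacing $A_p$ by $A$; then letting $l\to 0$ and invoking monotone convergence with $\Psi_l\nearrow \Psi_0$ gives $\liminf_{p\to 0}\Risk_{\Psi_p}(A_p)\geq \Risk_{\Psi_0}(A)$. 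Compared to Proposition~\ref{prop:cvgc_p_to_0_mod} the argument is slightly simpler, since no multiplicative $(1-\one_{A_p})$ factor appears and the weak-* convergence of $\one_{A_p}$ is used only through Lemma~\ref{lem:pointwise_cvgc}.

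The limsup inequality is supplied by the constant recovery sequence $A_p\equiv A$: since $\Psi_p\leq \Psi_0$ on $[0,1]$, we have $\Risk_{\Psi_p}(A)\leq \Risk_{\Psi_0}(A)$ and therefore $\limsup_{p\to 0}\Risk_{\Psi_p}(A)\leq \Risk_{\Psi_0}(A)$ trivially.

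For the second assertion, observe that $\sigma\leq 2\nu$ by definition of $\nu$, so $\tfrac{\de\sigma}{\de\nu}$ is bounded; for any $\varphi\in L^1(\X;\sigma)$ the function $\varphi\,\tfrac{\de\sigma}{\de\nu}$ lies in $L^1(\X;\nu)$, and testing against it shows that weak-* convergence in $L^\infty(\X;\nu)$ implies weak-* convergence in $L^\infty(\X;\sigma)$. Hence the liminf inequality is inherited from the $\sigma$-topology case, while the constant recovery sequence converges in either topology, giving the limsup as well. The only delicate point in the whole argument is the applicability of Lemma~\ref{lem:pointwise_cvgc} with $\sigma$ in the role of the reference measure, which is precisely why $\sigma$ was introduced.
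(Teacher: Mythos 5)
Your proof is correct and follows essentially the same route as the paper: reduce to the argument of Proposition~\ref{prop:cvgc_p_to_0_mod} with $\sigma$ in place of $\nu$, use $\Psi_p\ge\Psi_l\ge\Psi_p\wedge\Psi_0$ monotonicity together with Lemma~\ref{lem:pointwise_cvgc} for the liminf, constant recovery sequences for the limsup, and then observe that the $\nu$-weak-* topology is stronger so the $\Gamma$-convergence transfers. You spell out two details the paper leaves implicit — using dominated/monotone convergence explicitly in the $p\to 0$ and $l\to 0$ limits where the paper defers to Fatou via Proposition~\ref{prop:lsc_J_Psi}, and actually verifying via $\sigma\le 2\nu$ that $\nu$-weak-* convergence implies $\sigma$-weak-* convergence (the boundedness of $\tfrac{\de\sigma}{\de\nu}$ is not even needed there; $\sigma\ll\nu$ and $\varphi\in L^1(\sigma)$ already give $\varphi\,\tfrac{\de\sigma}{\de\nu}\in L^1(\nu)$) — but these are elaborations, not a different strategy.
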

\begin{proof}
    For simplicity, we again assume $\rho_1=0$.
    The proof starts verbatim as the proof of \cref{prop:cvgc_p_to_0_mod}, using \cref{lem:pointwise_cvgc} with $\sigma$ instead of $\nu$. 
    Then one gets for $0<p<l$ that
    \begin{align*}
        \liminf_{p\to 0}
        \Risk_{\Psi_p}(A_p) 
        \geq 
        \int_\X \Psi_l\left(\Prob{x'\sim\distr_x}{x'\in A}\right)\de\rho_0(x).
    \end{align*}
    Taking also the limit $l\to 0$ one obtains
    \begin{align*}
        \liminf_{p\to 0}
        \Risk_{\Psi_p}(A_p) 
        &\geq 
        \int_\X \Psi_0\left(\Prob{x'\sim\distr_x}{x'\in A}\right)\de\rho_0(x).
    \end{align*}
    The limsup inequality is again trivial, using that $\Psi_p\leq \Psi_0$ for all $p>0$. 
    
    We point out that, since constant sequences work for the limsup inequality, the $\Gamma$-convergence also holds with respect to any stronger topology as well. In particular, the $\Gamma$-convergence holds under the weak-* topology of $L^\infty(\X;\nu)$.
 
\end{proof}

\red
\begin{remark}
\label{rem:GammaSoft2}
With essentially the same proof as in \cref{prop:cvgc_p_to_0_original}, one can prove that $\SRisk_{\Psi_p}$ $\Gamma$-converges in the weak-* topology of $L^\infty(\X; \sigma)$ (or of $L^\infty(\X; \nu)$) toward the functional $\SRisk_{\Psi_0}$ as $p \rightarrow 0$. We recall that $\SRisk_\Psi$ was introduced in \labelcref{def:SRisk2} and corresponds to the l.s.c. extension of $\Risk_{\Psi}$ to soft classifiers.
\end{remark}
\nc

The next example shows that the functionals $\ProbRisk_{\Psi_0}$ and $ \Risk_{\Psi_0}$, the $\Gamma$-limits of modified PRL and original PRL, respectively, may in general be different from each other.  

\begin{example}\label{exam:GamaLimitsAredifferent}
Consider the data distribution $\mu = \tfrac12\delta_{(x^0,0)} + \tfrac12\delta_{(x^1,1)}$ where $x^0=(a,0),x^1=(b,0)\in\R^2$ with $a<b$. Let furthermore $\distr_x := \operatorname{Unif}(B_\eps(x))$ where $0<\eps<\tfrac{|b-a|}{2}$. Let $A:=\{x=(x_1, x_2)\in\R^2\st x_1>a+\frac{b-a}{2}\}\cup\{x^0\}$. Then it follows that
\[ \ProbRisk_{\Psi_0}(A) = 1/2 > 0 = \Risk_{\Psi_0}(A).  \]

\end{example}

While the energies $\ProbRisk_{\Psi_0}$, $R_{\Psi_0}$, and $\Risk_{\mathrm{adv}}$ may in general be different, they are always ordered, with $\Risk_{\mathrm{adv}}$ being the largest. This is the content of the next proposition.

\begin{proposition}
	\label{prop:Ordering}
 Suppose that for every $x \in \X$ the following identity holds $\distr_x(\X \setminus B_\veps(x)) =0 $. Then for every measurable $A$ the following holds:
\begin{equation}
\Risk_{\Psi_0}(A) \leq  \ProbRisk_{\Psi_0}(A) \leq  \Risk_\mathrm{adv}(A).
\label{eq:IneqsRisks}
\end{equation}
\end{proposition}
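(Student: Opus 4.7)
The plan is to establish both inequalities by pointwise comparison of the integrands appearing in the three risk functionals, for the $\rho_0$ and $\rho_1$ parts separately. Since each risk has the symmetric structure of a sum over $\rho_0$ and $\rho_1$ with the roles of $A$ and $A^c$ exchanged, it will suffice to verify the inequalities for the $\rho_0$-integrand; the $\rho_1$ case follows by swapping $A$ and $A^c$.

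For the first inequality $\Risk_{\Psi_0}(A) \leq \ProbRisk_{\Psi_0}(A)$, I would compare the $\rho_0$-integrands
\[ \distr_x\text{-}\esssup \one_A \quad \text{vs.} \quad \one_{A^c}(x)\, \distr_x\text{-}\esssup \one_A + \one_A(x). \]
On $A^c$ these two expressions coincide, while on $A$ the right-hand side equals $1$ and the left-hand side equals $\distr_x\text{-}\esssup \one_A \in \{0,1\}$, which is trivially $\leq 1$. Integrating over $\rho_0$ and doing the symmetric argument for $\rho_1$ gives the first inequality; no use of the support assumption is needed here.

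For the second inequality $\ProbRisk_{\Psi_0}(A) \leq \Risk_{\mathrm{adv}}(A)$, I would compare
\[ \one_{A^c}(x)\, \distr_x\text{-}\esssup \one_A + \one_A(x) \quad \text{vs.} \quad \one_{A^{\oplus\eps}}(x). \]
If $x \in A$, both sides equal $1$ (since $x \in B_\eps(x) \subset A^{\oplus\eps}$). If $x \in A^c$, the left-hand side equals $\one_{\distr_x(A) > 0}$. The crucial step, and the only place where the support assumption enters, is the implication: if $\distr_x(A) > 0$ then $A \cap B_\eps(x) \neq \emptyset$, because $\distr_x$ is concentrated on $B_\eps(x)$ by hypothesis; equivalently, there exists $x' \in A$ with $x \in B_\eps(x')$, i.e., $x \in A^{\oplus\eps}$. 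Hence the left-hand side is dominated pointwise by the right-hand side.

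There is no real obstacle here beyond carefully unpacking $\distr_x\text{-}\esssup \one_A = \one_{\distr_x(A) > 0}$ and exploiting the symmetry of the metric used to define the $\eps$-fattening. The only substantive input is the support condition on $\distr_x$, which is used exactly once and only for the second inequality. After integrating the pointwise bounds against $\rho_0$ and $\rho_1$ and summing, the chain \labelcref{eq:IneqsRisks} follows.
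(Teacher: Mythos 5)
Your proposal is correct and follows essentially the same route as the paper: both arguments establish the pointwise chain of inequalities
\begin{align*}
\distr_x\text{-}\esssup \one_A \;\leq\; \one_{A^c}(x)\cdot\distr_x\text{-}\esssup \one_A + \one_{A}(x) \;\leq\; \sup_{\tilde x \in B_\veps(x)} \one_A(\tilde x) = \one_{A^{\oplus\veps}}(x)
\end{align*}
for the $\rho_0$-integrands, apply the same chain to $A^c$ for the $\rho_1$-integrands, and integrate. The paper states this chain tersely as "straightforward to show"; your write-up is simply more explicit about the case analysis ($x \in A$ versus $x \in A^c$), the identification $\distr_x\text{-}\esssup\one_A = \one_{\distr_x(A)>0}$, the role of metric symmetry in equating $\sup_{\tilde x\in B_\veps(x)}\one_A(\tilde x)$ with $\one_{A^{\oplus\veps}}(x)$, and the precise single spot where the hypothesis $\distr_x(\X\setminus B_\veps(x))=0$ is needed.
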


\begin{proof}
It is straightforward to show that the following pointwise identity holds for any measurable set $A$:
\[ \sup_{\tilde x \in B_\eps(x)} \one_A(\tilde x)   \geq \one_{A^c}(x) \cdot \distr_x\text{-}\esssup \one_A + \one_{A}(x)  \geq   \distr_x\text{-}\esssup \one_A. \]
Applying this identity to $A^c$ we get:
\[ \sup_{\tilde x \in B_\eps(x)} \one_{A^c}(\tilde x)   \geq \one_{A}(x) \cdot \distr_x\text{-}\esssup \one_{A^c} + \one_{A^c}(x)  \geq   \distr_x\text{-}\esssup \one_{A^c}. \]
Integrating the first chain of inequalities with respect to $\rho_0$, the second with respect to $\rho_1$, and adding up the resulting expressions, we obtain \labelcref{eq:IneqsRisks}.

\end{proof}

Next, we specialize to the case where the admissible set of hard classifiers is $\A=\mathcal{B}(\X)$, i.e., the set of all Borel measurable subsets of $\X$ (in the machine learning literature this is an agnostic learning setting) and then show that, while $\ProbRisk_{\Psi_0}$ and $\Risk_{\Psi_0}$ may be different, their minimal values coincide with that of the adversarial training value functional $\Risk_{\text{adv}}$ if we impose some reasonable assumptions on the family of measures $\{ \distr_{x}\}_{x \in \X}$. Our result implies that, under \cref{assump:ABsCont} below, both PRL and modified PRL models are consistent with AT, provided consistency is interpreted as consistency in minimal risks.

\red 

\begin{assumption}
\label{assump:ABsCont}
For every $x \in \supp(\rho)$ (where $\supp(\rho)$ denotes the support of the Borel probability measure $\rho$) the following conditions hold:
\begin{enumerate}
    \item $\distr_x(\X \setminus B_\veps(x)) =0 $.
    \item $B_\veps(x) \subseteq \supp(\distr_x)$. \red 
\item For every $x' \in \supp(\rho)$ the measure $\distr_{x'} \lfloor_{B_\veps(x)} $ is absolutely continuous with respect to $\distr_x$. 
\end{enumerate}

\end{assumption}

\begin{remark}
\cref{assump:ABsCont} is very mild and in the Euclidean setting, when $\X = \R^d$, is satisfied by the simple model $\distr_x = \operatorname{Unif}(B_\eps(x))$ .
\end{remark}

\nc

\begin{theorem}[Consistency of optimal energies for PRL and modified PRL]
\label{prop:ConsistencyEnergies}	
Let $\mathcal{A}= \mathcal{B}(\X)$ be the Borel $\sigma$-algebra over $\X$. Under \cref{assump:ABsCont},
\[ \lim_{p \rightarrow 0} \min_{A \in \A} \Risk_{\Psi_p}(A) =  \min_{A \in \A} \Risk_{\Psi_0} (A)=  \min_{A \in \A} \Risk_{\mathrm{adv}}(A), \]
as well as
\[ \lim_{p \rightarrow 0} \min_{A \in \A} \ProbRisk_{\Psi_p}(A)  =  \min_{A \in \A} \ProbRisk_{\Psi_0}(A)     = \min_{A \in \A} \Risk_{\mathrm{adv}}(A), \]
where we recall $\Psi_p(t)= \min\{t/p,1 \}$ and $\Psi_0(t)=\one_{t >0}.$
\end{theorem}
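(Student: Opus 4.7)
The plan is to split the argument into two parts: (i) the convergence of the minima as $p \to 0$, which will follow from the $\Gamma$-convergence results already in hand, and (ii) the identification of the two limiting minima with the minimum adversarial risk, where the only non-trivial step is to show $\min_A \Risk_{\mathrm{adv}}(A) \leq \min_A \Risk_{\Psi_0}(A)$ via an explicit construction.

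For part (i), I would observe that $\Psi_p$ is concave and non-decreasing for every $p \in [0,1)$ (including $\Psi_0 = \one_{t>0}$, which one checks directly). By \cref{rem:EqualitySoftHard,rem:EqualitySoftHard2}, this yields for every $p \in [0,1)$
\[ \min_{A \in \A} \Risk_{\Psi_p}(A) = \min_{u \in \mathcal{U}} \SRisk_{\Psi_p}(u), \qquad \min_{A \in \A} \ProbRisk_{\Psi_p}(A) = \min_{u \in \mathcal{U}} \ProbSRisk_{\Psi_p}(u). \]
Then by \cref{rem:GammaSoft1,rem:GammaSoft2}, the soft extensions $\ProbSRisk_{\Psi_p}$ and $\SRisk_{\Psi_p}$ $\Gamma$-converge, as $p \to 0$, to $\ProbSRisk_{\Psi_0}$ and $\SRisk_{\Psi_0}$ in the weak-* topology of $L^\infty(\X;\nu)$. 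Minimizing sequences of soft classifiers lie in the closed unit ball of $L^\infty(\X;\nu)$, which is weak-* compact by Banach--Alaoglu, so the compactness hypothesis of \cref{prop:GammaFund} holds. The Fundamental Theorem of $\Gamma$-convergence then gives $\lim_{p \to 0} \min_A \Risk_{\Psi_p}(A) = \min_A \Risk_{\Psi_0}(A)$ and $\lim_{p \to 0} \min_A \ProbRisk_{\Psi_p}(A) = \min_A \ProbRisk_{\Psi_0}(A)$.

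For part (ii), condition (1) of \cref{assump:ABsCont} is exactly the hypothesis of \cref{prop:Ordering}, which yields the pointwise chain $\Risk_{\Psi_0}(A) \leq \ProbRisk_{\Psi_0}(A) \leq \Risk_{\mathrm{adv}}(A)$, and hence the same ordering for the minima. It therefore remains to prove the reverse inequality $\min_A \Risk_{\mathrm{adv}}(A) \leq \min_A \Risk_{\Psi_0}(A)$. Fix any Borel $A$ and set
\[ U := \Set{x \in \supp(\rho_0) \st \distr_x(A) > 0}, \qquad V := \Set{x \in \supp(\rho_1) \st \distr_x(A^c) > 0}, \]
so that $\Risk_{\Psi_0}(A) = \rho_0(U) + \rho_1(V)$. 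I then define the open set
\[ \tilde A := \bigcup_{x' \in \supp(\rho_1) \setminus V} B_\veps(x'), \]
which is automatically Borel measurable. The key claim is that $B_\veps(x) \cap \tilde A = \emptyset$ for every $x \in \supp(\rho_0) \setminus U$. Assume the contrary: some $x' \in \supp(\rho_1) \setminus V$ would satisfy $B_\veps(x) \cap B_\veps(x') \neq \emptyset$. Since this intersection is a nonempty open subset of $B_\veps(x') \subseteq \supp(\distr_{x'})$ by (2) of \cref{assump:ABsCont}, we would have $\distr_{x'}(B_\veps(x) \cap B_\veps(x')) > 0$, and in particular $\distr_{x'}(B_\veps(x)) > 0$. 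However, using (1) of \cref{assump:ABsCont} together with $x \notin U$, we have $\distr_x(A \cap B_\veps(x)) = 0$, and (3) of \cref{assump:ABsCont} forces $\distr_{x'}(A \cap B_\veps(x)) = 0$. Combining the two, $\distr_{x'}(A^c \cap B_\veps(x)) = \distr_{x'}(B_\veps(x)) > 0$, so $\distr_{x'}(A^c) > 0$, contradicting $x' \notin V$. From this claim $\tilde A^{\oplus\veps} \cap \supp(\rho_0) \subseteq U$ and $(\tilde A^c)^{\oplus\veps} \cap \supp(\rho_1) \subseteq V$ (the latter because $x' \in \supp(\rho_1) \setminus V$ implies $B_\veps(x') \subseteq \tilde A$ by definition). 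Therefore $\Risk_{\mathrm{adv}}(\tilde A) \leq \rho_0(U) + \rho_1(V) = \Risk_{\Psi_0}(A)$, and taking the infimum over $A$ closes the chain.

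The main obstacle is part (ii), specifically verifying that the definition of $\tilde A$ is geometrically consistent: parts (2) and (3) of \cref{assump:ABsCont} are used in tandem to show that a ball centered at a ``correctly classified in the probabilistic sense'' $\rho_0$-point cannot overlap one centered at a ``correctly classified in the probabilistic sense'' $\rho_1$-point. Once this geometric compatibility is established, the estimate on $\Risk_{\mathrm{adv}}(\tilde A)$ follows essentially by unwinding definitions, and the rest of the theorem reduces to the already developed $\Gamma$-convergence and soft-hard equivalence machinery.
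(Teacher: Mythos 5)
Your part (i) follows the paper's argument essentially verbatim: equality of hard and soft minima via \cref{rem:EqualitySoftHard,rem:EqualitySoftHard2}, the $\Gamma$-convergence of the soft risks from \cref{rem:GammaSoft1,rem:GammaSoft2}, Banach--Alaoglu, and \cref{prop:GammaFund}. Where you genuinely diverge is in part (ii), the inequality $\min_A \Risk_{\mathrm{adv}}(A) \leq \min_A \Risk_{\Psi_0}(A)$. The paper proceeds indirectly: invoking the construction from Lemma~3.8 of \cite{bungert2023geometry}, it produces, for any Borel $A$, a $\sigma$-equivalent set $A'$ on which the $\distr_x$-essential suprema become honest suprema, so that $\Risk_{\Psi_0}(A) = \Risk_{\mathrm{adv}}(A')$. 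Your approach instead builds an explicit open Borel competitor $\tilde A$ as the union of $\veps$-balls around the probabilistically robust $\rho_1$-points, then uses all three parts of \cref{assump:ABsCont} in tandem to show that these balls cannot meet any ball around a probabilistically robust $\rho_0$-point, from which $\Risk_{\mathrm{adv}}(\tilde A) \leq \Risk_{\Psi_0}(A)$ follows by unwinding definitions. Your argument is self-contained (no appeal to the external Lemma~3.8) and makes the geometric role of \cref{assump:ABsCont} very transparent; the paper's version is shorter but relies on a cited measure-theoretic black box and moreover establishes an exact equality $\Risk_{\Psi_0}(A) = \Risk_{\mathrm{adv}}(A')$ for a $\sigma$-equivalent $A'$, a sharper fact that the paper reuses immediately afterwards when proving convergence of minimizers under \cref{assump:AdditionalMinimizers}. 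Both routes are valid for the theorem at hand.
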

\begin{proof}
\red

Thanks to \cref{rem:GammaSoft1,rem:GammaSoft2} and the Banach--Alaoglu theorem, we can apply the fundamental theorem of $\Gamma$-convergence (i.e., \cref{prop:GammaFund}) to conclude that 
\[ \lim_{p \rightarrow 0 } \min_{u \in \U }  \SRisk_{\Psi_p}(u) =  \min_{u \in \U }  \SRisk_{\Psi_0}(u)  \]
and 
\[ \lim_{p \rightarrow 0 } \min_{u \in \U }  \ProbSRisk_{\Psi_p}(u) =  \min_{u \in \U }  \ProbSRisk_{\Psi_0}(u).   \]
In addition, since $\Psi_p$ is concave and non-decreasing for every $p\geq 0$, we have, thanks to \cref{rem:EqualitySoftHard,rem:EqualitySoftHard2}, 
\[ \min_{A \in \A} \Risk_{\Psi_p} = \min_{u \in \U} \SRisk_{\Psi_p}(u)  \]
as well as 
\[ \min_{A \in \A} \ProbRisk_{\Psi_p} = \min_{u \in \U} \ProbSRisk_{\Psi_p}(u).  \]
\nc 
Therefore
\[ \lim_{p \rightarrow 0} \min_{A \in \A} \Risk_{\Psi_p}(A) = \min_{A \in \A } \Risk_{\Psi_0}(A)  \]
and
\[ \lim_{p \rightarrow 0} \min_{A \in \A} \ProbRisk_{\Psi_p}(A) = \min_{A \in \A} \ProbRisk_{\Psi_0}(A). \]	
On the other hand, thanks to \cref{prop:Ordering}, we have 
\[   \min_{A \in \A} \Risk_{\mathrm{adv}}(A)  \geq \min_{A \in \A} \ProbRisk_{\Psi_0}(A)  \geq \min_{A \in \A}  \Risk_{\Psi_0}(A). \]
Thus it suffices to prove that 
	\[ \min_{A \in \A}   \Risk_{\Psi_0}(A)   \geq  \min_{A \in \A} \Risk_{\mathrm{adv}}(A). \]
 
To see this, let $A$ be an arbitrary measurable subset of $\X$. We can then proceed as in the proof of \cite[Lemma 3.8]{bungert2023geometry} and use the fact that \[ \{ x \in \X \text{ s.t. } \mathrm{dist}(x, \supp(\rho)) < \veps  \}  \subseteq \supp(\sigma), \]
which follows from (2) in \cref{assump:ABsCont}, to conclude that we can find a measurable set $A'$ which is $\sigma$-equivalent to $A$ (i.e., $\sigma(A \triangle  A') =0$) and satisfies:
\[   \sigma\text{-}\esssup_{\tilde x \in B_\veps(x)} \one_A(\tilde x) = \sup_{\tilde x \in B_\veps(x)} \one_{A'}(\tilde x) , \quad \sigma\text{-}\esssup_{\tilde x \in B_\veps(x)} \one_{A^c}(\tilde x) = \sup_{\tilde x \in B_\veps(x)} \one_{A'^c}(\tilde x) \]   
for $\rho$-a.e. $x \in \X$. Thanks to \labelcref{eq:ac_sigma}, we deduce that 
\[ \distr_x(A \triangle A') =0 , \quad \rho\text{-a.e. } x \in \X.\]
It follows that for $\rho$ almost every $x \in \X$ we have
\begin{align*}
\sigma\text{-}\esssup_{\tilde x \in B_\veps(x)} \one_{A}(\tilde x) &= \sup_{\tilde x \in B_\veps(x)} \one_{A'}(\tilde x) \geq \distr_x\text{-}\esssup\one_{A'}(\tilde x)
 = \distr_x\text{-}\esssup \one_{A}(\tilde x).
\end{align*}
as well as
\[ \sigma\text{-}\esssup_{\tilde x \in B_\veps(x)} \one_{A^c}(\tilde x)  \geq \distr_x\text{-}\esssup \one_{A^c}(\tilde x).\]
Since $\sigma$ is absolutely continuous with respect to $\distr_x$ for $\rho$ a.e. $x \in \X$ (thanks to \cref{assump:ABsCont}), we deduce that
\[ \distr_x\text{-}\esssup \one_{A^c}(\tilde x) \geq \sigma\text{-}\esssup_{\tilde x \in B_\veps(x)} \one_{A^c}(\tilde x), \quad  \distr_x\text{-}\esssup \one_{A}(\tilde x) \geq \sigma\text{-}\esssup_{\tilde x \in B_\veps(x)} \one_{A}(\tilde x)  \]
for $\rho$-a.e. $x \in \X$. Combining the above, we deduce that for $\rho$-a.e. $x \in \X$
\[  \distr_x\text{-}\esssup \one_{A^c}(\tilde x) = \sup_{\tilde x \in B_\veps(x)} \one_{A'^c}(\tilde x), \quad  \distr_x\text{-}\esssup \one_{A}(\tilde x) = \sup_{\tilde x \in B_\veps(x)} \one_{A'}(\tilde x).  \]
From this fact, we deduce that
\begin{align*}
\Risk_{\Psi_0}(A)  & =  \int_\X \distr_x\text{-}\esssup_{\tilde x \in B_\veps(x)} \one_{A} \de \rho_0(x) +  \int_\X  \distr_x\text{-}\esssup_{\tilde x \in B_\veps(x)} \one_{A^c} \de \rho_1(x) 
\\  & = \int_\X \sup_{\tilde x \in B_\veps(x)} \one_{A'}(\tilde x) \de \rho_0(x) +  \int_\X \sup_{\tilde x \in B_\veps(x)} \one_{A'^c}(\tilde x ) \de \rho_1(x) 
\\  & = \Risk_{\mathrm{adv}}(A') 
\\& \geq \min_{\tilde A \in \A} \Risk_{\mathrm{adv}}(\tilde A).
\end{align*}
Taking the infimum over $A$ on the left hand side we obtain the desired result.

\end{proof}

While the above result states that both PRL and modified PRL models recover the AT model from the perspective of consistency of their optimal energies (at least in the agnostic setting, where we can prove existence of minimizers to AT; see \cite{bungert2023geometry}), the next example illustrates that minimizers of the PRL energies may not converge to AT minimizers as the parameter $p$ goes to zero. Studying the behavior of minimizers is important, since, after all, PRL models and AT are used to produce robust classifiers by minimizing energy functionals. We start with an example.

\begin{example}
Let $\X = \R$ and for every $x \in \R$ let $\distr_{x}$ be the uniform measure over $B_\veps(x)$, the open Euclidean ball of radius $\veps$ and center $x$. Consider the probability distribution $\mu$ over $\X \times \{ 0,1\}$ given by
\[\mu =  \frac{2}{5}\delta_{(x^1 , 0) }  + \frac{1}{5}\delta_{(x^2 , 0) } + \frac{2}{5}\delta_{(x^3 , 1) },  \]
where $x^1 < x^2 < x^3$,  $|x^1 - x^2| < \eps$, and $B_\eps(x^2) \cap B_\eps(x^3) \not = \emptyset$, $B_\eps(x^1) \cap B_\eps(x^3)  = \emptyset$.

It is straightforward to show that the optimal adversarial risk $\Risk_{\mathrm{adv}}^*$ in this situation is $1/5$ and that $A'$ achieves this adversarial risk if and only if $B_\veps(x^3) \subseteq A'  $ and $B_\veps(x^1) \subseteq A'^c$. On the other hand, the set $\tilde{A} = B_\veps(x^3) \cup \{  x^2\}$ satisfies $\ProbRisk_{\Psi_0}(\tilde A)= 1/5$. In particular, thanks to \cref{prop:ConsistencyEnergies}, $\tilde{A}$ is a minimizer of both $\ProbRisk_{\Psi_0}$ and $\Risk_{\Psi_0}$ (i.e., the $\Gamma$-limits of the PRL models as $p \to 0$). Note, however, that there does not exist a set $A' \sim_\nu \tilde{A}$ that is a minimizer of $\Risk_{\mathrm{adv}}$. This is because for $A'$ to be a minimizer of $\Risk_{\mathrm{adv}}$ we would need to exclude $x^2$ from  $\tilde{A}$, but it is impossible to achieve this while maintaining the condition $\tilde A \sim_\nu A'$, since $\rho$ assigns mass strictly larger than zero to $\{x^2 \}$. 
\end{example}

From the above example it is clear that in general we cannot expect that an arbitrary minimizer of $\Risk_{\Psi_0}$ or $\ProbRisk_{\Psi_0}$ can be turned into a solution to adversarial training by modifying it within a set of $\nu$ measure zero, in particular by only modifying the classifier outside of the observed data (the support of the distribution $\rho$) and their admissible perturbations. In other words, this means that there may be solutions to PRL or modified PRL for $p \approx 0$ that, putting measure theoretic technicalities aside, aren't solutions to AT. In this sense, PRL or its modified version may not necessarily recover the AT model when $p \rightarrow 0$. Fortunately, it is possible to avoid the issue described above by imposing one extra assumption on the family of distributions $\{ \distr_x\}_{x \in \X}$. 

\begin{assumption}
\label{assump:AdditionalMinimizers}
Suppose that for every $x \in \supp(\rho)$ the measure  $\mathcal{\rho}\lfloor_{B_\veps(x)}$, the restriction of $\rho$ to the ball $B_\veps(x)$, is absolutely continuous with respect to $\distr_x$.
\end{assumption}

\begin{remark}
   If for $x \in \supp(\rho)$ we choose $\distr_x$ to be $\distr_x= \frac{1}{2} \operatorname{Unif}(B_\eps(x))  +\frac{1}{2 \rho(B_\veps(x))} \rho \lfloor_{B_\veps(x)} $, then the family $\{  \distr_x\}_{x \in \X}$ satisfies \cref{assump:ABsCont,assump:AdditionalMinimizers}.
\end{remark}

\red

\begin{theorem}
Suppose that \cref{assump:ABsCont,assump:AdditionalMinimizers} hold. Suppose that for every $p>0$ the set $A_p$ is a minimizer of $\ProbRisk_{\Psi_p}$ over $\A= \mathcal{B}(\X)$, and suppose that, as $p \rightarrow 0$, $A_p$ converges towards a set $A$ in the weak* topology of $L^\infty(\X ; \nu)$. Then there is a set $A'$ with $A'\sim_\nu A$ such that $A'$ is a minimizer of $\Risk_{\mathrm{adv}}$ over the class of all Borel measurable sets. 

The above continues to be true if we substitute $\ProbRisk_{\Psi_p}$ with $\Risk_{\Psi_p}$.
\end{theorem}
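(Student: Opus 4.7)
The plan is to combine the $\Gamma$-convergence results of \cref{prop:cvgc_p_to_0_mod} (via the soft extension in \cref{rem:GammaSoft1}) with the construction from the proof of \cref{prop:ConsistencyEnergies}, and then to upgrade the $\sigma$-equivalence produced by that construction into the $\nu$-equivalence claimed in the theorem by invoking \cref{assump:AdditionalMinimizers}.

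First, I would argue that $A$ is a minimizer of $\ProbRisk_{\Psi_0}$. Since each $\Psi_p$ with $p \geq 0$ is concave and non-decreasing, \cref{rem:EqualitySoftHard} gives that the hard minimizer $A_p$ of $\ProbRisk_{\Psi_p}$ also realizes the minimum of the soft extension $\ProbSRisk_{\Psi_p}$ over $\mathcal{U}$. Coupling the $\Gamma$-convergence $\ProbSRisk_{\Psi_p} \to \ProbSRisk_{\Psi_0}$ in the weak-$*$ topology of $L^\infty(\X;\nu)$ (\cref{rem:GammaSoft1}) with the Banach--Alaoglu compactness of the unit ball of $L^\infty(\X;\nu)$, the fundamental theorem of $\Gamma$-convergence (\cref{prop:GammaFund}) yields that the assumed weak-$*$ limit $\one_A$ is a minimizer of $\ProbSRisk_{\Psi_0}$, whence $A$ minimizes $\ProbRisk_{\Psi_0}$. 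Using \cref{prop:ConsistencyEnergies} and \cref{prop:Ordering} one then deduces $\Risk_{\Psi_0}(A) \leq \ProbRisk_{\Psi_0}(A) = \min_{\tilde A \in \A}\Risk_{\mathrm{adv}}(\tilde A) = \min_{\tilde A \in \A}\Risk_{\Psi_0}(\tilde A)$, so that $A$ is simultaneously a minimizer of $\Risk_{\Psi_0}$.

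Next, I apply the construction used at the end of the proof of \cref{prop:ConsistencyEnergies} to the set $A$: this produces a Borel set $A'$ with $A' \sim_\sigma A$ such that, for $\rho$-a.e.\ $x \in \X$,
\begin{align*}
\sup_{\tilde x \in B_\veps(x)}\one_{A'}(\tilde x) &= \distr_x\text{-}\esssup \one_A, \\
\sup_{\tilde x \in B_\veps(x)}\one_{A'^c}(\tilde x) &= \distr_x\text{-}\esssup \one_{A^c}.
\end{align*}
Integrating these identities against $\rho_0$ and $\rho_1$ gives $\Risk_{\mathrm{adv}}(A') = \Risk_{\Psi_0}(A) = \min_{\tilde A \in \A}\Risk_{\mathrm{adv}}(\tilde A)$, so $A'$ is an adversarial minimizer.

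The main obstacle---and the only place where \cref{assump:AdditionalMinimizers} enters---is to upgrade $A' \sim_\sigma A$ into $A' \sim_\nu A$. Setting $N := A \triangle A'$, we have $\sigma(N) = 0$, so that $\distr_x(N) = 0$ for $\rho$-a.e.\ $x \in \supp(\rho)$. For each such $x$, \cref{assump:AdditionalMinimizers} forces $\rho(N \cap B_\veps(x)) = 0$. A standard Lindelöf-type covering argument then selects a countable collection $\{B_\veps(x_n)\}_n$ of such balls covering $N \cap \supp(\rho)$, yielding $\rho(N) = \rho(N \cap \supp(\rho)) \leq \sum_n \rho(N \cap B_\veps(x_n)) = 0$ and consequently $\nu(N) = 0$. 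As the example immediately preceding the theorem demonstrates, the conclusion genuinely fails without \cref{assump:AdditionalMinimizers}, which is why I expect this covering step to be the delicate part of the argument. The analogous statement with $\Risk_{\Psi_p}$ in place of $\ProbRisk_{\Psi_p}$ follows by exactly the same scheme, employing \cref{prop:cvgc_p_to_0_original}, \cref{rem:GammaSoft2}, and \cref{rem:EqualitySoftHard2} in place of their probabilistic counterparts.
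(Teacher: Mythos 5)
Your argument is correct and follows essentially the same route as the paper: first conclude from $\Gamma$-convergence that $A$ minimizes $\ProbRisk_{\Psi_0}$, then invoke the construction from \cref{prop:ConsistencyEnergies} to produce a $\sigma$-equivalent adversarial minimizer $A'$, and finally promote $\sigma$-equivalence to $\nu$-equivalence via \cref{assump:AdditionalMinimizers}. The paper is more terse in two places where you add genuine detail. First, the paper invokes the $\Gamma$-convergence of $\ProbRisk_{\Psi_p}$ (\cref{prop:cvgc_p_to_0_mod}, a convergence of functionals on sets) directly, which strictly speaking only yields the liminf inequality at the assumed limit set $A$; one then concludes $A$ is a minimizer by combining this with the convergence of minimal energies from \cref{prop:ConsistencyEnergies}. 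Your route through the soft extensions $\ProbSRisk_{\Psi_p}$, Banach--Alaoglu, and \cref{prop:GammaFund} is cleaner because the compactness and $\Gamma$-convergence then live on the same space $\mathcal{U}$; both routes work. Second, the paper asserts in one line that \cref{assump:AdditionalMinimizers} implies $\nu \ll \sigma$, while you spell out the Lindel\"of covering argument: $\sigma(N)=0$ forces $\distr_x(N)=0$ for $\rho$-a.e.\ $x$, hence $\rho(N \cap B_\veps(x))=0$ for a $\rho$-full (and therefore dense in $\supp(\rho)$) set of $x$, and a countable subcover of $\supp(\rho)$ by such balls gives $\rho(N)=0$, whence $\nu(N)=0$ since $\nu = \tfrac12 \sigma + \tfrac12 \rho$. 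This is indeed the content of the paper's one-line claim, and your unfolding is a useful supplement (assuming, as the paper implicitly does, that $\X$ is separable so that Lindel\"of applies).
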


\begin{proof}
The $\Gamma$-convergence in \cref{prop:cvgc_p_to_0_mod} implies that $A$ as in the statement must be a minimizer of $\ProbRisk_{\Psi_0}$. Following the proof of \cref{prop:ConsistencyEnergies} we know there exists a set $A'$ which is $\sigma$ equivalent to $A$ and satisfies $\ProbRisk_{\Psi_0}(A) \geq \Risk_{\Psi_0}(A) = \Risk_{\mathrm{adv}}(A') $. Thanks to \cref{prop:ConsistencyEnergies}, it follows that $A'$ is a minimizer of $\Risk_{\mathrm{adv}}$. The desired result now follows from \cref{assump:AdditionalMinimizers}, since this assumption implies that $\nu$ is absolutely continuous with respect to $\sigma$ and as a consequence $A \sim_\nu A'$.

\end{proof}

So far we have studied the behaviors of the energies $\ProbRisk_{\Psi_p}$ and $\Risk_p$ as $p\rightarrow 0$. To wrap up this section we discuss the limits of these energies as $p$ grows.

\begin{theorem}
For the functions $\Psi_p(t):= \min\left\lbrace t/p,1\right\rbrace$ it holds that $\ProbRisk_{\Psi_p}$ $\Gamma$-converges to 
\[ \Exp{(x,y)\sim\mu}{\abs{ \one_A(x)-y}} + \int_A \int_{A^c}    \de \distr_x(\tilde x)   \de \rho_1(x) + \int_{A^c} \int_A    \de \distr_x(\tilde x)   \de \rho_0(x)    \]
as $p \rightarrow 1$, and to $\Exp{(x,y)\sim\mu}{\abs{ \one_A(x)-y}}$ as $p \rightarrow \infty$, in the weak-* topology of $L^\infty(\X;\nu)$. We recall $\nu$ was introduced in \labelcref{eq:nu}.
\end{theorem}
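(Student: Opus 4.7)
My plan is to reduce both $\Gamma$-convergence statements to continuous-perturbation arguments anchored at the respective limit functional, and to handle lower semicontinuity of that limit using machinery already available in \cref{sec:existence_proofs}. The first step is to verify uniform convergence of the profiles on $[0,1]$: as $p\to\infty$, $\Psi_p(t)=t/p\le 1/p$ gives $\Psi_p\to 0$ uniformly, while as $p\to 1$ a short case split into $p<1$ and $p>1$ yields $\sup_{t\in[0,1]}|\Psi_p(t)-t|\to 0$, so $\Psi_p\to\Psi_1:=\mathrm{id}$ uniformly. These estimates imply, uniformly in $A\in\A$,
\begin{equation*}
\bigl|\ProbRisk_{\Psi_p}(A)-\ProbRisk_{\Psi_1}(A)\bigr|\le 2\sup_{t\in[0,1]}|\Psi_p(t)-t|,\qquad \bigl|\ProbRisk_{\Psi_p}(A)-\Risk_\mathrm{std}(A)\bigr|=\ProbPer_{\Psi_p}(A)\le \tfrac{1}{p}.
\end{equation*}

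In both regimes, the limsup inequality is then immediate via the constant recovery sequence $A_p\equiv A$, since the uniform perturbation bound above forces $\ProbRisk_{\Psi_p}(A)$ to converge to the corresponding limit functional at $A$. For the liminf inequality along a weak-* convergent sequence $\one_{A_p}\wsto\one_A$ in $L^\infty(\X;\nu)$, the $p\to\infty$ case follows from the nonnegativity of $\ProbPer_{\Psi_p}$ together with the weak-* continuity of $\Risk_\mathrm{std}$ on the unit ball (which is linear in $u=\one_A$ with an $L^1(\nu)$ test function, since $\rho_0,\rho_1\ll\nu$). For the $p\to 1$ case I would first replace $\ProbRisk_{\Psi_p}(A_p)$ by $\ProbRisk_{\Psi_1}(A_p)$ up to a vanishing error using the uniform bound, and then invoke \cref{prop:lsc_J_Psi} with the continuous profile $\Psi_1=\mathrm{id}$; since $\ProbJ_{\Psi_1}(\one_B)=\ProbPer_{\Psi_1}(B)$ for every set $B$, this yields $\ProbPer_{\Psi_1}(A)\le\liminf_p\ProbPer_{\Psi_1}(A_p)$. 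Expanding $\ProbPer_{\Psi_1}$ for $\Psi_1(t)=t$ produces precisely the double-integral expression appearing in the statement.

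The only step with any real content is the application of \cref{prop:lsc_J_Psi}, but this is already established in the paper and directly covers the continuous profile $\Psi_1=\mathrm{id}$; consequently I do not anticipate a genuine obstacle. The theorem is essentially a repackaging of uniform convergence of the profiles $\Psi_p$ with the earlier lower-semicontinuity result, in the same spirit as \cref{prop:cvgc_p_to_0_mod}.
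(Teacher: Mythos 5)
Your proposal is correct and follows essentially the same approach as the paper. The paper's proof is the one-line remark that the energies converge uniformly over all measurable $A$ as $p\to 1$ and $p\to\infty$; you have simply made the argument explicit, verifying the uniform bounds $\sup_A\abs{\ProbRisk_{\Psi_p}(A)-\ProbRisk_{\Psi_1}(A)}\le\norm{\Psi_p-\Psi_1}_\infty\to 0$ and $\sup_A\ProbPer_{\Psi_p}(A)\le 1/p\to 0$, and then spelling out the standard fact that uniform convergence plus lower semicontinuity of the limit (via \cref{prop:lsc_J_Psi} for the $p\to 1$ case, and the trivial weak-* continuity of $\Risk_{\mathrm{std}}$ for the $p\to\infty$ case) gives $\Gamma$-convergence with constant recovery sequences. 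Nothing is missing; this is a faithful expansion of the paper's argument rather than a different route.
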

\begin{proof}
This easily follows from the fact that, actually, the convergence of the energies is uniform over all measurable $A$ as $p \rightarrow 1$ and as $p\rightarrow \infty$. 
\end{proof}

\begin{remark}
    In the previous result we have highlighted the case $p=1$, as this is the case where the modified PRL model coincides with a regularized risk minimization problem with nonlocal perimeter penalty of the type investigated in \cite{mazon2020total} in the context of random walk metric spaces. 
\end{remark}

\begin{remark}
    In contrast to the modified PRL model, which recovers the standard risk minimization model as $p$ grows, the energy $\Risk_{\Psi_p}$ is easily seen to converge uniformly toward the zero functional as $p \rightarrow \infty$.
\end{remark}

\nc

\section{PRL for general learning settings and the conditional value at risk}
\label{sec:general_models}
After extensively discussing the binary and 0-1 loss case in the previous sections, we shift our attention to training general hypotheses $h\in\mathcal{H}$ using general loss functions $\ell:\Y\times\Y\to\R$, even when $\Y$ is not binary. Motivated by \cref{prop:rewrite_max,prop:rewrite_max_Psi} it is natural to consider the following probabilistically robust optimization problem
\begin{align}\label{eq:proposed_problem_general}
    \inf_{h\in\mathcal{H}}
    \Exp{(x,y)\sim\mu}{
        \max
        \left\lbrace
        \ell(h(x),y),
        p\text{-}\esssup_{x' \sim \distr_x}
        \ell(h(x'),y)
        \right\rbrace}.
\end{align}
Since the $p$-$\esssup$ operator is notoriously hard to optimize, one shall replace it with the conditional value at risk (CVaR) which is convex and easier to optimize \cite{robey2022probabilistically,rockafellar2000optimization}.
For a function $f:\X\to\R$ and a probability distribution $\distr$ the CVaR at level $p\in (0,1)$ is defined as
\begin{align}\label{eq:CVaR}
    \operatorname{CVaR}_p(f;\distr)
    := 
    \inf_{\alpha\in\R}
    \alpha
    +
    \frac{\Exp{x'\sim\distr_x}{\left(f(x')-\alpha\right)_+}}{p}.
\end{align}
It is easy to see that $p\text{-}\esssup_{x'\sim\distr}f(x')\leq\operatorname{CVaR}_p(f;\distr)$.
Using CVaR in place of the $p\text{-}\esssup$ operator, a tractable version of \labelcref{eq:proposed_problem_general} is 
\begin{align}\label{eq:proposed_problem_relaxed}
    \inf_{h\in\mathcal{H}}
    \Exp{(x,y)\sim\mu}{
    \max\Big\lbrace
        \ell(h(x),y),
        \operatorname{CVaR}_p(\ell(h(\bullet),y);\distr_x)
    \Big\rbrace
    }.
\end{align}
We emphasize that, if the loss function $\ell(\bullet,\bullet)$ is convex in its first argument, then \labelcref{eq:proposed_problem_relaxed} is a convex function of the hypothesis $h$. 
Furthermore, CVaR is positively homogeneous and hence also \labelcref{eq:proposed_problem_relaxed} is positively homogeneous in the loss function.
So, taking the maximum of the samplewise CVaR and standard risk is consistent with a standard dimensionality analysis as both terms scale in the same way.

In the binary classification case we can prove \cref{prop:rewrite_cvar_Psi} which states that the CVaR relaxation corresponds precisely to using the risk $\ProbRisk_{\Psi_p}$ with the piecewise linear and concave function $\Psi_p(t)=\min\left\lbrace t/p,1\right\rbrace$ for which our theory from \cref{sec:hard_classifiers} applies.
\begin{proof}[Proof of \cref{prop:rewrite_cvar_Psi}]
If $\ell$ is the $0$-$1$-loss then the function $f := \ell(\one_A(\cdot),y)$ for $A\in\A$ can be written as $f = \one_{S}$ for set $S\in\{A,A^c\}$, depending on whether $y=0$ or $y=1$.
So it suffices to deal with the case $f = \one_A$ where we can write
\begin{align*}
\operatorname{CVaR}_{p}(f;\distr_x)= \inf_{\alpha \in \R } \alpha +  (1- \alpha)_+ \frac{\distr_x(A)}{p}  + (-\alpha)_+  \frac{1-\distr_x(A)}{p}.   \end{align*}
Notice that the function 
\begin{align*} \zeta(\alpha) := \alpha +  (1- \alpha)_+ \frac{\distr_x(A)}{p}  + (-\alpha)_+  \frac{1-\distr_x(A)}{p}, \quad \alpha \in \R,  \end{align*}
is continuous and piecewise linear with kinks at $\alpha=0$ and $\alpha =1$. 
Moreover, since $0 < p < 1$ it holds $\zeta(\alpha)\geq \zeta(1)$ for $\alpha>1$, and $\zeta(\alpha)\geq\zeta(0)$ for $\alpha<0$. Thus the minimum of $\zeta$ is attained at either $\alpha=0$ or $\alpha =1$. Therefore
\begin{align*} \operatorname{CVaR}_{p}(f;\distr_x)  = \min \{ \zeta(0) , \zeta(1) \}  = \min\left\{ \frac{\distr_x(A)}{p}, 1\right\}
=
\Psi_p\left(\Prob{x'\sim\distr_x}{\one_A(x')}\right).
\end{align*}
This together with \cref{prop:rewrite_max_Psi} implies the claim.
\end{proof}

\section{Numerical experiments}
\label{sec:numerics}
In this section, we conduct a comparative analysis between the original formulation of PRL (denoted as ``PRL'' in \Cref{tab:combined_results_compressed_clean_3}) and our modification of PRL which is based on \labelcref{eq:proposed_problem_relaxed} (denoted as ``m-PRL''). We build upon the code of \cite{robey2022probabilistically} and our implementation is available on \texttt{GitHub}.\footnote{\url{https://github.com/DanielMckenzie/Begins_with_a_boundary}}
The algorithmic realization of \labelcref{eq:proposed_problem_relaxed} is a straightforward adaptation of their algorithm, which alternatingly minimizes the inner optimization problem that defines CVaR and the outer optimization to find a suitable classifier, see \Cref{alg_algorithm} in \Cref{sec:computational}. Our experiments are conducted on MNIST and CIFAR-10 and to ensure a fair comparison we adhere to the hyperparameter settings described in \cite{robey2022probabilistically}, such that both the original and modified algorithms utilize the same set of hyperparameters for each specified value of $p$. The corresponding results for several baseline algorithms including empirical risk minimization and adversarial training can be found in \cite{robey2022probabilistically}.

\begin{table}[ht]
    \caption{Accuracies [\%] of the geometric and original algorithm for different values of $p$.}
    \label{tab:combined_results_compressed_clean_3}
    \centering
    \begin{tabular}{cllcccccc}
    \toprule
    \textbf{Data} & \textbf{$p$} & \textbf{Algorithm} & \textbf{Clean} & \textbf{Adv} & \textbf{Aug} & \textbf{\tiny ProbAcc(0.1)} & \textbf{\tiny ProbAcc(0.05)} & \textbf{\tiny ProbAcc(0.01)} \\
    \midrule
    \multirow{8}{*}{\rotatebox[origin=c]{90}{\textbf{MNIST}}}
        &\multirow{2}{*}{0.01}
        & m-PRL & \bf 99.20 & \bf 12.19 & 99.04 & 98.18 & 97.69 & 96.38 \\
        && PRL & 99.19 & 10.76 & 98.90 & 97.94 & 97.38 & 95.67 \\
        \cmidrule{2-9}
        &\multirow{2}{*}{0.1}
        & m-PRL & 99.28 &\bf  14.20 & 99.22 & 98.70 & 98.45 & 97.86 \\
        && PRL & \bf 99.32 & 8.94 & 99.22 & 98.70 & 98.46 & 97.80 \\
        \cmidrule{2-9}
        &\multirow{2}{*}{0.3}
        & m-PRL &\bf  99.29 &\bf  3.02 & 99.21 & 98.76 & 98.53 & 97.95 \\
        && PRL & 99.27 &\bf  3.02 & 99.22 & 98.77 & 98.55 & 98.01 \\
        \cmidrule{2-9}
        &\multirow{2}{*}{0.5}
        & m-PRL &\bf  99.27 &\bf  1.80 & 99.21 & 98.72 & 98.44 & 97.93 \\
        && PRL & 99.26 & 1.68 & 99.19 & 98.72 & 98.47 & 97.80 \\
        \midrule[2pt]
        \multirow{8}{*}{\rotatebox[origin=c]{90}{\textbf{CIFAR-10}}}
        &\multirow{2}{*}{0.01}
        & m-PRL & 80.65 & 0.15 & 78.13 & 73.44 & 72.13 & 68.80 \\
        && PRL &\bf  81.73 &\bf  0.24 & 79.16 & 74.61 & 73.19 & 69.96 \\
        \cmidrule{2-9}
        &\multirow{2}{*}{0.1}
        & m-PRL & 88.15 & 0.14 & 85.96 & 82.55 & 81.46 & 78.81 \\
        && PRL &\bf  88.28 &\bf  0.19 & 85.61 & 82.21 & 81.06 & 78.28 \\
        \cmidrule{2-9}
        &\multirow{2}{*}{0.3}
        & m-PRL &\bf  90.43 &\bf  11.80 & 88.70 & 85.17 & 83.93 & 80.93 \\
        && PRL & 89.97 & 7.20 & 88.62 & 85.07 & 83.75 & 80.87 \\
        \cmidrule{2-9}
        &\multirow{2}{*}{0.5}
        & m-PRL & \bf 91.51 & 1.93 & 88.94 & 85.53 & 84.18 & 81.21 \\
        && PRL & 90.74 & \bf 1.99 & 88.94 & 85.54 & 84.35 & 81.57 \\
        \bottomrule
    \end{tabular}
    \end{table}

\subsection{Comparing accuracy and robustness}
We report the clean accuracy, adversarial accuracy (subject to PGD attacks), accuracy on noise-augmented data, and quantile accuracies for different values of $\rho$ defined---as in \cite[(6.1)]{robey2022probabilistically}---for a single data point $(x,y)$ by
\begin{equation}
    \ProbAcc_{\rho}(x,y) = \one_{\Prob{{x'}\sim\distr_x}{h(x')=y}>p}.
\end{equation}
In practice, we use an empirical proportion, computed over 100 samples from $\distr_x$, in lieu of the true probability, as done in \cite{robey2022probabilistically}. All quantities are averaged over three runs, and we perform model selection based on the best clean validation accuracy; see \Cref{sec:training_details} for more training details.

The results in \Cref{tab:combined_results_compressed_clean_3} demonstrate that the geometric modification does not compromise the accuracy of the original algorithm, and sometimes leads to improved performance, even as it provides stronger theoretical guarantees. Note that neither PRL nor m-PRL should be expected to match the adversarial robustness of classifiers trained with PGD attacks \cite{madry2017towards} or other worst-case optimization techniques. Instead, they shine with superior clean accuracy and easier training while maintaining probabilistic robustnesss, as well as a certain degree of adversarial robustness. This corroborates the findings of \cite{robey2022probabilistically}.

\subsection{On the existence of pathological points}
\label{sec:Pathological_Sec}
We say $(x,y)$  is a pathological data point if $x$ is incorrectly classified yet a large proportion of perturbations to $x$ yield a correct classification. See \cref{fig:spike_solution} for a simple example. Although the discussion of \Cref{sec:Introduction} asserts that such points {\em can} arise for classifiers trained using PRL, it is interesting to verify whether this phenomenon occurs in practice.

To do so, we examine all misclassified CIFAR-10 images, in both test and train splits, for a model trained using PRL as described above. For each image $x$, we generate 100 samples $x^{\prime}\sim \mathfrak{m}_x$ and compute the proportion of such samples which are correctly classified. As shown in \cref{fig:cifar_001_histogram}, while for the vast majority of such $x$, all $100$ $x^{\prime}$ remain incorrectly classified, images $x$ exist where an arbitrarily large proportion of the $x^{\prime}$ are correctly classified. In short, pathological points do occur in practice.

We repeated the same experiment, but with a model trained using m-PRL. Intriguingly, while m-PRL is guaranteed in theory to prevent pathological points (see \cref{prop:Patho} in the Appendix), in practice they still occur, see \cref{fig:cifar_001_histogram}.  We did not find strong empirical evidence that using m-PRL results in fewer pathological points in practice, see also \cref{fig:cifar_05_histogram,fig:cifar_01_histogram,fig:cifar_03_histogram}. We attribute this gap between theory and practice to the fact that m-PRL (as well as PRL)  approximates the value of $\operatorname{CVaR}_p$, which involves a high-dimensional expectation, with an empirical average, see \labelcref{eq:proposed_problem_relaxed} and line 7 in \cref{alg_algorithm}).

\begin{figure}[ht]
    \begin{subfigure}[b]{0.45\linewidth}
        \centering
        \includegraphics[width=\linewidth]{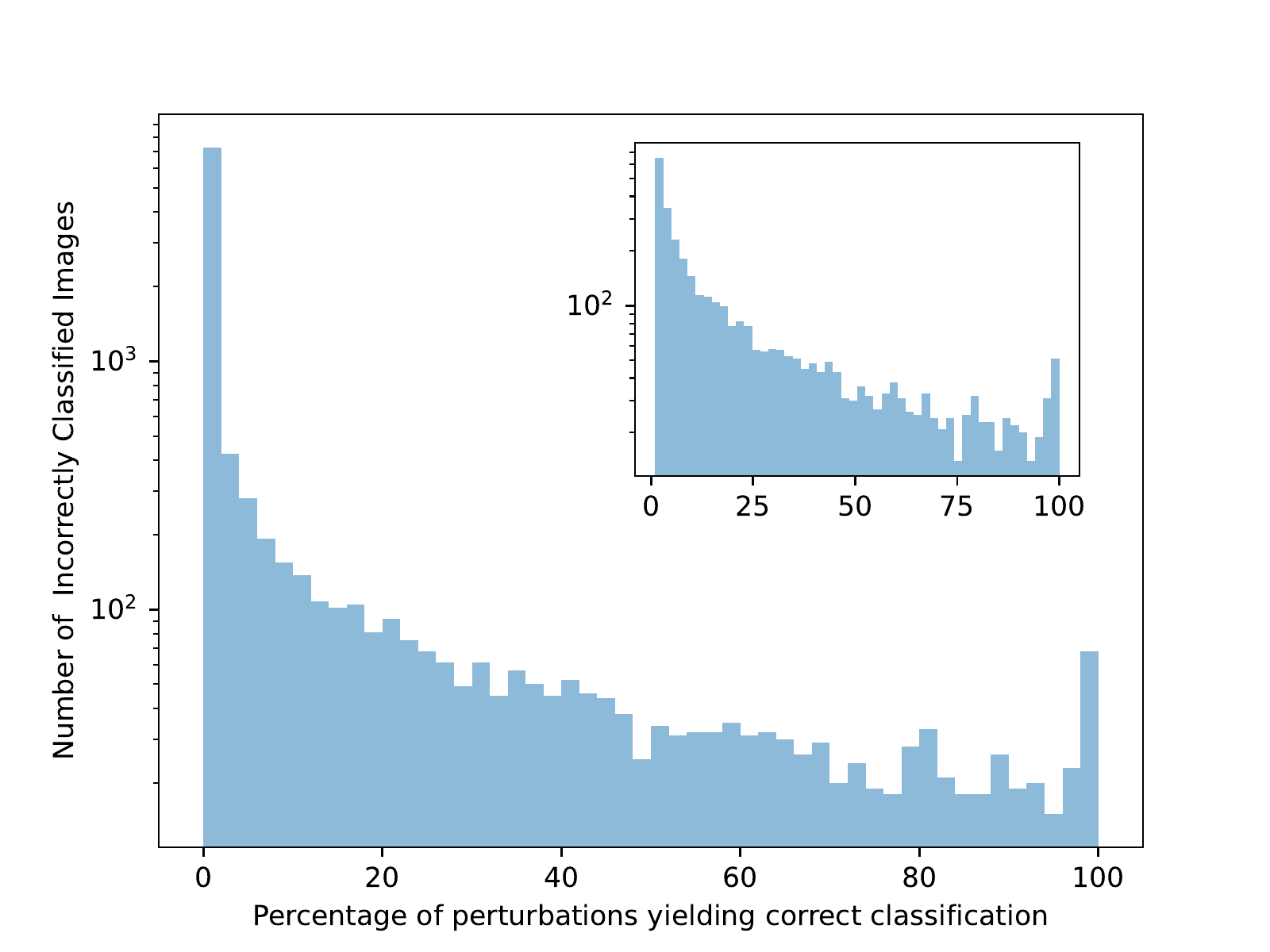}
        \caption{$p=0.01$, train, original PRL}
    \end{subfigure}
    \hfill
    \begin{subfigure}[b]{0.45\linewidth}
        \centering
        \includegraphics[width=\linewidth]{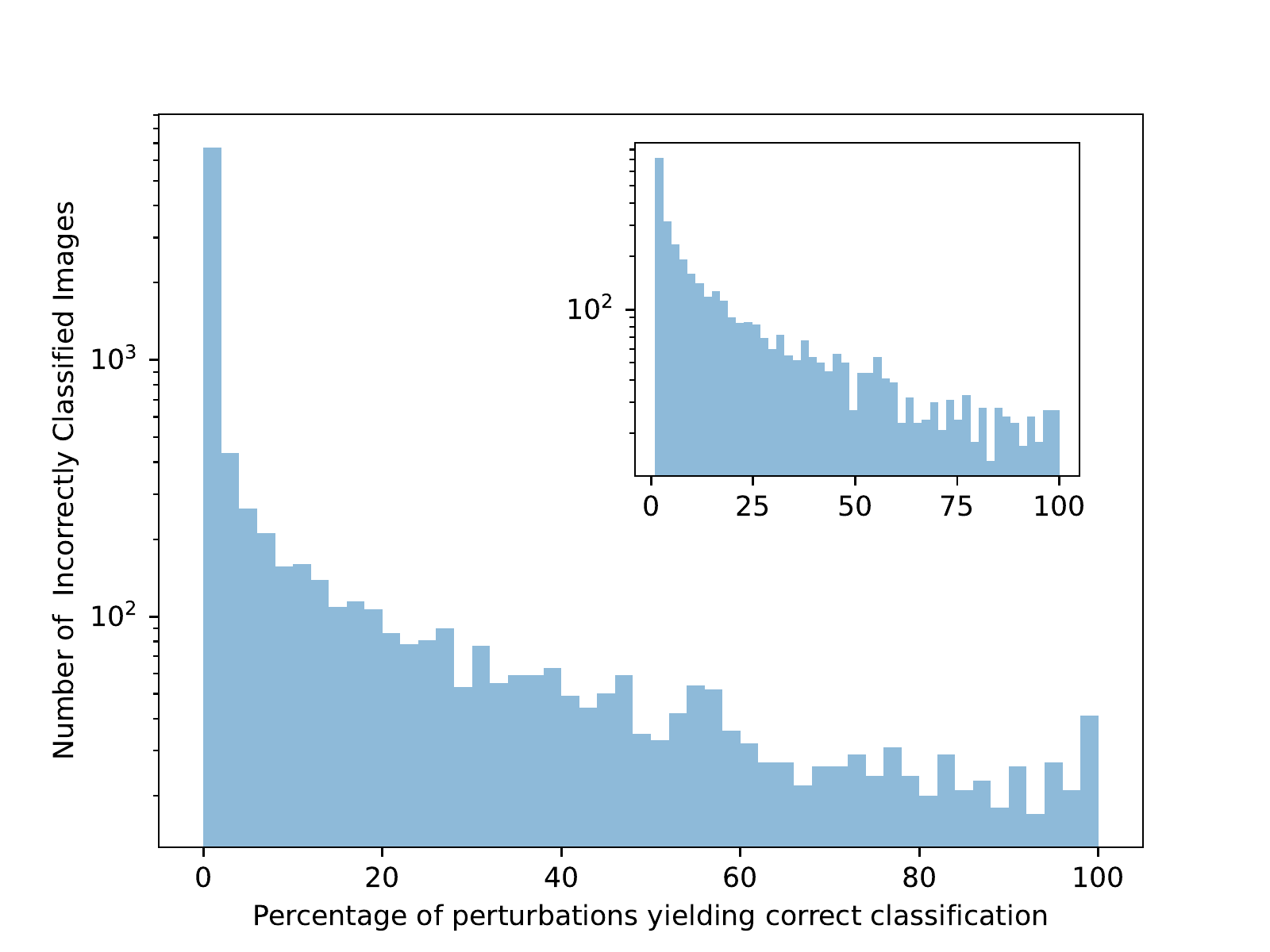}
        \caption{$p=0.01$, train, m-PRL}
    \end{subfigure}

    \begin{subfigure}[b]{0.45\linewidth}
        \centering
        \includegraphics[width=\linewidth]{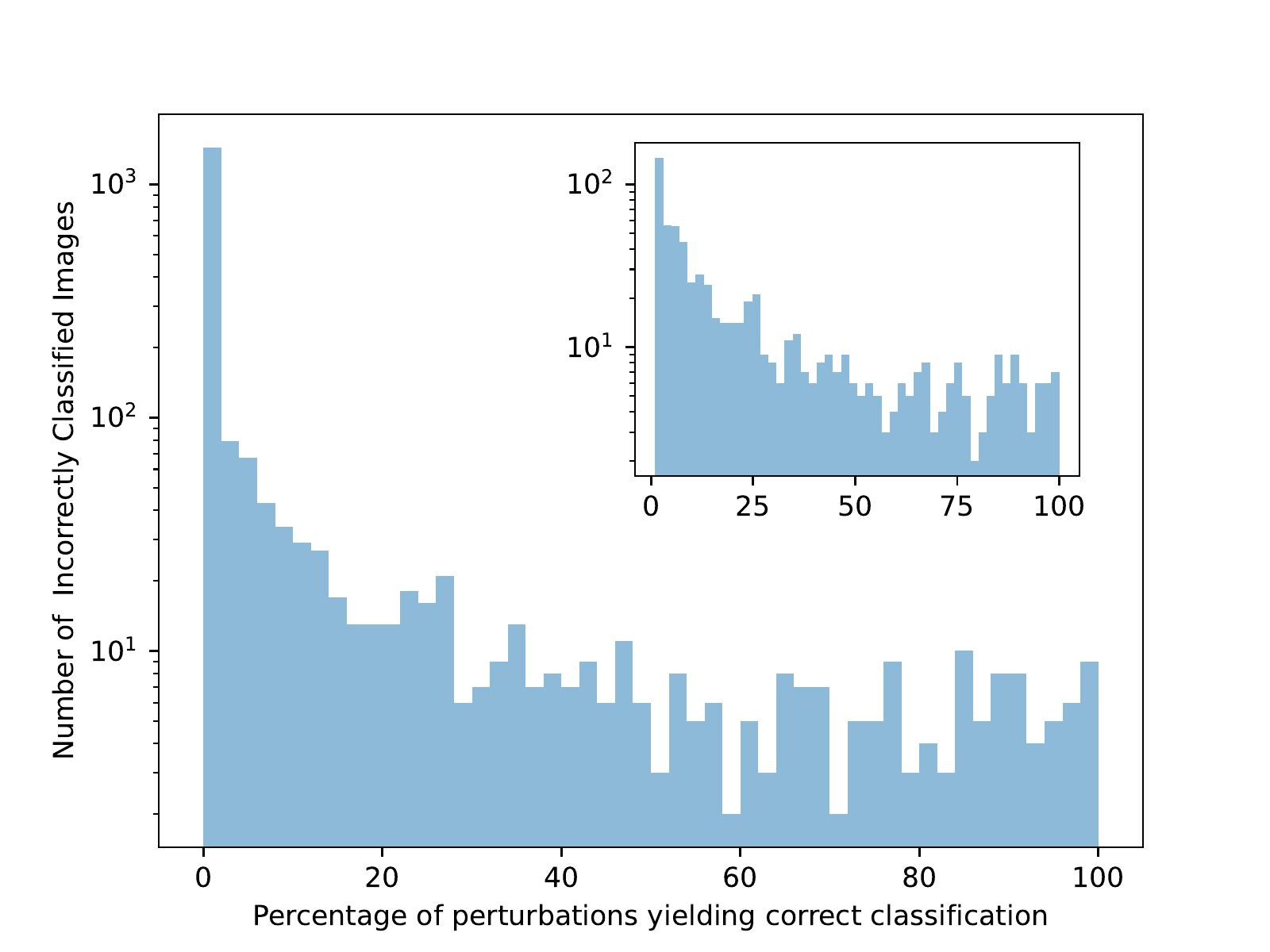}
        \caption{$p=0.01$, test, original PRL}
    \end{subfigure}
    \hfill
    \begin{subfigure}[b]{0.45\linewidth}
        \centering
        \includegraphics[width=\linewidth]{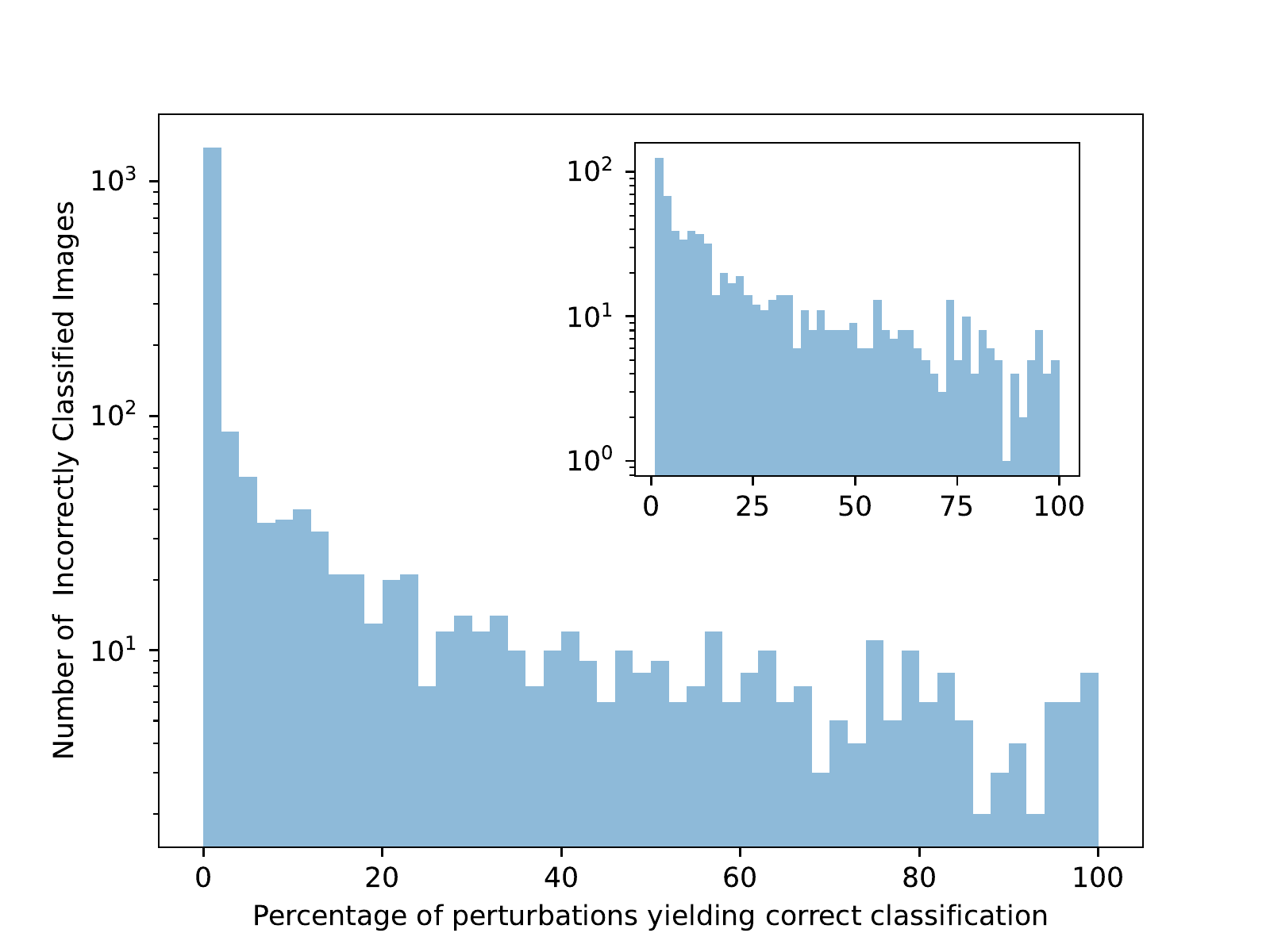}
        \caption{$p=0.01$, test, m-PRL}
    \end{subfigure}
    \caption{Histograms display the distribution of percentages of correctly classified perturbations among misclassified images for both original and m-PRL with parameter $p=0.01$. The inner plot excludes the prevalent $0\%$ case. The plots show that pathological data points---i.e. data points which are misclassified yet most perturbations to the data point are correctly classified---occur in real datasets.}
    \label{fig:cifar_001_histogram}
\end{figure}

\section{Discussion and Conclusion}
\label{sec:Conclusions}
In this paper we considered probabilistically robust learning (PRL), originally proposed in \cite{robey2022probabilistically}. We introduced a modification of the original PRL model that allowed us to address, at least theoretically, the possibility that certain solutions to the model possess pathological points as described in Figure 1. This modification has the appeal of being interpretable through the lens of regularized risk minimization, where the regularization terms take the form of a non-local perimeters of interest in their own right. We discussed an asymptotic expansion for smooth decision boundaries to show that for small adversarial budgets these probabilistic perimeters induce the same regularization effect as the original adversarial training model. 
For binary classification we proved existence of optimal hard classifiers and of very general classes of soft classifiers including neural networks in both the original and modified PRL settings. This was done through novel relaxation techniques taking advantage of the structure of our functionals. Finally, through rigurous $\Gamma$-convergence analysis we provided a detailed discussion on the relation between adversarial training, risk minimization, and all the PRL models, highlighting that the original PRL model fails to interpolate between risk minimization and adversarial training, contrary to claims in previous works. For general (not necessarily binary) problems we showed that the natural loss function to choose is the sample-wise maximum of the standard loss and conditional value at risk (CVaR).

One limitation of PRL is that it does not completely solve the accuracy vs. robustness trade-off, which remains a challenging problem.
Furthermore, while the formal limit of PRL as $p\to 0$ is the worst-case adversarial problem, the algorithms for solving PRL exhibit limitations for very small values of $p$ (in the computation of $\operatorname{CVaR}_p$). 
Still, the results for moderately large values of $p$ are encouraging and future work should focus on understanding of this trade-off better.

The rich mathematical theory developed in this paper opens up new avenues for research, such as the explicit design of probabilistic regularizers for algorithms and exploring the variational convergence of the probabilistic perimeter and its implications for adversarial robustness.

\section*{Acknowledgment}

This material is based upon work supported by the National Science Foundation under Grant Number DMS 1641020 and was started during the summer of 2022 as part of the AMS-MRC program \textit{Data Science at the Crossroads of Analysis, Geometry, and Topology}. NGT was supported by the NSF grants DMS-2005797 and DMS-2236447.   MJ was supported by NSF grant DMS-2400641.

\printbibliography



\appendix

\red

\section{Pathological points in the modified PRL model} 

Contrary to the situation presented in \cref{fig:spike_solution}, where we describe unintuitive features of a solution of the original PRL model, the modified PRL model possesses an interesting stability property that prevents their minimizers from having pathological points (recall the discussion in \cref{sec:Pathological_Sec}). This property is independent of any specifics of the family $\{ \distr_{x} \}_{x \in \X}$ and for example holds in the Euclidean setting when we set $\distr_x$ to be the uniform measure over the ball $B_\veps(x)$.

\begin{proposition}
\label{prop:Patho}
If $A$ is a minimizer of $\ProbRisk_{\Psi_0}$ with $\Psi_0(t) = \one_{t >0}$, then for $\rho_0 $-a.e. $x \in \X$ we have: $\one_{A}(x)=1$ implies $ \distr_x\text{-}\esssup \one_A  =1$. Likewise, for $\rho_1$-a.e. $x \in \X$, $\one_{A^c}(x)=1$ implies $\distr_x\text{-}\esssup \one_{A^c}  =1$.
\end{proposition}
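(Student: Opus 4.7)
The plan is to argue by contradiction. Set $N := \{x \in A : \distr_x(A) = 0\}$ and assume $\rho_0(N) > 0$; I will then construct a strict competitor $A' \subsetneq A$ with $\ProbRisk_{\Psi_0}(A') < \ProbRisk_{\Psi_0}(A)$, contradicting the minimality of $A$. The second assertion of the proposition follows by the same argument with $A$ replaced by $A^c$ and the roles of $\rho_0$ and $\rho_1$ exchanged.

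The first step is to recast the risk in its sample-wise form using \cref{prop:rewrite_max}, namely
\[ \ProbRisk_{\Psi_0}(A) = \int_\X \one_{x\in A\,\vee\,\distr_x(A)>0}\de\rho_0 + \int_\X \one_{x\in A^c\,\vee\,\distr_x(A^c)>0}\de\rho_1, \]
and compare $\ProbRisk_{\Psi_0}(A)$ against $\ProbRisk_{\Psi_0}(A')$ for the naive choice $A' := A \setminus N$. Pointwise bookkeeping of the two integrands gives: on $N$ the $\rho_0$-integrand drops from $1$ to $0$ (since $x \notin A'$ and $\distr_x(A') \leq \distr_x(A) = 0$), while the $\rho_1$-integrand is unchanged at $1$ (because $\distr_x(A^c) = 1 > 0$ for $x \in N$); on $A^c$ the $\rho_0$-integrand is non-increasing and the $\rho_1$-integrand is unchanged; and on $A \setminus N$ the $\rho_0$-integrand is unchanged but the $\rho_1$-integrand $\one_{\distr_x(A'^{c})>0}$ can strictly exceed $\one_{\distr_x(A^c)>0}$, precisely on $B := \{x \in A \setminus N : \distr_x(A^c) = 0 \text{ and } \distr_x(N) > 0\}$. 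Summing the contributions yields $\ProbRisk_{\Psi_0}(A') - \ProbRisk_{\Psi_0}(A) \leq -\rho_0(N) + \rho_1(B)$.

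Because $\rho_1(B)$ need not be smaller than $\rho_0(N)$, this naive competitor does not immediately improve the risk, and one must refine the choice of the removed set. The key step is to introduce the nonnegative measure $\nu_1(E) := \int_{A \setminus N} \distr_x(E)\de\rho_1(x)$ on $N$ and to perform the Lebesgue decomposition $\rho_0\lfloor_N = \rho_0^s + \rho_0^{ac}$ with $\rho_0^s \perp \nu_1$ and $\rho_0^{ac} \ll \nu_1$. Picking a Borel set $N_0 \subset N$ supporting $\rho_0^s$ and with $\nu_1(N_0) = 0$, one has $\distr_x(N_0) = 0$ for $\rho_1$-a.e.\ $x \in A \setminus N$, which annihilates the defect on $B$ when the bookkeeping above is repeated with $N$ replaced by $N_0$. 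This delivers the strict inequality $\ProbRisk_{\Psi_0}(A \setminus N_0) \leq \ProbRisk_{\Psi_0}(A) - \rho_0^s(N)$, which is the sought contradiction whenever $\rho_0^s(N) > 0$.

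The step I expect to be the main obstacle is precisely securing $\rho_0^s(N) > 0$; if $\rho_0\lfloor_N$ happens to be fully absolutely continuous with respect to $\nu_1$, the refinement collapses and the competitor coincides, modulo a $\rho_0$-null set, with $A$ itself. This degeneracy is ruled out under a mild compatibility assumption between the data measures and the perturbation family, for instance joint absolute continuity of $\rho_0, \rho_1$ and each $\distr_x$ with respect to a common reference measure (as in the prototypical Euclidean model $\distr_x = \operatorname{Unif}(B_\varepsilon(x))$ with $\rho_0, \rho_1 \ll \mathcal{L}^d$), since in this regime a Lebesgue density argument forces points of $N \subset A$ to be points of zero density of $A$ and thus $\rho_0(N) = 0$ from the outset. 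Handling the general case requires invoking such additional structure or extracting a finer density-type estimate; this is the only delicate point in the argument.
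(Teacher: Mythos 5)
Your direct competitor construction is a genuinely different approach from the paper's. The paper derives \cref{prop:Patho} in three lines from earlier results: by \cref{prop:ConsistencyEnergies} (consistency of optimal energies) and \cref{prop:Ordering} (pointwise ordering $\ProbRisk_{\Psi_0}\geq\Risk_{\Psi_0}$), any minimizer $A$ of $\ProbRisk_{\Psi_0}$ satisfies $\Risk_{\Psi_0}(A)\leq\ProbRisk_{\Psi_0}(A)=\min\ProbRisk_{\Psi_0}=\min\Risk_{\Psi_0}\leq\Risk_{\Psi_0}(A)$, so the pointwise nonnegative difference between the two integrands must vanish $\rho_0$-a.e.\ and $\rho_1$-a.e., which is exactly the claimed implication. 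This is slick but leans on the heavy machinery of \cref{prop:ConsistencyEnergies}, and both results implicitly require \cref{assump:ABsCont} (a condition on the family $\{\distr_x\}$ alone, with $\rho_0,\rho_1$ completely arbitrary — e.g.\ empirical).

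Your bookkeeping for the naive competitor $A\setminus N$ and the Lebesgue-decomposition refinement to $A\setminus N_0$ are both correct as far as they go: with $\nu_1(N_0)=0$ the defect set $B$ is indeed annihilated (you also correctly handle the $N\setminus N_0$ part of the $\rho_1$-integrand, where $\distr_x(A^c)=1>0$ already forces the integrand to be $1$ before and after). But you correctly identify the genuine gap: the argument yields a contradiction only when $\rho_0^s(N)>0$, and nothing in the paper's hypotheses guarantees this. Your proposed fix — joint absolute continuity of $\rho_0,\rho_1,\distr_x$ with respect to a common reference measure — is strictly stronger than \cref{assump:ABsCont}, and moreover it excludes the paper's prototypical empirical setting $\rho=\frac1N\sum_i\delta_{x_i}$. (Ironically, in the purely atomic case your Lebesgue decomposition \emph{does} close the argument, since $\rho_0\lfloor_N$ is then singular against the absolutely continuous $\nu_1$; it is the mixed cases that resist, and those are exactly the ones you propose to dispatch by a density-point argument that sidesteps the competitor construction entirely.) As written, the proposal therefore does not establish the proposition under the paper's hypotheses, and I do not see how to close the gap without either importing the consistency theorem or adding assumptions beyond \cref{assump:ABsCont}.
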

\begin{proof}
We first observe that if $A$ minimizes $\ProbRisk_{\Psi_0}$, then it also minimizes $\Risk_{\Psi_0}$. To see this, let $\ProbRisk_{\Psi_0}^*$ and $\Risk_{\Psi_0}^*$ be the infima of $\ProbRisk_{\Psi_0}$ and $\Risk_{\Psi_0}$, respectively. We then observe that for any given measurable $\tilde{A}$ we have
\[  \Risk_{\Psi_0}(\tilde A) \geq \Risk_{\Psi_0}^*=  \ProbRisk_{\Psi_0}^* =  \ProbRisk_{\Psi_0}(A) \geq \Risk_{\Psi_0}(A), \]
where the second equality follows from \cref{prop:ConsistencyEnergies} and the last inequality from \cref{prop:Ordering}. This proves that $A$ is indeed a minimizer of $\Risk_{\Psi_0}$. From the proof of \cref{prop:Ordering} we had already observed that for every $x \in \X$ 
\[ \one_{A^c}(x) \cdot \distr_x\text{-}\esssup \one_A + \one_{A}(x)  \geq   \distr_x\text{-}\esssup \one_A,  \]
as well as
\[ \one_{A}(x) \cdot \distr_x\text{-}\esssup \one_{A^c} + \one_{A^c}(x)  \geq   \distr_x\text{-}\esssup \one_{A^c} . \]
But since we also have
\begin{align*}
  \Risk_{\Psi_0}(A)=  \ProbRisk_{\Psi_0}(A) & \geq \int _{\X} (\one_{A^c}(x) \cdot \distr_x\text{-}\esssup \one_A + \one_{A}(x) ) \de \rho_0(x)
  \\  & +  \int _{\X}  (\one_{A}(x) \cdot \distr_x\text{-}\esssup \one_{A^c} + \one_{A^c}(x)) \de \rho_1(x)
  \\ & \geq \int _{\X}  \distr_x\text{-}\esssup \one_A  \de \rho_0(x) +  \int _{\X}  \distr_x\text{-}\esssup \one_{A^c}  \de \rho_1(x)
  \\& = \Risk_{\Psi_0}(A),
\end{align*}
we deduce that for $\rho_0$-a.e. $x\in \X$ 
\[  \one_{A^c}(x) \cdot \distr_x\text{-}\esssup \one_A + \one_{A}(x)  =   \distr_x\text{-}\esssup \one_A  \]
and for $\rho_1$-a.e. $x \in \X$ 
\[ \one_{A}(x) \cdot \distr_x\text{-}\esssup \one_{A^c} + \one_{A^c}(x)  =   \distr_x\text{-}\esssup \one_{A^c} . \]
The desired implications now follow.
\end{proof}

\nc

\section{Computational aspects of PRL}
\label{sec:computational}
\subsection{Pseudocode for geometric probabilistically robust learning}

In \Cref{alg_algorithm} we provide a pseudocode for parametrized classifiers $f \equiv f_\theta : \X \to \Y$ based on stochastic gradient descent with batch size $B$.
Furthermore, it involves a sample size of $M$ samples from a distribution $\distr_x$ around an input $x\in\X$, a learning rate $\eta_\alpha$ for the inner optimization in CVaR, and a learning rate $\eta$ for the parameter updates.
The pseudocode is a straightforward generalization of \cite[Algorithm 1]{robey2022probabilistically} and we implemented it in their code framework.\footnote{\url{https://github.com/arobey1/advbench}}
The code which can be used to reproduce our results is part of the supplementary material of this paper.

\begin{algorithm}[htb]
\renewcommand{\algorithmiccomment}[1]{\hfill\eqparbox{COMMENT}{$\triangleright$ #1}}
\begin{algorithmic}[1]
\For{minibatch $(x_j,y_j)_{j=1}^B$} 
    \For{$T$ steps} \Comment{Approximate solution of inner problem}
        \State Draw $\displaystyle x'_k \sim \distr_{x_j}$, $k=1,\dots,M$
        \State $\displaystyle g_{\alpha_j}\gets 1-\frac{1}{p M}\sum_{k=1}^M\one_{\displaystyle\ell(f_\theta(x'_k),y_j)\geq\alpha_j}$
        \State $\displaystyle \alpha_j \gets \alpha_j - \eta_\alpha g_{\alpha_j}$
    \EndFor
    \State $\displaystyle S_j \gets \alpha_j + {\frac{1}{pM}\sum_{k=1}^M\left(\ell(f_\theta(\displaystyle x'_k),y_j)-\alpha_j\right)_+}$ \Comment{Approximate value of $\operatorname{CVaR}_p$}
    \If{$\displaystyle S_j > \ell(f_\theta(x_j),y_j))$} \Comment{If $\operatorname{CVaR}_p$ kicks in}
        \State $\displaystyle g_j\gets \frac{1}{p M}\sum_{k=1}^M\nabla_\theta\left(\ell(f_\theta(\displaystyle x'_k),y_j)-\alpha_j\right)_+$
    \Else  \Comment{If it doesn't}
        \State $\displaystyle g_j\gets\nabla_\theta\ell(f_\theta(x_j),y_j)$
    \EndIf
    \State $\displaystyle g\gets \frac{1}{B}\sum_{j=1}^B g_j$ \Comment{Compute full $\theta$-gradient}
    \State $\displaystyle v \gets \operatorname{optimizer}(g)$ \Comment{AdaDelta or SGD(+M)}
    \State $\displaystyle \theta \gets \theta - \eta v$ \Comment{Update parameters}
\EndFor
\end{algorithmic}
\caption{\label{alg_algorithm}Proposed algorithm for solving \labelcref{eq:proposed_problem_relaxed} for $p\in(0,1)$.}
\end{algorithm}

\subsection{Training details}
\label{sec:training_details}
Hyperparameter values specific to \Cref{alg_algorithm} used in training are presented in \Cref{tab:hyperparams}. 
Following \cite{robey2022probabilistically},
we use AdaDelta \cite{zeiler2012adadelta} for MNIST experiments and SGD with momentum for CIFAR-10 experiments. 
The MNIST experiments use a CNN architecture with two convolutional layers (32 and 64 filters, size 3x3), two dropout layers (dropout probabilities 0.25, 0.5), and two fully connected layers (dimensions 9216 to 128 and 128 to 10).
A ResNet-18 \cite{he2016deep} is used in the CIFAR-10 experiments. 
The hyperparameter values used for these algorithms are contained in {\tt hparams\_registry.py} in the accompanying code. 

\begin{table}[htbp]
    \caption{Hyperparameters used for training. The probability distribution $\distr_x$ is always taken to be the uniform distribution over the ball $B_\eps(x)$. Note that $p$ is called {\tt beta} in the code. For consistency we always used the same hyperparameter values for the ``Geometric'' and ``Original'' versions.}
    \label{tab:hyperparams}
    \centering
    \begin{tabular}{cllcccc}
    \toprule
    \textbf{Data} & \textbf{$p$} & $\varepsilon$ & $\eta_{\alpha}$ & M & T \\
    \midrule
    \multirow{4}{*}{\rotatebox[origin=c]{90}{\textbf{MNIST}}} & $0.01$ & $0.3$&$0.1$ & $20$ & $5$ \\
                      & $0.1$ & $0.3$& $1.0$&$20$ & $5$ \\
                      & $0.3$ & $0.3$&$1.0$ & $20$ & $5$ \\
                      & $0.5$ & $0.3$&$1.0$ & $20$ & $5$ \\
     \midrule[2pt]
    \multirow{4}{*}{\rotatebox[origin=c]{90}{\textbf{CIFAR-10}}} & $0.01$ & $8/255$ & $0.1$ & $20$ & $5$ \\
        & $0.1$ & $8/255$ & $1.0$ & $20$ & $5$ \\
        & $0.3$ & $8/255$ & $1.0$ & $20$ & $5$ \\
        & $0.5$ & $8/255$ & $1.0$ & $20$ & $5$ \\
     \bottomrule
    \end{tabular}
    \end{table}

\subsection{Computational resources used}
We performed the majority of the prototyping and some experimentation on a LambdaLabs Vector workstation equipped with 3 NVIDIA A6000 GPUs. We estimate that we used approximately 500 GPU-hours on this machine. We supplemented this with 550 GPU-hours of cloud compute---using the Lambda GPU cloud---predominately on instances equipped with a single A10 GPU. 

\clearpage
\subsection{Further numerical experiments}\label{sec:further_exp}
In \cref{fig:cifar_01_histogram,fig:cifar_03_histogram,fig:cifar_05_histogram}
we provide further numerical experiments on pathological points with parameters $p=0.1, 0.3$, and $0.5$
\begin{figure}[ht]
    \begin{subfigure}[b]{0.45\linewidth}
        \centering
        \includegraphics[width=\linewidth]{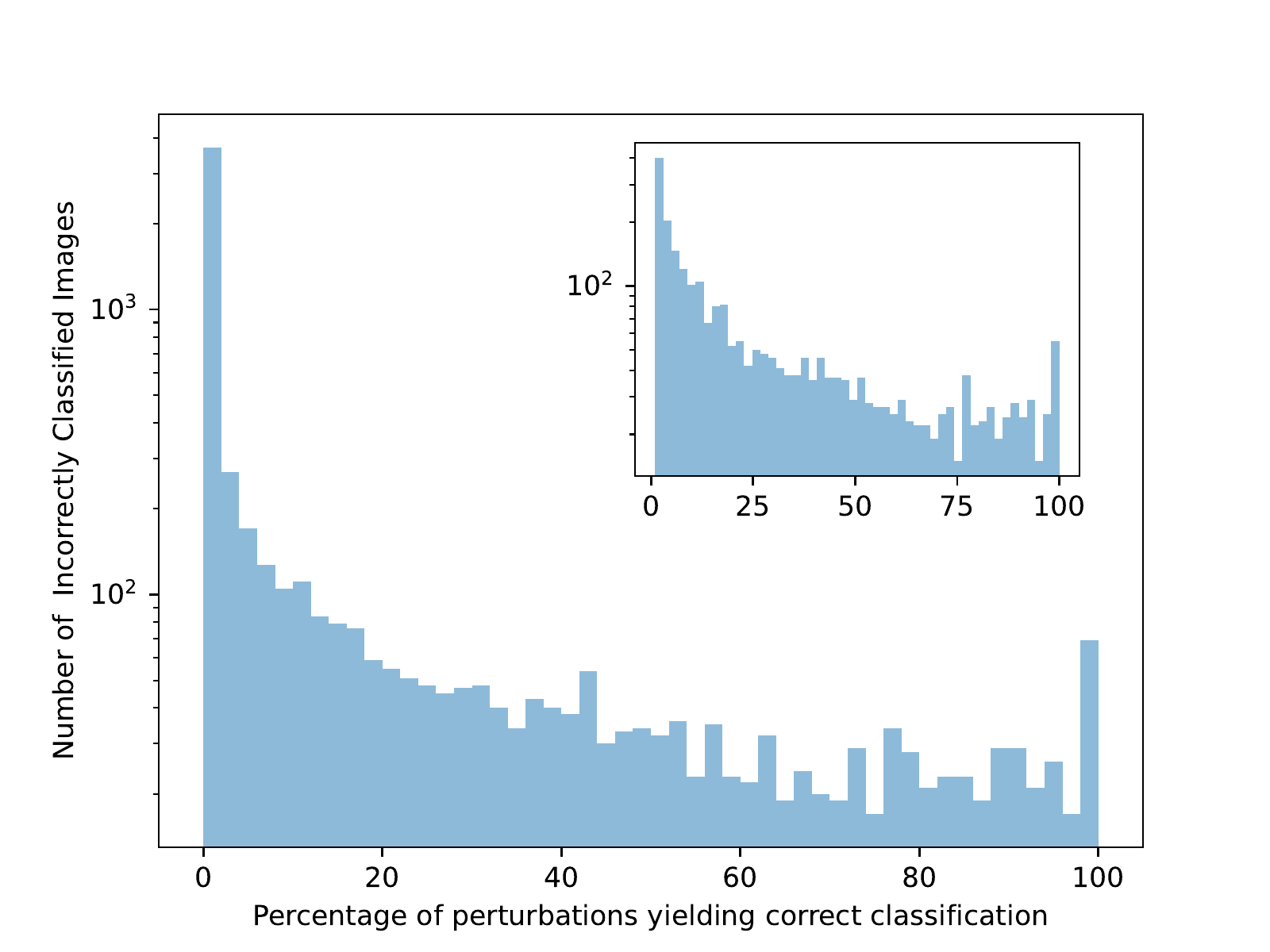}
        \caption{$p=0.1$, train, original PRL}
    \end{subfigure}
    \hfill
    \begin{subfigure}[b]{0.45\linewidth}
        \centering
        \includegraphics[width=\linewidth]{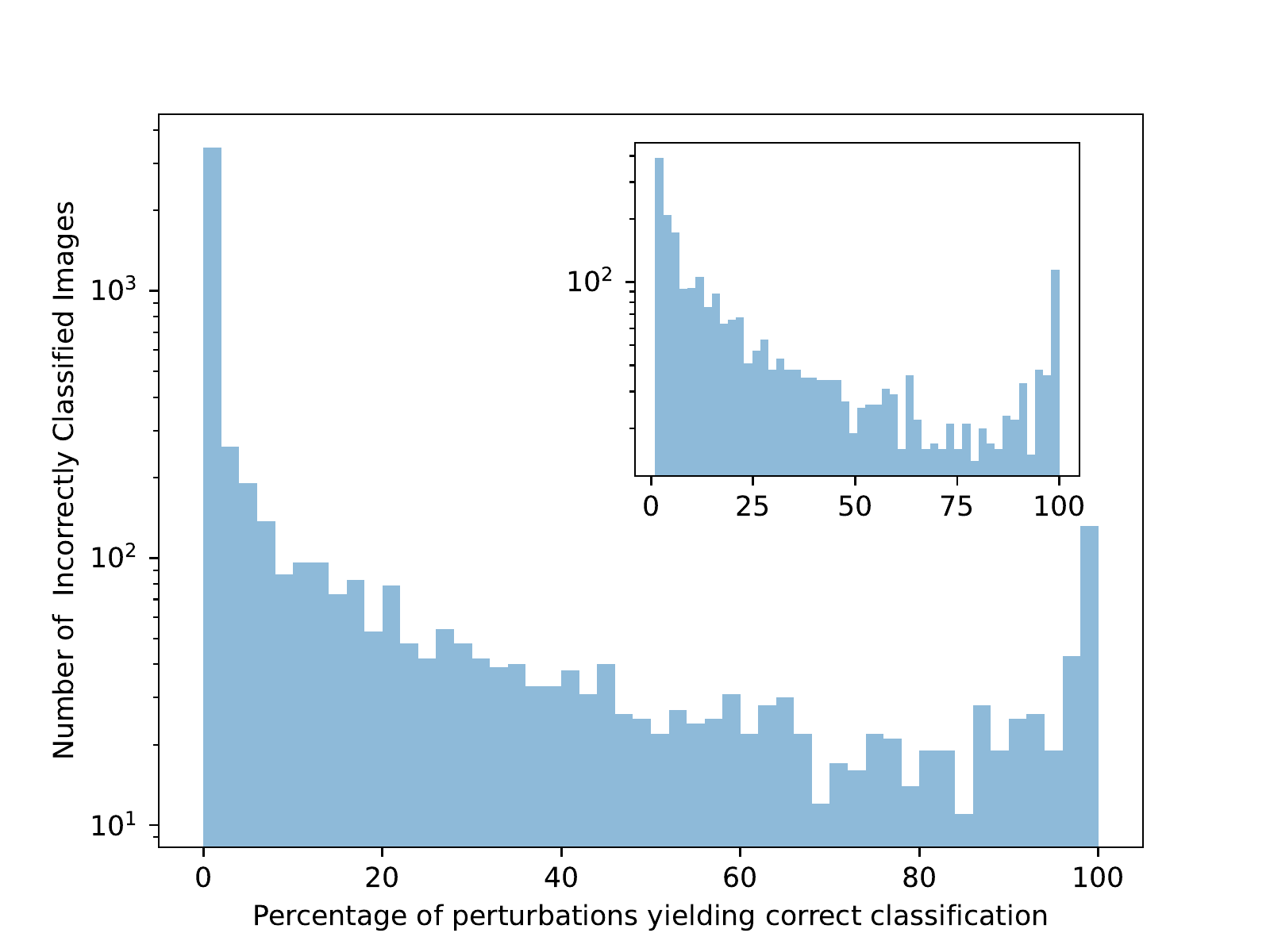}
        \caption{$p=0.1$, train, m-PRL}
    \end{subfigure}

    \begin{subfigure}[b]{0.45\linewidth}
        \centering
        \includegraphics[width=\linewidth]{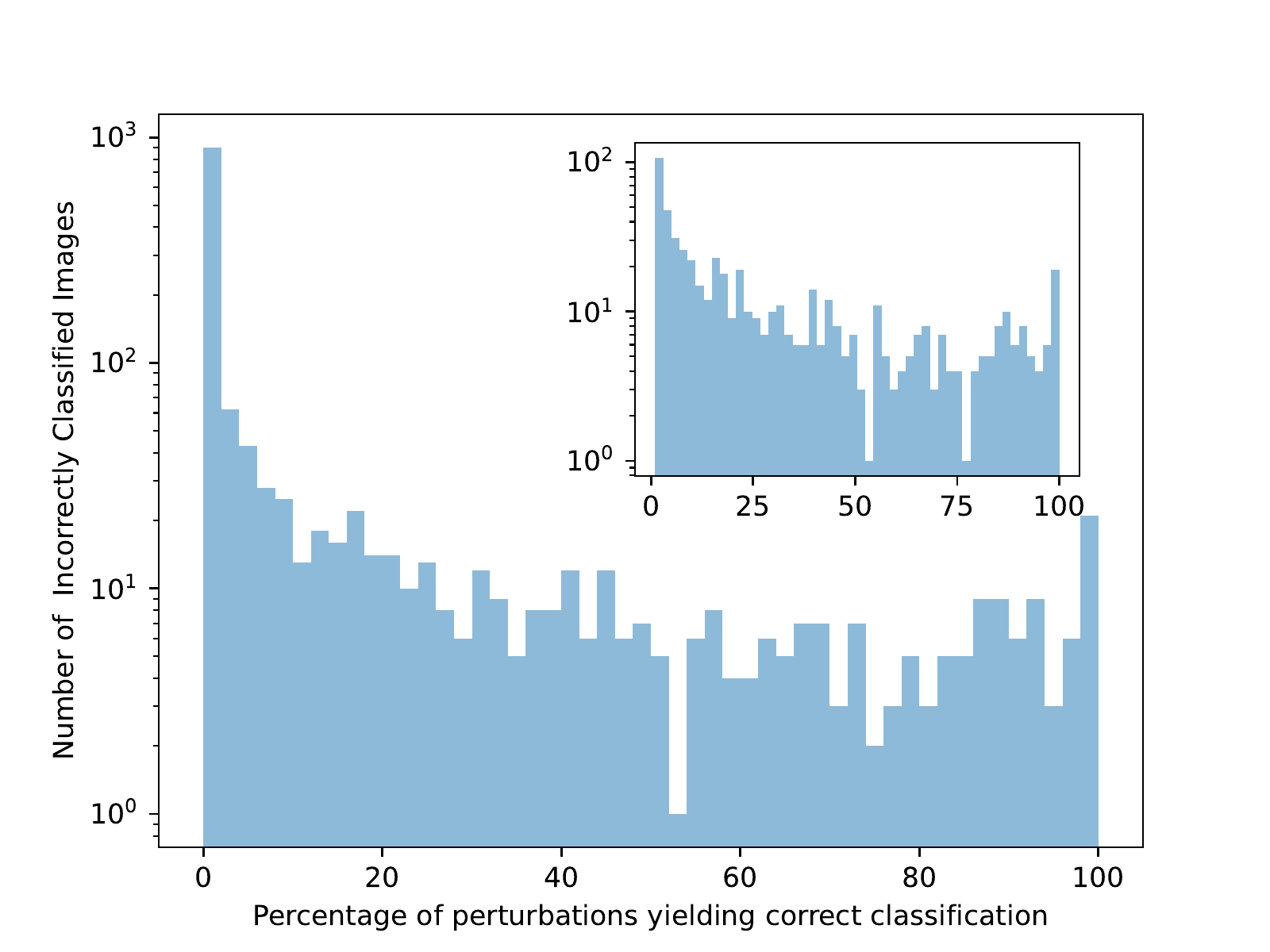}
        \caption{$p=0.1$, test, original PRL}
    \end{subfigure}
    \hfill
    \begin{subfigure}[b]{0.45\linewidth}
        \centering
        \includegraphics[width=\linewidth]{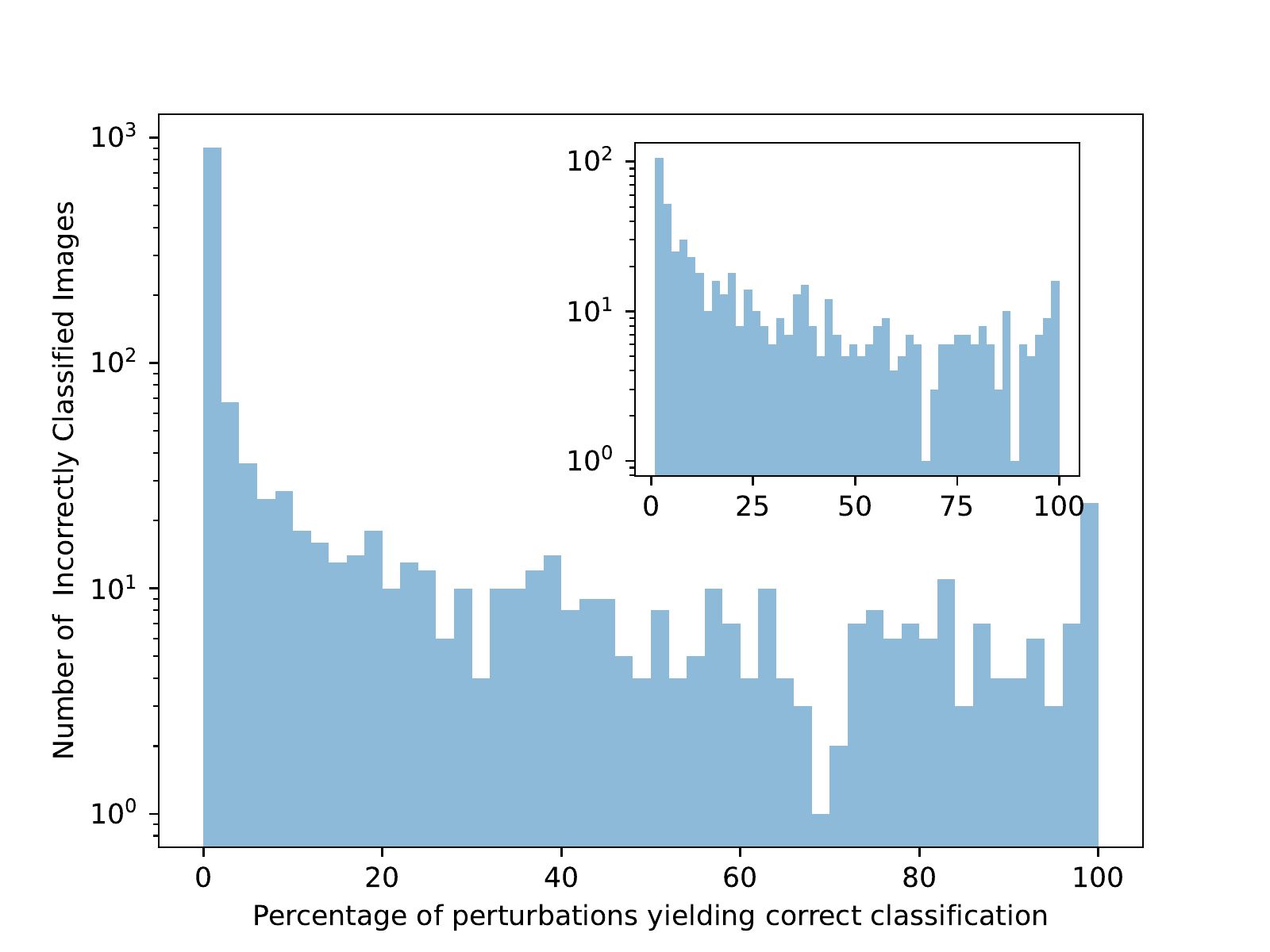}
        \caption{$p=0.1$, test, m-PRL}
    \end{subfigure}
    \caption{Histograms showing the distribution of percentages of correctly classified perturbations among misclassified images for both original and m-PRL with parameter $p=0.1$.}
    \label{fig:cifar_01_histogram}
\end{figure}

\begin{figure}
    \begin{subfigure}[b]{0.45\linewidth}
        \centering
        \includegraphics[width=\linewidth]{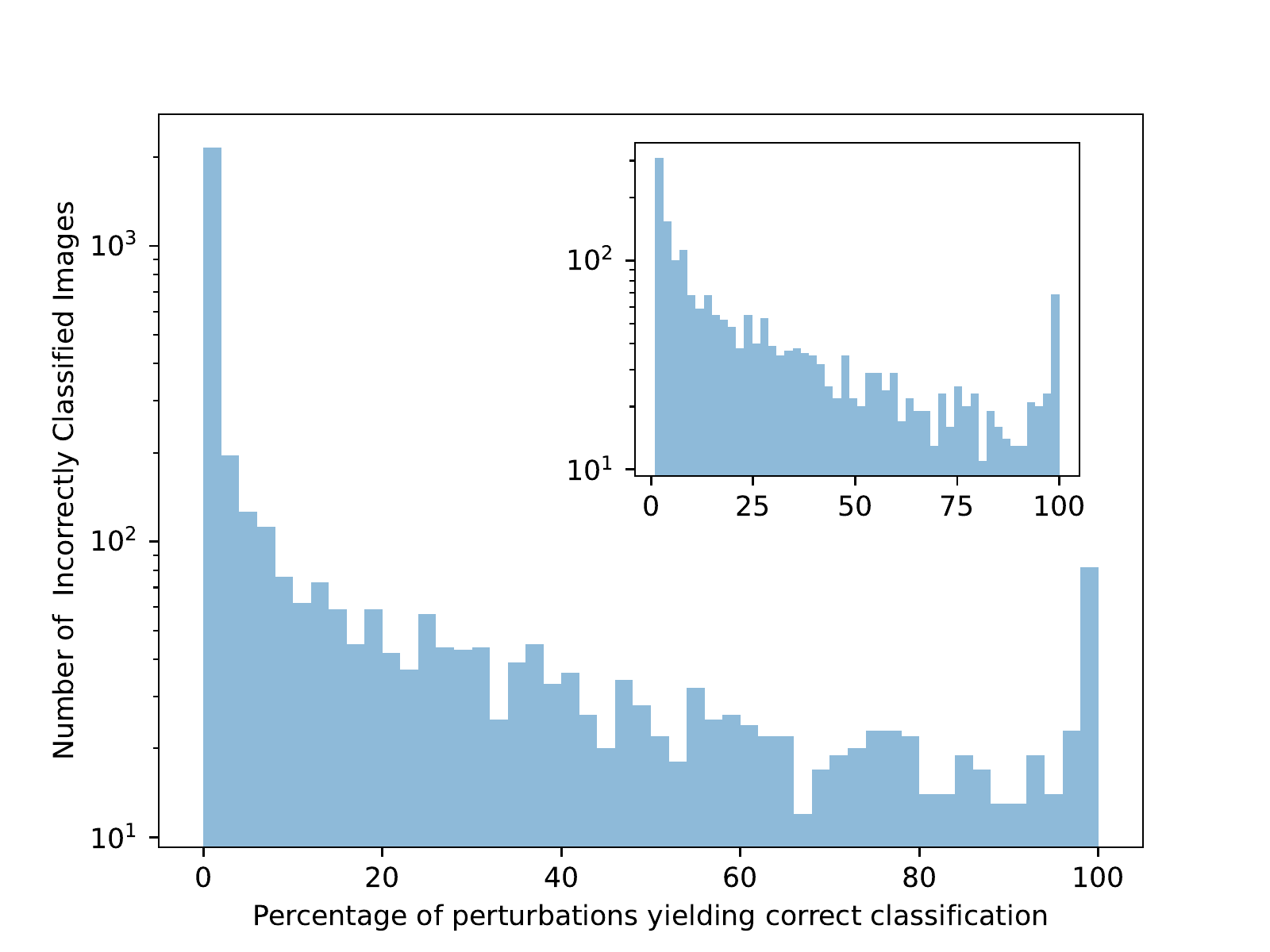}
        \caption{$p=0.3$, train, original PRL}
    \end{subfigure}
    \hfill
    \begin{subfigure}[b]{0.45\linewidth}
        \centering
        \includegraphics[width=\linewidth]{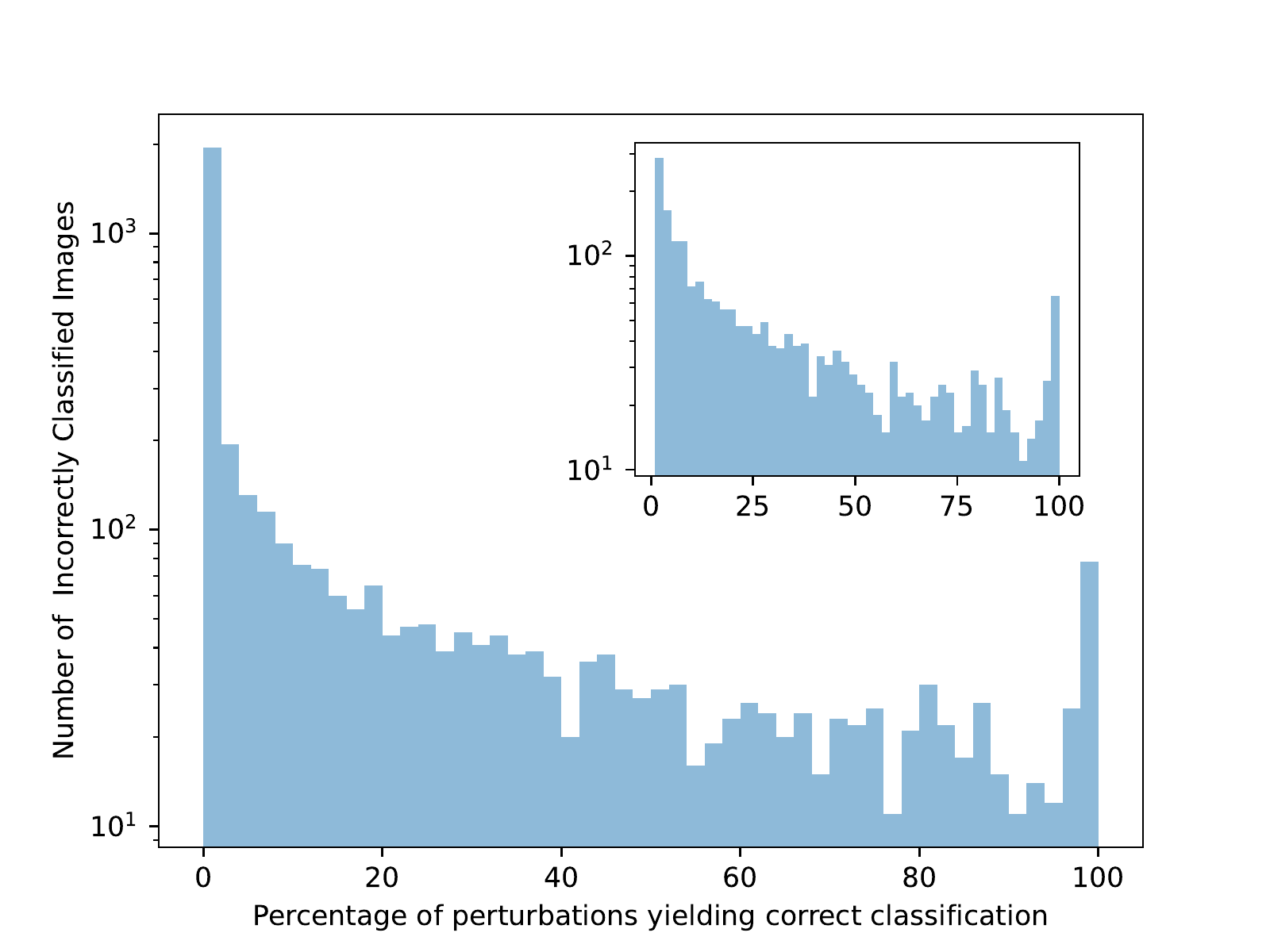}
        \caption{$p=0.3$, train, m-PRL}
    \end{subfigure}

    \begin{subfigure}[b]{0.45\linewidth}
        \centering
        \includegraphics[width=\linewidth]{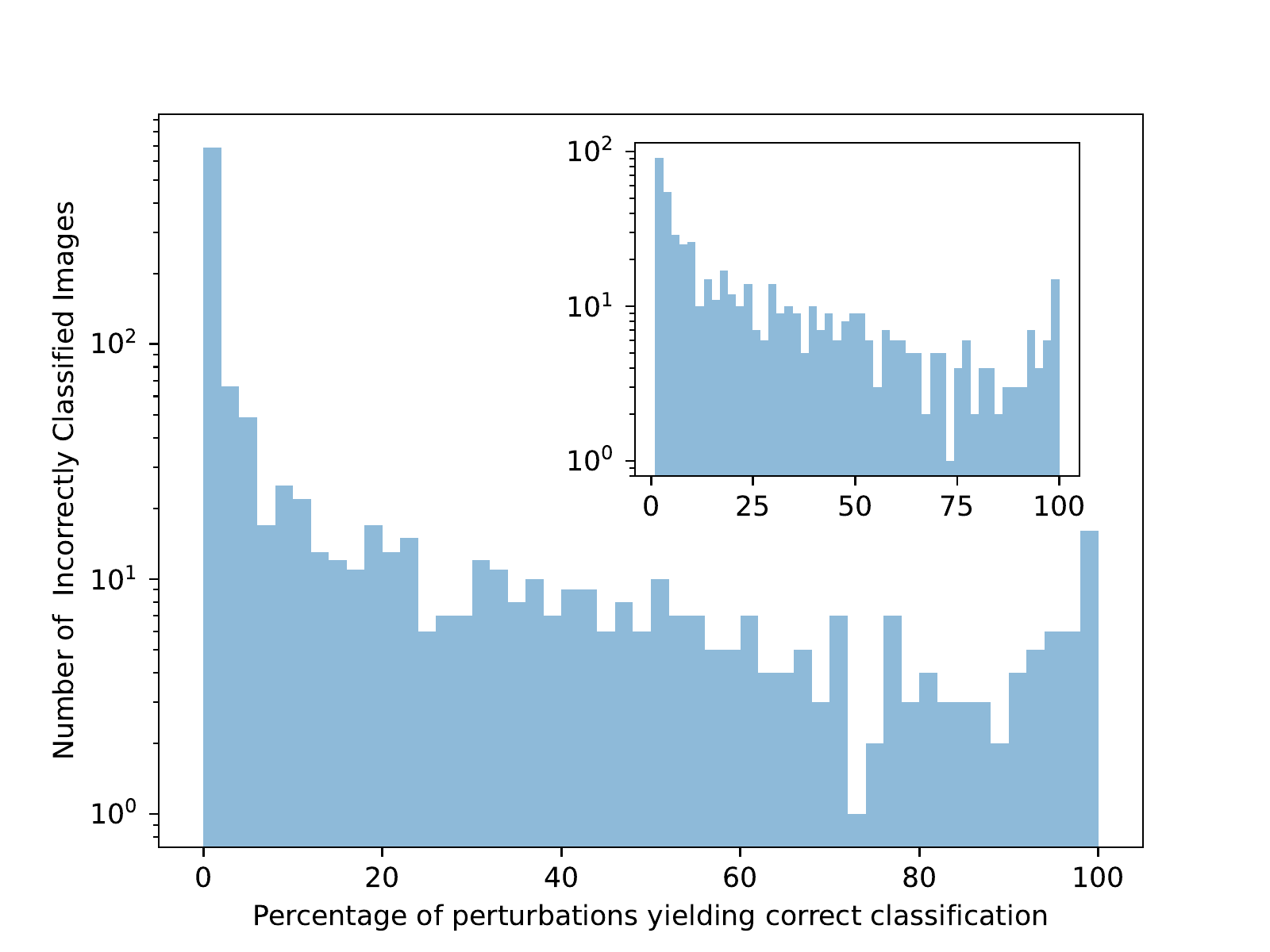}
        \caption{$p=0.3$, test, original PRL}
    \end{subfigure}
    \hfill
    \begin{subfigure}[b]{0.45\linewidth}
        \centering
        \includegraphics[width=\linewidth]{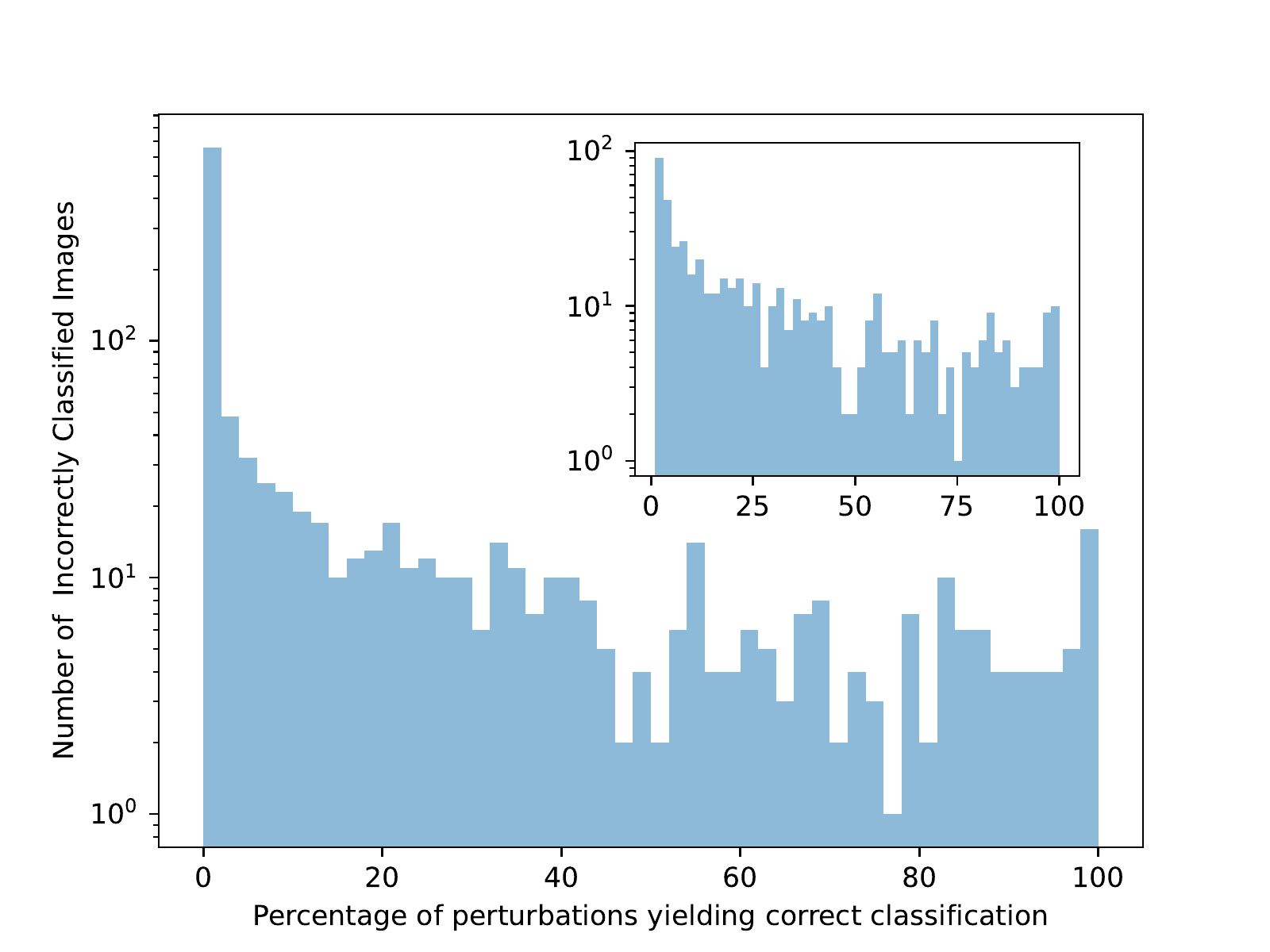}
        \caption{$p=0.3$, test, m-PRL}
    \end{subfigure}
    \caption{Histograms showing the distribution of percentages of correctly classified perturbations among misclassified images for both original and m-PRL with parameter $p=0.3$.}
    \label{fig:cifar_03_histogram}
\end{figure}

\begin{figure}
    \begin{subfigure}[b]{0.45\linewidth}
        \centering
        \includegraphics[width=\linewidth]{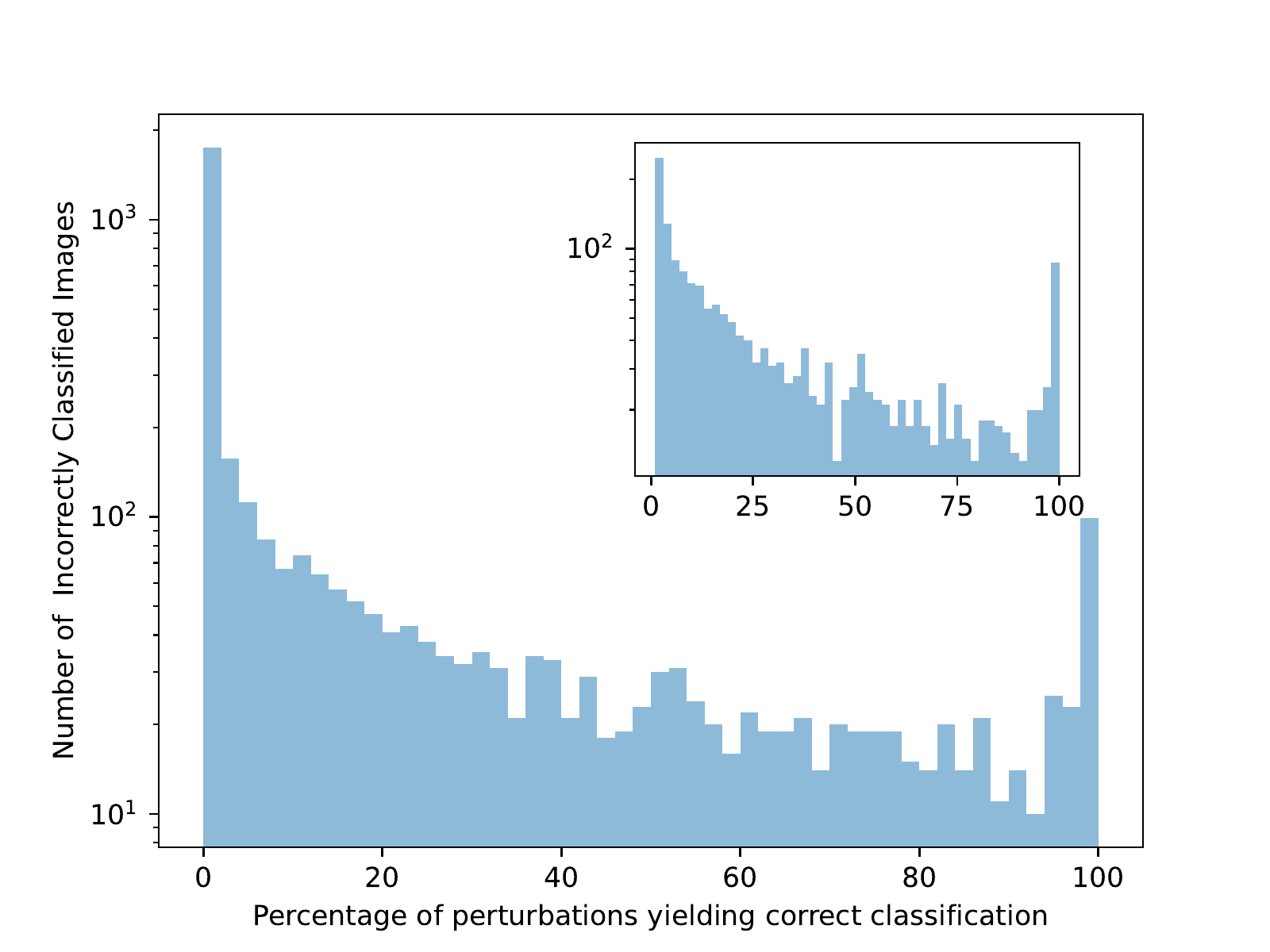}
        \caption{$p=0.5$, train, original PRL}
    \end{subfigure}
    \hfill
    \begin{subfigure}[b]{0.45\linewidth}
        \centering
        \includegraphics[width=\linewidth]{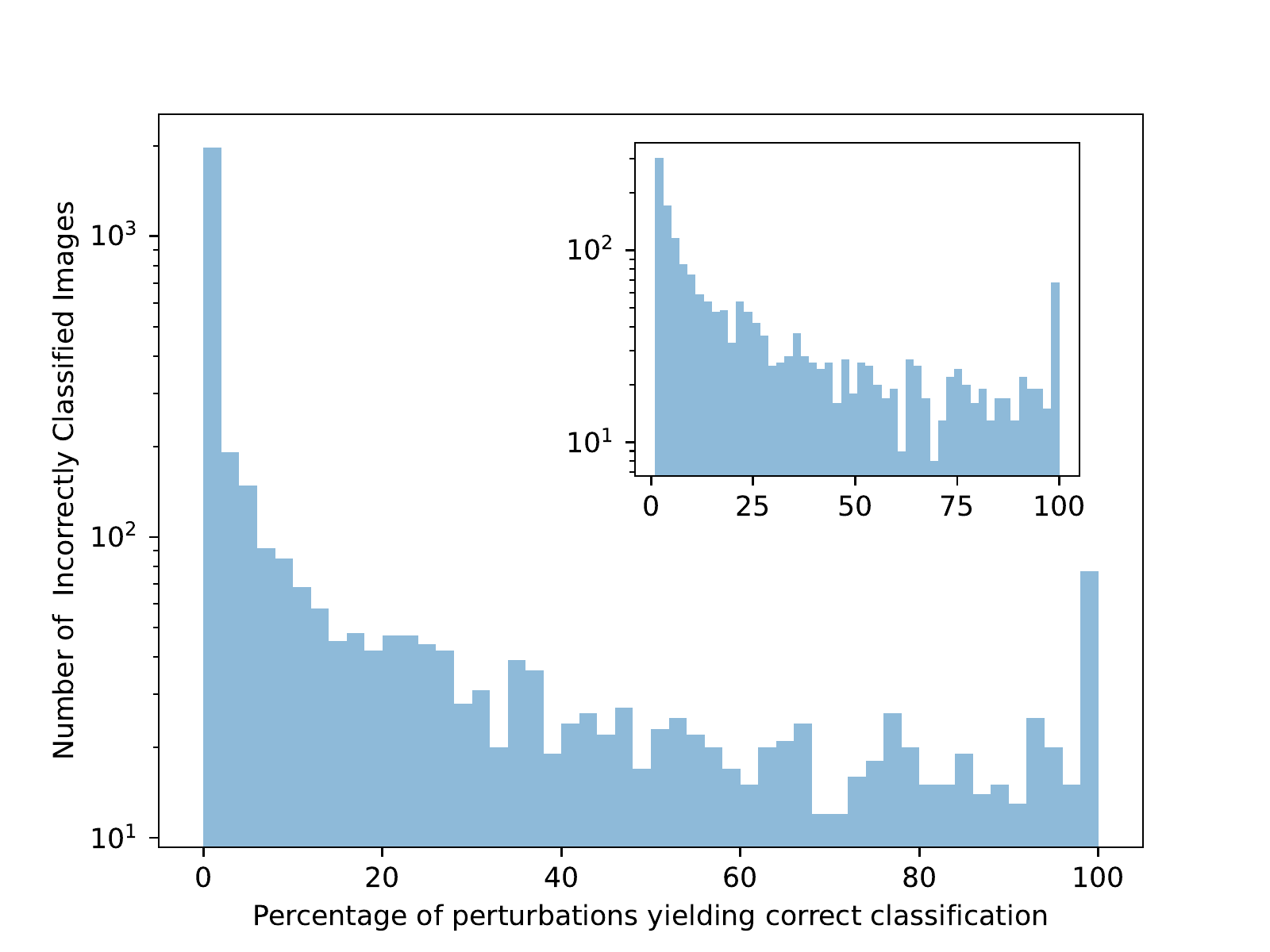}
        \caption{$p=0.5$, train, m-PRL}
    \end{subfigure}

    \begin{subfigure}[b]{0.45\linewidth}
        \centering
        \includegraphics[width=\linewidth]{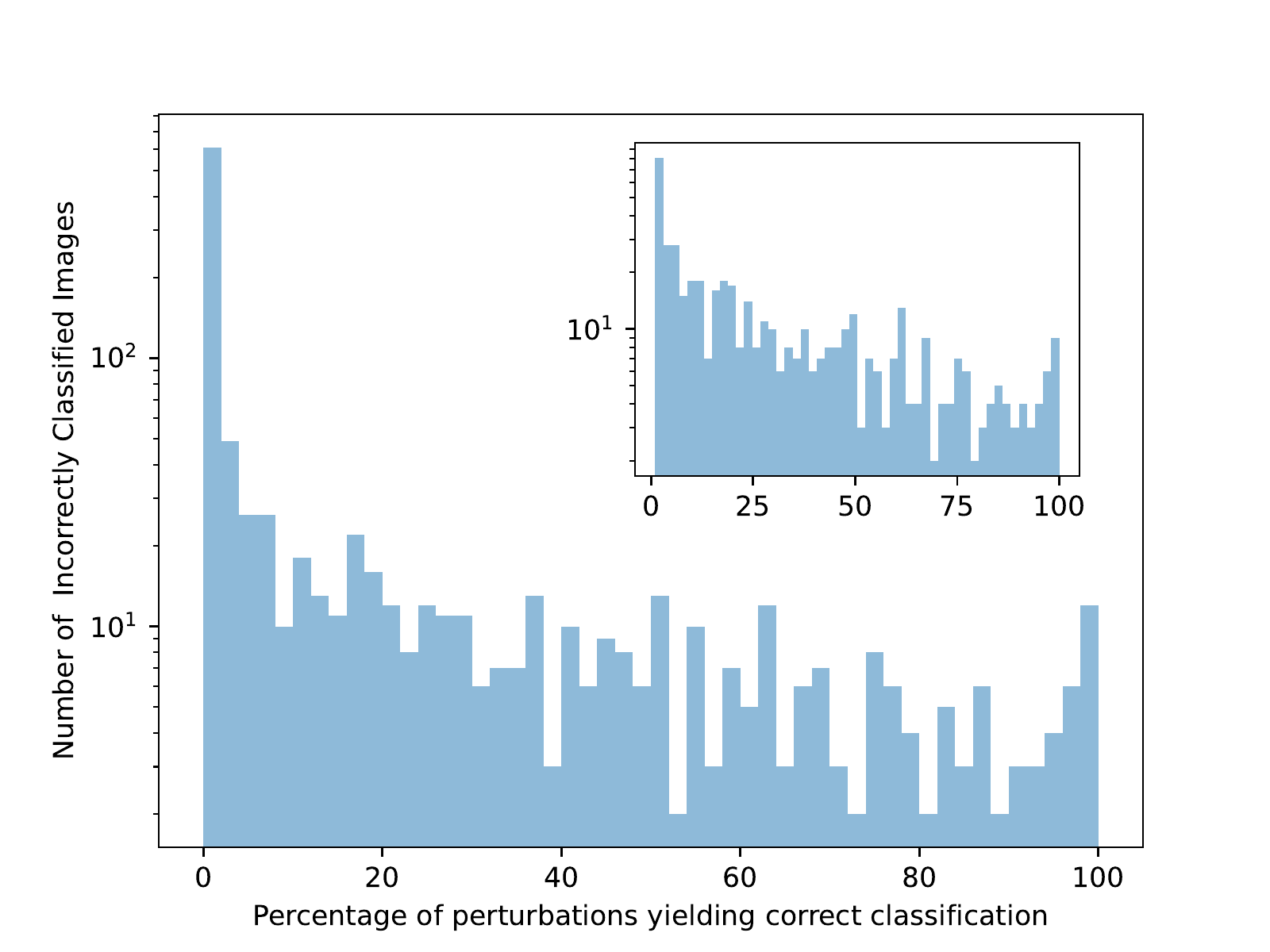}
        \caption{$p=0.5$, test, original PRL}
    \end{subfigure}
    \hfill
    \begin{subfigure}[b]{0.45\linewidth}
        \centering
        \includegraphics[width=\linewidth]{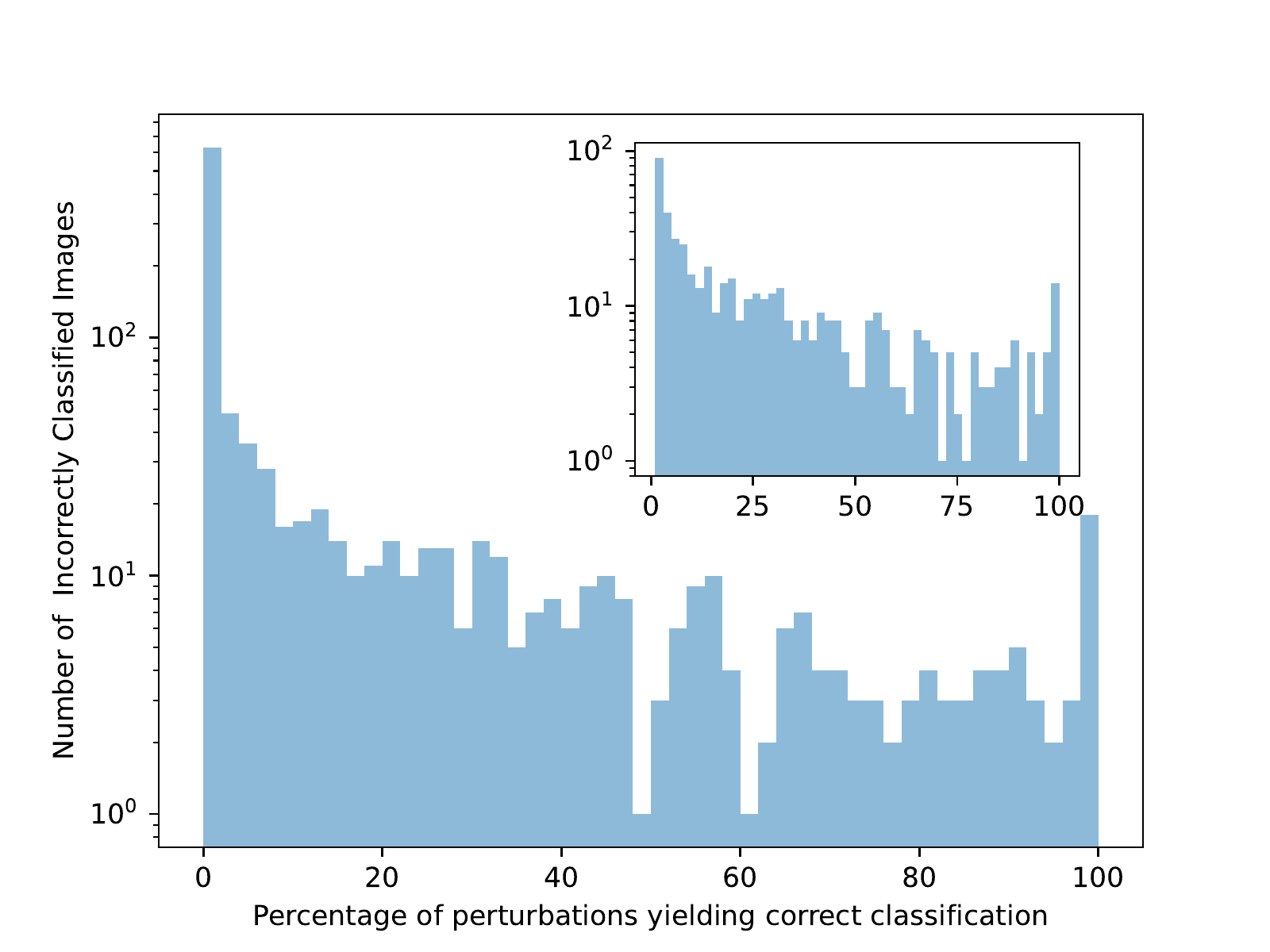}
        \caption{$p=0.5$, test, m-PRL}
    \end{subfigure}
    \caption{Histograms showing the distribution of percentages of correctly classified perturbations among misclassified images for both original and m-PRL with parameter $p=0.5$.}
    \label{fig:cifar_05_histogram}
\end{figure}

\clearpage

\end{document}